\declaretheorem{theorem}
\newtheorem{Lm}{Lemma}
\newcommand{\OO}{\mathcal{O}}
\newcommand{\E}{\mathbb{E}}
\newcommand{\R}{\mathbb{R}}
\newcommand{\Var}{\texttt{Var}}
\DeclarePairedDelimiter\ceil{\lceil}{\rceil}
\DeclarePairedDelimiter\floor{\lfloor}{\rfloor}
\newif\ifsubmit
\newcommand{\reviewer}[3]{
  \expandafter\newcommand\csname #1\endcsname[1]{
    \ifsubmit
      \textcolor{#3}{}
    \else
      \textcolor{#3}{[#2: ##1]}
    \fi
  }
}
\definecolor{neonpurple}{rgb}{0.3,0,1}
\title[Sample Paper]{Efficient Task-Specific Data Valuation for Nearest Neighbor Algorithms}
\author[AMS]{Ruoxi Jia}
\address{University of California, Berkeley}
\email{ruoxijia@berkeley.edu} 
\author[AMS]{David Dao}
\address{ETH Zurich}
\email{dwddao@gmail.com} 
\author[AMS]{Boxin Wang}
\address{Zhejiang University}
\email{boxin.wang@outlook.com} 
\author[AMS]{Frances Ann Hubis}
\address{ETH Zurich}
\email{hubisf@student.ethz.ch} 
\author[AMS]{Nezihe Merve Gurel}
\address{ETH Zurich}
\email{nezihe.guerel@inf.ethz.ch} 
\author[AMS]{Bo Li}
\address{University of Illinois at Urbana–Champaign}
\email{lxbosky@gmail.com} 
\author[AMS]{Ce Zhang}
\address{ETH Zurich}
\email{ce.zhang@inf.ethz.ch} 
\author[AMS]{Costas J. Spanos}
\address{University of California, Berkeley}
\email{spanos@berkeley.edu} 
\author[AMS]{Dawn Song}
\address{University of California, Berkeley}
\email{dawnsong@gmail.com}
\begin{document}

\begin{abstract}
Given a data set $\mathcal{D}$ containing millions of data points and a data consumer who is willing to pay for \$$X$ to train a machine learning (ML) model over $\mathcal{D}$, {\em how should we distribute this \$$X$ to each data point to reflect its ``value''?} In this paper, we define the ``relative value of data'' via the Shapley value, as it uniquely possesses properties with appealing real-world interpretations, such as fairness, rationality and decentralizability. For general, bounded utility functions, the Shapley value is known to be challenging to compute: to get Shapley values for all $N$ data points, it requires $O(2^N)$ model evaluations for exact computation and $O(N\log N)$ for $(\epsilon, \delta)$-approximation. 

In this paper, we focus on one popular family of ML models relying on $K$-nearest neighbors ($K$NN). The most surprising result is that for unweighted $K$NN classifiers and regressors, the Shapley value of all $N$ data points can be computed, {\em exactly}, in $O(N\log N)$ time -- an exponential improvement on computational complexity! Moreover, for $(\epsilon, \delta)$-approximation, we are able to develop an algorithm based on Locality Sensitive Hashing (LSH) with only {\em sublinear} complexity $O(N^{h(\epsilon,K)}\log N)$ when $\epsilon$ is not too small and $K$ is not too large. We empirically evaluate our algorithms on up to $10$ million data points and even our {\em exact} algorithm is up to three orders of magnitude faster than the baseline approximation algorithm. The LSH-based approximation algorithm can accelerate the value calculation process even further.

We then extend our algorithms to other scenarios such as (1) weighed $K$NN classifiers, (2) different data points are clustered by different {\em data curators}, and (3) there are {\em data analysts} providing computation who also requires proper valuation. {\em Some} of these extensions, although also being improved exponentially, are less practical for exact computation (e.g., $O(N^K)$ complexity for weighted $K$NN). We thus propose a Monte Carlo approximation algorithm, which is $O(N (\log N)^2 / (\log K)^2)$ times more efficient than the baseline approximation algorithm. 

\end{abstract}

\maketitle

\clearpage
\tableofcontents
\clearpage

\section{Introduction}
``{\em Data is the new oil}'' --- large-scale, high-quality datasets are an enabler for business and scientific discovery and recent years have witnessed the commoditization of
data. In fact, there are not only marketplaces providing
access to data, e.g., IOTA~\cite{IOTA}, DAWEX~\cite{DAWEX}, Xignite~\cite{xignite}, but also 
marketplaces charging for running (relational)
queries over the data, e.g., Google BigQuery~\cite{BIGQUERY}. Many researchers start to envision
marketplaces for ML models~\cite{chen2018model}.

Data commoditization is highly likely to continue
and not
surprisingly, it starts to attract
interests from the database community.
One series of seminal work is conducted by Koutris et al.~\cite{koutris2015query,koutris2013toward}
who systematically studied the theory and practice
of ``query pricing,'' the problem of attaching
value to running relational queries over data.
Recently, Chen et al.~\cite{chen2018model,chen2017model}
discussed ``model pricing'', the problem of valuing ML models. 
This paper is
inspired by the prior work on query and model pricing, but focuses on
a different scenario.
In many real-world applications,
the datasets that support queries and ML are often contributed by {\em multiple individuals}. One example is that complex ML tasks such as chatbot training often relies on massive crowdsourcing efforts. A critical challenge for building a data marketplace is thus to allocate the revenue generated from queries and ML models fairly between different data contributors. In this paper, we ask: {\em How can we
attach value to every single data point in relative terms, with
respect to a specific ML model
trained over the whole dataset? }

Apart from being
inspired by recent research, 
this paper is also motivated by 
our current effort in building a data market
based on privacy-preserving machine 
learning~\cite{hynes2018demonstration, dao2018databright} and an ongoing
clinical trial at the Stanford Hospital,
as illustrated in Figure~\ref{fig:overview}.
In this
clinical trial, each patient uploads their
encrypted medical record (one ``data point'') onto a 
blockchain-backed data store. A ``data consumer'',
or ``buyer'', chooses a subset of patients (selected
according to some non-sensitive
information that is not encrypted)
and trains a ML model. The buyer pays
a certain amount of money that will be distributed
back to each patient. In this paper, we
focus on the data valuation problem that is 
abstracted from this real use case and propose
novel, practical algorithms for this problem. 

\begin{figure}[ht!]
\centering
\includegraphics[width=0.9\columnwidth]{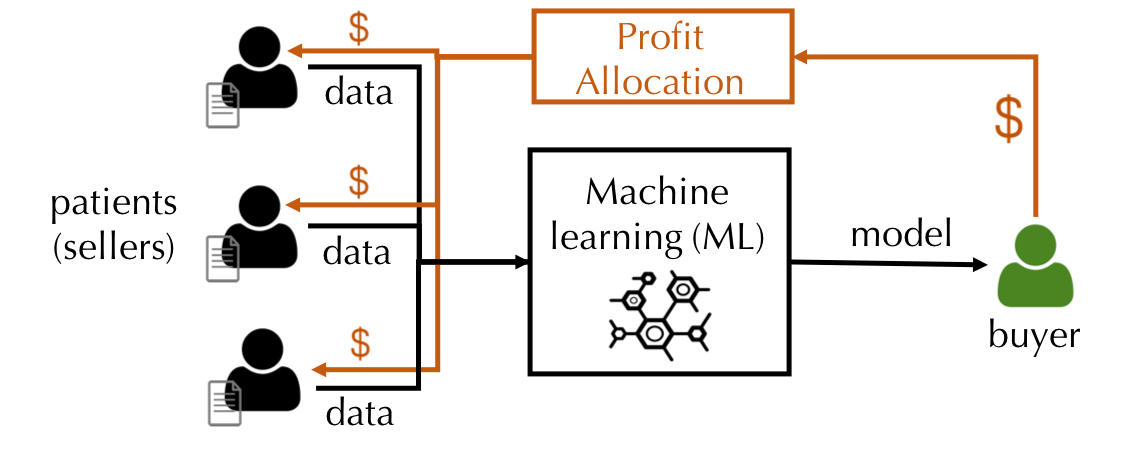}
\caption{Motivating Example of Data Valuation.}
\label{fig:overview}
\end{figure}

Specifically,
we focus on the {\em Shapley value} (SV), arguably one of the most popular way of revenue sharing. It has been applied to various applications, such as power grids~\cite{bremer2013estimating}, supply chains~\cite{bartholdi2005using}, cloud computing~\cite{upadhyaya2012price}, among others. The reason for its wide adoption is that the SV defines a \emph{unique} profit allocation scheme that satisfies a set of appealing properties, such as fairness, rationality, and decentralizability. 
Specifically, let $\mathcal{D} = \{z_1,...,z_N\}$ be $N$ data points and $\nu(\cdot)$ be the ``utility'' of the ML model trained over a subset of the data points; the SV of a given 
data point $z_i$ is
\begin{equation}
    s_i = \frac{1}{N}\sum_{S \subseteq \mathcal{D} \setminus z_i} \frac{1}{ {N - 1 \choose |S|}} \big[\nu(S \cup \{z_i\}) - \nu(S)\big]
\end{equation}

Intuitively, the SV measures the
marginal improvement of utility attributed
to the data point $z_i$, averaged over all possible subsets of data points. Calculating exact SVs requires exponentially many utility evaluations. This poses a radical challenge to using the SV for data valuation--{\em how can we compute the SV efficiently and scale to millions or even billions of data points?} This scale is rare to the previous applications of the SV but is not uncommon for real-world data valuation tasks.
%

To tackle this challenge, we focus on 
a specific family of ML models which restrict the
class of utility functions $\nu(\cdot)$ that
we consider. Specifically, we study
$K$-nearest neighbors ($K$NN) classifiers~\cite{dudani1976distance}, a simple yet popular supervised learning method used in image recognition~\cite{hays2015large}, recommendation systems~\cite{adeniyi2016automated}, healthcare~\cite{li2012using}, etc.
Given a test set,
we focus on a natural utility function, called the {\em $K$NN utility}, which, intuitively, 
measures the boost 
of the likelihood that $K$NN assigns the correct label to each test data point.
When $K = 1$, this utility
is the same as the test accuracy. 
Although some of our techniques also
apply to a broader class of 
utility functions (See Section~\ref{section:extension}),
the $K$NN utility is our
main focus.

\begin{figure}[t!]
\centering
\caption{Time complexity for computing the SV for $K$NN models. $N$ is the total number of training data points. $M$ is the number of data contributors. $h(\epsilon,K)<1$ if $K^* = \max\{1/\epsilon,K\}< C$ for some dataset-dependent constant $C$.}
\label{fig:summary_result}
\begin{tabular}{ccc}
\toprule
                                        & \textbf{Exact} & \textbf{Approximate}                                   \\ \toprule
\textbf{Baseline}                       & $2^NN\log N$          & $\frac{N^2}{\epsilon^2} \log N \log \frac{N}{\delta}$           \\ \midrule
\textbf{Unweighted $K$NN classifier}    & $N\log N$      & $N^{h(\epsilon,K)}\log N\log \frac{K^*}{\delta}$ \\
\textbf{Unweighted $K$NN regression}    & $N\log N$      & ---                                                    \\
\textbf{Weighted $K$NN}                 & $N^K$          & $\frac{N}{\epsilon^2}\log K \log \frac{K}{\delta}$     \\
\textbf{Multiple-data-per-curator $K$NN} & $M^K$          & $\frac{N}{\epsilon^2}\log K \log \frac{K}{\delta}$     \\ \toprule
\end{tabular}
\end{figure}

\noindent
The contribution of this work is a collection of
novel algorithms for efficient data valuation 
within the above scope. Figure~\ref{fig:summary_result}
summarizes our technical results.
Specifically, we made four technical
contributions:

\paragraph{\textbf{Contribution 1: Data Valuation
for $K$NN Classifiers}} The main challenge
of adopting the SV for data valuation
is its computational complexity --- for general,
bounded utility functions, calculating the SV
requires $O(2^N)$ utility evaluations for $N$ data points. Even getting an 
$(\epsilon, \delta)$-approximation (error bounded by $\epsilon$ with probability at least $1-\delta)$
for all data points 
requires
$O(N\log N)$ utility evaluations 
using state-of-the-art methods (See Section~\ref{section:baseline}). For the $K$NN utility, each utility evaluation requires to sort the training data, which has asymptotic complexity $O(N\log N)$.

\noindent
{\bf C1.1 Exact Computation}
We first propose a novel algorithm 
specifically designed for $K$NN classifiers. 
We observe that the
$K$NN utility satisfies what we call the
{\em piecewise utility difference} property:
the
difference in the marginal contribution
of two data points $z_i$ and $z_j$ over
has a ``piecewise form'' (See Section~\ref{section:exact_shapley}):
\begin{align*}
U(S \cup \{z_i\}) - U(S \cup \{z_j\}) = 
\sum_{t=1}^{T} C_{i,j}^{(t)} \mathbbm{1}[S \in \mathcal{S}_t], \forall S \in \mathcal{D} \backslash \{z_i, z_j\}
\end{align*}
where $\mathcal{S}_t \subseteq 2^{\mathcal{D} \backslash \{z_i, z_j\}}$ and $C_{i,j}^{(t)} \in \mathbb{R}$. This combinatorial structure allows us to design a very efficient algorithm
that only has
$O(N\log N)$ complexity
for {\em exact} computation of SVs
on all $N$ data points. This is an exponential
improvement over the $O(2^NN\log N)$ 
baseline!

\noindent
{\bf C1.2 Sublinear Approximation} The exact computation requires to sort the entire training set for each test point, thus becoming time-consuming for large and high-dimensional datasets. 
Moreover, in some applications such as document retrieval, test points could arrive sequentially and the values of each training point needs to get updated and accumulated on the fly, which makes it impossible to complete sorting offline.
Thus, we investigate whether higher efficiency can be achieved by finding approximate SVs instead.
We study the problem of getting $(\epsilon,\delta)$-approximation of the SVs for the $K$NN utility. This happens to be reducible to the problem of answering approximate $\max\{K,1/\epsilon\}$-nearest neighbor queries with probability $1-\delta$.
We designed a novel algorithm by taking advantage of LSH, which 
only requires $O(N^{h(\epsilon,K)} \log N)$ computation where $h(\epsilon,K)$ is dataset-dependent and typically less than $1$ when $\epsilon$ is not too small and $K$ is not too large.

\noindent
{\bf Limitation of LSH} The $h(\epsilon,K)$ term monotonically increases with $\max\{\frac{1}{\epsilon},K\}$. In experiments, we found that the LSH can handle mild error requirements (e.g., $\epsilon=0.1$) but appears to be less efficient than the exact calculation algorithm for stringent error requirements. Moreover, we can extend the exact algorithm to cope with $K$NN regressors and other scenarios detailed in Contribution 2; however, the application of the LSH-based approximation is still confined to the classification case.

To our best knowledge, the above results are one of the very first studies of efficient SV evaluation designed specifically for 
utilities arising from ML applications.

\paragraph{\textbf{Contribution 2: Extensions}} 
Our second contribution is to extend 
our results to different settings beyond
a standard $K$NN classifier and the $K$NN
utility (Section~\ref{section:extension}). Specifically, we studied:

\noindent
{\bf C2.1} Unweighted $K$NN regressors.\\
\noindent
{\bf C2.2} Weighted $K$NN classifiers and regressors.\\
\noindent
{\bf C2.3} One ``data curator'' contributes multiple
data points {\em and} has the freedom to delete all
data points at the same time.\\
\noindent
 {\bf C2.4} One ``data analyst'' provides ML analytics and 
the system attaches value to both the analyst and
data curators.\\

The connection between different settings are illustrated in Figure~\ref{fig:connection}, where each vertical layer represents a different slicing to the data valuation problem. In some of these scenarios, we successfully designed
algorithms that are as efficient as the one
for $K$NN classifiers. In some other cases, including weigthed $K$NN and the multiple-data-per-curator setup, the exact computation algorithm is less practical
although being improved exponentially. 


\begin{figure}[t!]
\centering
\includegraphics[width=0.9\columnwidth]{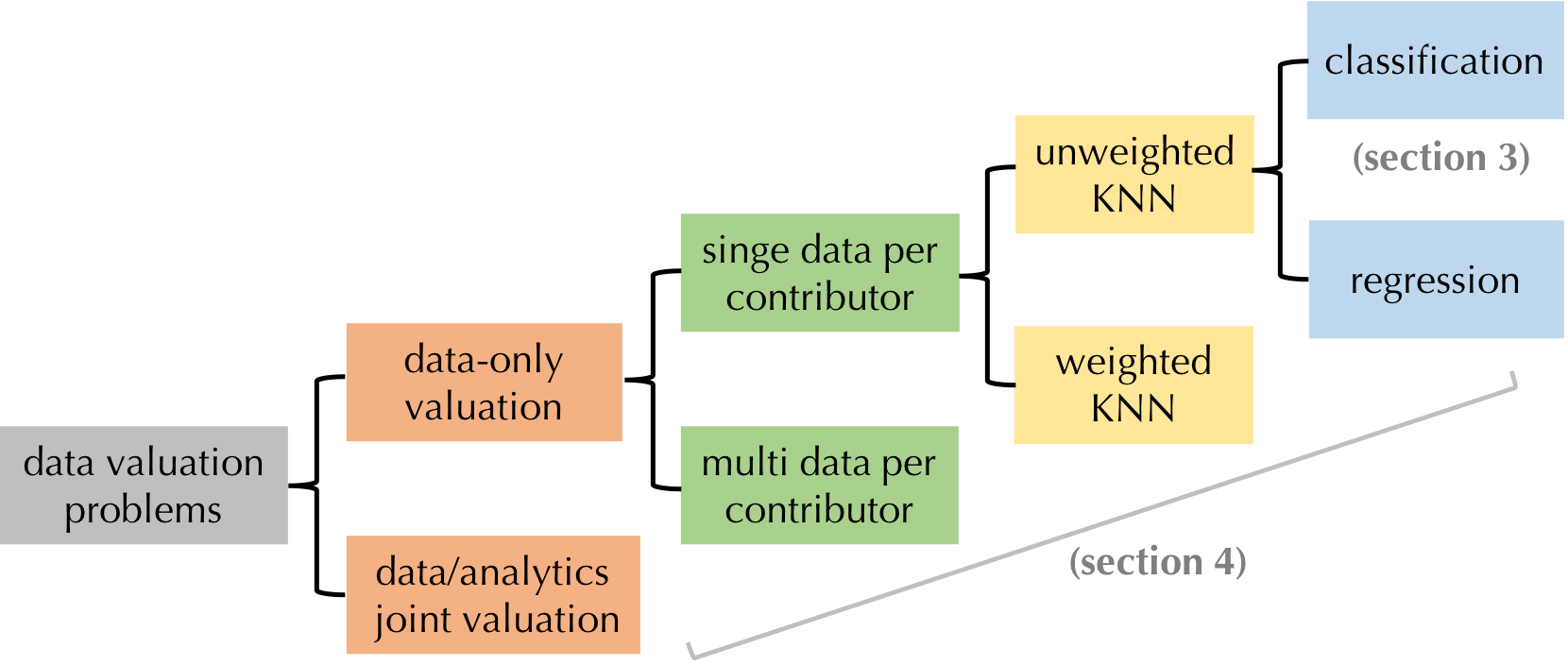}
\caption{Classification of data valuation problems.}
\label{fig:connection}
\end{figure}





\paragraph{\textbf{Contribution 3: Improved Monte Carlo Approximation for $K$NN}} To further improve the efficiency in the less efficient cases, we strengthen the sample complexity bound of the state-of-the-art approximation algorithm,
achieving an $\OO(N \log^2 N / \log^2 K)$ complexity improvement over the state-of-the-art. Our algorithm requires in total $\OO(N/\epsilon^2 \log^2 K)$ computation and is
often practical for reasonable $\epsilon$.




\paragraph{\textbf{Contribution 4: Implementation and Evaluation}}
We implement our 
algorithms and evaluate them 
on datasets up to ten million data points. We observe that our exact SV calculation algorithm can provide up to
three orders of magnitude 
speed-up over the
state-of-the-art Monte Carlo approximation
approach. With the LSH-based approximation method, we can accelerate the SV calculation even further by allowing approximation errors. The actual performance improvement of the LSH-based method over the exact algorithm depends the dataset as well as the error requirements. For instance, on a {\em 10M subset}
of the Yahoo Flickr Creative Commons 100M dataset, 
we observe that the LSH-based method can bring another $4.6\times$ speed-up.


Moreover, to our best knowledge,
this work is also one of the first
papers to evaluate data valuation
at scale. We make our datasets 
publicly available and document
our evaluation methodology in details, with the hope to facilitate future
research on data valuation.

\paragraph{\textbf{Relationship with Our Previous Work}}
Unlike this work which focuses on $K$NN, our previous work~\cite{jia2018shapley} considered some generic properties of ML models, such as boundedness of the utility functions, stability of the learning algorithms, etc, and studied their implications for computing the SV. Also, the algorithms presented in our previous work only produce approximation to the SV. When the desired approximation error is small, these algorithms may still incur considerable computational costs, thus not able to scale up to large datasets. In contrast, this paper presents a scalable algorithm that can calculate the exact SV for $K$NN.
The rest of this
paper is organized as follows.
We provide
background information in Section~\ref{section:preliminary},
and present our efficient 
algorithms for $K$NN classifiers
in Section~\ref{section:valuation_classifier}. We discuss
the extensions in Section~\ref{section:extension} and propose a Monte Carlo approximation algorithm in Section~\ref{section:improved_mc}, which significantly boosts the efficiency for the extensions that have less practical exact algorithms. We evaluate our approach in Section~\ref{section:experiment}. We discuss the integration with
real-world applications in
Section~\ref{section:discussion} and present a survey of related work in Section~\ref{section:related_work}. 

\section{Preliminaries}
\label{section:preliminary}
We present the setup of the data marketplace and introduce the framework for data valuation based on the SV. We then discuss a baseline algorithm to compute the SV.

\subsection{Data Valuation based on the SV}
We consider two types of agents that interact in a data marketplace: the sellers (or data curators) and the buyer. Sellers provide training data instances, each of which is a pair of a feature vector and the corresponding label. The buyer is interested in analyzing the training dataset aggregated from various sellers and producing an ML model, which can predict the labels for unseen features. The buyer pays a certain amount of money which depends on the utility of the ML model. Our goal is to distribute the payment fairly between the sellers.
A natural way to tackle the question of revenue allocation is to view ML as a cooperative game and model each seller as a player. This game-theoretic viewpoint allows us to formally characterize the ``power'' of each seller and in turn determine their deserved share of the revenue. For ease of exposition, we assume that each seller contributes one data instance in the training set; later in Section~\ref{section:extension}, we will discuss the extension to the case where a seller contributes multiple data instances. 

Cooperative game theory studies the behaviors of coalitions formed by game players. Formally, a cooperative game is defined by a pair $(I,\nu)$, where $I=\{1,\ldots,N\}$ denotes the set of all players and $\nu:2^N\rightarrow \mathbb{R}$ is the utility function, which maps each possible coalition to a real number that describes the utility of a coalition, i.e., how much collective payoff a set of players can gain by forming the coalition.
One of the fundamental questions in cooperative game theory is to characterize how important each player is to the overall cooperation. The SV~\cite{shapley1953value} is a classic method to distribute the total gains generated by the coalition of all players. The SV of player $i$ with respect to the utility function $\nu$ is defined as the average marginal contribution of $i$ to coalition $S$ over all $S\subseteq I\setminus \{i\}$: 
\begin{align}
\label{eqn:shapley_definition_no_order}
s(\nu,i) = \frac{1}{N} \sum_{S\subseteq I\setminus\{i\}} \frac{1}{{N-1 \choose |S|}}
\big[\nu(S\cup \{i\})-\nu(S)\big]
\end{align}
We suppress the dependency on $\nu$ when the utility is self-evident and use $s_i$ to represent the value allocated to player $i$.

The formula in (\ref{eqn:shapley_definition_no_order}) can also be stated in the equivalent form: 
\begin{align}
\label{eqn:shapley_definition_order}
    s_i = 
\frac{1}{N!}\sum_{\pi \in \Pi(I)}\big[ \nu(P_i^\pi\cup \{i\}) - \nu(P_i^\pi)\big]
\end{align}
where $\pi \in \Pi(I)$ is a permutation of players and $P_i^\pi$ is the set of players which precede player $i$ in $\pi$. Intuitively, imagine all players join a coalition in a random order, and that every player $i$ who has joined receives the marginal contribution that his participation would bring to those already in the coalition. To
calculate $s_i$, we average these contributions over all the possible orders.  

Transforming these game theory concepts to data valuation, one can think of the players as training data instances and the utility function $\nu(S)$ as a performance measure of the model trained on the set of training data $S$. The SV of each training point thus measures its importance to learning a performant ML model. The following desirable properties that the SV \textit{uniquely} possesses motivate us to adopt it for data valuation.
\begin{enumerate}[leftmargin=*,label=\roman*]
    \item {\bf Group Rationality}: The value of the entire training dataset is completely distributed among all sellers, i.e., $\nu(I) = \sum_{i\in I} s_i$.
    
    \item {\bf Fairness}: (1) Two sellers who are identical with respect to what they contribute to a dataset's utility should have the same value. That is, if seller $i$ and $j$ are equivalent in the sense that $\nu(S\cup \{i\}) = \nu(S\cup \{j\}),\forall S\subseteq I\setminus \{i,j\}$, then $s_i=s_j$. (2) Sellers with zero marginal contributions to all subsets of the dataset receive zero payoff, i.e., $s_i=0$ if $\nu(S\cup \{i\})=0$ for all $S\subseteq I\setminus\{i\}$.

    \item {\bf Additivity}: The values under multiple utilities sum up to the value under a utility that is the sum of all these utilities: $s(\nu_1,i) + s(\nu_2,i) = s(\nu_1+\nu_2,i)$ for $i\in I$.
\end{enumerate}

The \emph{group rationality} property states that any rational group of sellers would expect to distribute the full yield of their coalition. The \emph{fairness} property requires that the names of the sellers play no role in determining the value, which should be sensitive only to how the utility function responds to the presence of a seller. The \emph{additivity} property facilitates efficient value calculation when the ML model is used for multiple applications, each of which is associated with a specific utility function. With additivity, one can decompose a given utility function into an arbitrary sum of utility functions and compute value shares separately, resulting in transparency and decentralizability. The fact that the SV is the only value division scheme that meets these desirable criteria, combined with its flexibility to support different utility functions, leads us to employ the SV to attribute the total gains generated from a dataset to each seller. 

In addition to its theoretical soundness, our previous work~\cite{jia2018shapley} empirically demonstrated that the SV also coincides with people's intuition of data value. For instance, noisy images tend to have lower SVs than the high-fidelity ones; the training data whose distribution is closer to the test data distribution tends to have higher SVs. These empirical results further back up the use of the SV for data valuation. For more details, we refer the readers to~\cite{jia2018shapley}.

\subsection{A Baseline Algorithm}
\label{section:baseline}
One challenge of applying SV is its
computational complexity.
Evaluating the exact SV using Eq.~(\ref{eqn:shapley_definition_no_order}) involves computing the marginal utility of every user to every coalition, which is $\mathcal{O}(2^N)$. Such exponential computation is clearly impractical for valuating a large number of training points. Even worse, in many ML tasks, evaluating the utility function \textit{per se} (e.g., testing accuracy) is computationally expensive as it requires training a ML model. For large datasets, the only feasible approach currently in the literature is Monte Carlo (MC) sampling~\cite{maleki2015addressing}. In this paper, we will use it as a baseline for evaluation. 

The central idea behind the baseline algorithm is to regard the SV definition in (\ref{eqn:shapley_definition_order}) as the expectation of a training instance's marginal contribution over a random permutation and then use the sample mean to approximate it. More specifically, let $\pi$ be a random permutation of $I$ and each permutation has a probability of $1/N!$. Consider the random variable $\phi_i = \nu(P_i^{\pi}\cup\{i\}) - \nu(P_i^{\pi})$. By (\ref{eqn:shapley_definition_order}), the SV $s_i$ is equal to $\E[\phi_i]$. Thus, 
\begin{align}
\label{eqn:permutation_estimator}
    \hat{s}_i = \frac{1}{T}\sum_{t=1}^T \nu(P_i^{\pi_t}\cup\{i\})-\nu(P_i^{\pi_t})
\end{align}
is a consistent estimator of $s_i$, where $\pi_t$ be $t$th sample permutation uniformly drawn from all possible permutations $\Pi(I)$. 

We say that $\hat{s}\in \R^N$ is an $(\epsilon,\delta)$-approximation to the true SV $s=[s_1,\cdots,s_N]^T\in \R^N$ if $P[\max_i |\hat{s}_i-s_i| \leq \epsilon]\geq 1-\delta$. Let $r$ be the range of utility differences $\phi_i$. By applying the Hoeffding's inequality, \cite{maleki2013bounding} shows that for general, bounded utility functions, the number of permutations $T$ needed to achieve an $(\epsilon,\delta)$-approximation is $\frac{r^2}{2\epsilon^2}\log \frac{2N}{\delta}$.
For each permutation, the baseline algorithm evaluates the utility function for $N$ times in order to compute the SV for $N$ training instances; therefore, the total utility evaluations involved in the baseline approach is $\OO(N\log N)$. In general, evaluating $\nu(S)$ in the ML context requires to re-train the model on the subset $S$ of the training data. Therefore, despite its improvements over the exact SV calculation, the baseline algorithm is not efficient for large datasets. 

Take the $K$NN classifier as an example and assume that $\nu(\cdot)$ represents the testing accuracy of the classifier. Then, evaluating $\nu(S)$ needs to sort the training data in $S$ according to their distances to the test point, which has $\OO(|S|\log |S|)$ complexity. Since on average $|S|=N/2$, the asymptotic complexity of calculating the SV for a $K$NN classifier via the baseline algorithm is $\OO(N^2\log^2 N)$, which is prohibitive for large-scale datasets. In the sequel, we will show that it is indeed possible to develop much more efficient algorithms to compute the SV by leveraging the locality of $K$NN models.

\section{Valuing Data for KNN Classifiers}
\label{section:valuation_classifier}

In this section, we present an algorithm that can calculate the exact SV for $K$NN classifiers in quasi-linear time. Further, we exhibit an approximate algorithm based on LSH that could achieve sublinear complexity.

\subsection{Exact SV Calculation}
\label{section:exact_shapley}
$K$NN algorithms are popular supervised learning methods, widely adopted in a multitude of applications such as computer vision, information retrieval, etc. 
Suppose the dataset $D$ consisting of pairs $(x_1,y_1)$, $(x_2,y_2)$, $\ldots$, $(x_N,y_N)$ taking values in $\mathcal{X}\times \mathcal{Y}$, where $\mathcal{X}$ is the feature space and $\mathcal{Y}$ is the label space. Depending on whether the nearest neighbor algorithm is used for classification or regression, $\mathcal{Y}$ is either discrete or continuous. The training phase of $K$NN consists only of storing the features and labels in $D$. The testing phase is aimed at finding the label for a given query (or test) feature. This is done by searching for the $K$ training features most similar to the query feature and assigning a label to the query according to the labels of its $K$ nearest neighbors. 
Given a single testing point $x_\text{test}$ with the label $y_\text{test}$, the simplest, unweighted version of a $K$NN classifier first finds the top-$K$ training points $(x_{\alpha_1},\cdots,x_{\alpha_K})$ that are most similar to $x_\text{test}$ and outputs the probability of $x_\text{test}$ taking the label $y_\text{test}$ as $P[x_\text{test} \rightarrow y_\text{test}]=\frac{1}{K}\sum_{k = 1}^{K} \mathbbm{1}[y_{\alpha_k} = y_\text{test}]$, where $\alpha_k$ is the index of the $k$th nearest neighbor.

One natural way to define the utility of a $K$NN classifier
is by the likelihood of the right label:
\begin{align}
\label{eqn:utility_classification_unweighted}
     \nu(S) =\frac{1}{K} \sum_{k=1}^{\min\{K,|S|\}} \mathbbm{1}[y_{\alpha_k(S)} = y_\text{test}]
\end{align}
where $\alpha_k(S)$ represents the index of the training feature that is $k$th closest to $x_\text{test}$ among the training examples in $S$. Specifically, $\alpha_k(I)$ is abbreviated to $\alpha_k$.

Using this utility function, we can derive an efficient, but exact way of computing the SV. 
\begin{theorem}
\label{thm:KNN_unweighted_class}
Consider the utility function in (\ref{eqn:utility_classification_unweighted}). Then, the SV of each training point can be calculated recursively as follows:
\begin{align}
\label{eqn:KNN_unweighted_class}
&s_{\alpha_N}=\frac{\mathbbm{1}[y_{\alpha_{N}} = y_\text{test}]}{N}
\\
& s_{\alpha_i} = s_{\alpha_{i+1}}\!\! +  \frac{\mathbbm{1}[y_{\alpha_i} = y_\text{test}] - \mathbbm{1}[y_{\alpha_{i+1}} = y_\text{test}]}{K}
    \frac{\min\{K, i\}}{i}
\end{align}
\end{theorem}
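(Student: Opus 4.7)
The plan is to exploit the fact that two consecutive training points in the distance-sorted order (namely $\alpha_i$ and $\alpha_{i+1}$) are ``almost interchangeable'' from the perspective of the $K$NN utility, and to chain such pairwise comparisons to propagate the Shapley value from the farthest point inward. First I would handle the base case $s_{\alpha_N}$ directly from the definition in~(\ref{eqn:shapley_definition_no_order}): since $\alpha_N$ is the farthest training point, $\nu(S \cup \{\alpha_N\}) - \nu(S) = \frac{1}{K}\mathbbm{1}[y_{\alpha_N} = y_\text{test}]$ when $|S| < K$ and vanishes otherwise, so the weighted sum over $S \subseteq I \setminus \{\alpha_N\}$ collapses to $\frac{\mathbbm{1}[y_{\alpha_N} = y_\text{test}]}{N}$ after the $\binom{N-1}{s}$ factors cancel.

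For the recursive step I would first establish the general identity
\begin{equation}
s_i - s_j = \frac{1}{N-1}\sum_{S \subseteq I \setminus \{i,j\}} \frac{1}{\binom{N-2}{|S|}}\big[\nu(S \cup \{i\}) - \nu(S \cup \{j\})\big],
\end{equation}
which follows by splitting the sums defining $s_i$ and $s_j$ according to whether the other index lies in $S$, then combining the resulting binomial weights via $\frac{1}{\binom{N-1}{k}} + \frac{1}{\binom{N-1}{k+1}} = \frac{N}{(N-1)\binom{N-2}{k}}$. Specializing to $(i,j) = (\alpha_i, \alpha_{i+1})$, the key structural observation (an instance of the ``piecewise utility difference'' property highlighted in Contribution~1) is that for any $S \subseteq I \setminus \{\alpha_i, \alpha_{i+1}\}$, the set of points in $S$ closer to $x_\text{test}$ than $\alpha_i$ is identical to the set closer than $\alpha_{i+1}$, and both equal $S \cap \{\alpha_1, \ldots, \alpha_{i-1}\}$. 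Consequently the ordered top-$K$ lists for $S \cup \{\alpha_i\}$ and $S \cup \{\alpha_{i+1}\}$ agree position by position except at slot $a+1$, where $a := |S \cap \{\alpha_1, \ldots, \alpha_{i-1}\}|$, so the utility difference equals $\frac{1}{K}(\mathbbm{1}[y_{\alpha_i} = y_\text{test}] - \mathbbm{1}[y_{\alpha_{i+1}} = y_\text{test}])$ when $a \leq K-1$ and is zero otherwise.

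What remains is purely combinatorial: I need to evaluate
\begin{equation}
\sum_{a=0}^{\min(K-1,\,i-1)} \binom{i-1}{a} \sum_{b=0}^{N-i-1} \frac{\binom{N-i-1}{b}}{\binom{N-2}{a+b}},
\end{equation}
obtained by stratifying $S$ according to $a$ and $b := |S \cap \{\alpha_{i+2}, \ldots, \alpha_N\}|$. Substituting $k = a+b$ and regrouping factorials converts the inner sum into a Chu--Vandermonde convolution $\sum_{k=0}^{N-2} \binom{k}{a}\binom{N-2-k}{i-1-a} = \binom{N-1}{i}$, which---remarkably---is independent of $a$; dividing by $\binom{N-2}{i-1}$ leaves $\frac{N-1}{i}$ per stratum, so summing over $a$ contributes exactly $\frac{(N-1)\min(K,i)}{i}$. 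Plugging back into the difference identity gives the claimed recursion. The hardest step, I expect, will be spotting the correct factorial rewriting that exposes the Chu--Vandermonde form: the $a$-dependence in the naive expression looks formidable and only collapses after regrouping; once that identity is in hand the remainder is careful but routine bookkeeping.
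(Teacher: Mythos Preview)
Your proposal is correct and follows essentially the same route as the paper: both arguments establish the pairwise difference identity (the paper states it as Lemma~\ref{lm:shapley_diff}), split $S$ according to how many elements lie among $\{\alpha_1,\dots,\alpha_{i-1}\}$, observe that the utility difference is the constant $\frac{1}{K}(\mathbbm{1}[y_{\alpha_i}=y_\text{test}]-\mathbbm{1}[y_{\alpha_{i+1}}=y_\text{test}])$ on the region $|S_1|<K$, and then reduce to a binomial sum. The only cosmetic difference is the identity used to collapse the sum: the paper swaps the order of summation and invokes $\sum_{j=0}^{v}\binom{u}{m}\binom{v}{j}/\binom{u+v}{m+j}=\frac{u+v+1}{u+1}$ directly, whereas you regroup factorials first and appeal to the Chu--Vandermonde convolution $\sum_{k}\binom{k}{a}\binom{N-2-k}{i-1-a}=\binom{N-1}{i}$; these are equivalent forms of the same identity and both yield $\frac{N-1}{i}$ per stratum.
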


Note that the above result for a single test point can be readily extended to the multiple-test-point case, in which the utility function is defined by
\begin{align}
\label{eqn:knn_utility_multiple_test}
 \nu(S) = \frac{1}{N_\text{test}}\sum_{j=1}^{N_\text{test}}\frac{1}{K}\sum_{k=1}^{\min\{K,|S|\}} \mathbbm{1}[y_{\alpha_k^{(j)}(S)} = y_{\text{test},j}]
\end{align}
where $\alpha_k^{(j)}(S)$ is the index of the $k$th nearest neighbor in $S$ to $x_{\text{test},j}$. By the additivity property, the SV for multiple test points is the average of the SV for every single test point. The pseudo-code for calculating the SV for an unweighted $K$NN classifier is presented in Algorithm~\ref{alg:knn_shapley}.
The computational complexity is only $\mathcal{O}(N \log N N_\text{test})$ for $N$ training
data points and $N_\text{test}$ test data points---this is simply to sort $N_\text{test}$ arrays of $N$ numbers! 

\begin{algorithm}[h!]
\SetAlgoLined
\SetKwInOut{Input}{input}
\SetKwInOut{Output}{output}
\Input{Training data $D=\{(x_i,y_i)\}_{i=1}^N$, test data $D_\text{test}=\{(x_{\text{test},i},y_{\text{test},i})\}_{i=1}^{N_\text{test}}$}
\Output{The SV $\{s_i\}_{i=1}^N$}
\For{$j\gets 1$ \KwTo $N_\text{test}$}{
    $(\alpha_1,...,\alpha_N) \leftarrow$ Indices of training data in an ascending order using $d(\cdot,x_\text{test})$\;
    
    $s_{j,\alpha_N}\leftarrow\frac{\mathbbm{1}[y_{\alpha_N}=y_\text{test}]}{N}$\;
    
     \For{$i\gets N-1$ \KwTo $1$ }{
     $s_{j,\alpha_i} \leftarrow  s_{j,\alpha_{i+1}} + \frac{\mathbbm{1}[y_{\alpha_i} = y_{\text{test},j}]-\mathbbm{1}[y_{\alpha_{i+1}} = y_{\text{test},j}]}{K} \frac{\min\{K,i\}}{i} $\;
     }
}
\For{$i\gets 1$ \KwTo $N$}{
$s_i \gets \frac{1}{N_\text{test}}\sum_{j=1}^{N_\text{test}} s_{j,i}$\;
}
 \caption{Exact algorithm for calculating the SV for an unweighted $K$NN classifier.}
 \label{alg:knn_shapley}
\end{algorithm}

The proof of Theorem~\ref{thm:KNN_unweighted_class} relies on the following lemma, which states that the difference in the utility gain induced by either point $i$ or point $j$ translates linearly to the difference in the respective SVs.
\begin{Lm}
\label{lm:shapley_diff}
For any $i,j\in I$, the difference in SVs between $i$ and $j$ is 
\begin{align}
\label{eqn:shapley_diff}
    s_i-s_j = \frac{1}{N-1}  \sum_{S\subseteq I\setminus\{i,j\}} \frac{\nu(S\cup\{i\}) - \nu(S\cup \{j\})}{\binom{N-2}{|S|}}
\end{align}
\end{Lm}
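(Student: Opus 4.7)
The plan is to start from the definition of the Shapley value and compute $s_i - s_j$ by carefully splitting the subsets of $I \setminus \{i\}$ (and of $I \setminus \{j\}$) according to whether they contain the other distinguished player. This separation will allow the terms with no distinguished structure (the lone $\nu(S)$ pieces and the $\nu(S \cup \{i,j\})$ pieces) to cancel pairwise between $s_i$ and $s_j$, leaving only marginal differences of the form $\nu(S \cup \{i\}) - \nu(S \cup \{j\})$ for $S \subseteq I \setminus \{i,j\}$.

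Concretely, I will write
\begin{equation*}
s_i = \frac{1}{N}\sum_{S \subseteq I \setminus \{i,j\}} \frac{\nu(S \cup \{i\}) - \nu(S)}{\binom{N-1}{|S|}} + \frac{1}{N}\sum_{T \subseteq I \setminus \{i,j\}} \frac{\nu(T \cup \{i,j\}) - \nu(T \cup \{j\})}{\binom{N-1}{|T|+1}},
\end{equation*}
where in the second sum I have substituted $S = T \cup \{j\}$. An analogous decomposition of $s_j$ is obtained by swapping $i \leftrightarrow j$. Subtracting, the $\nu(S)$ terms cancel in the first pair of sums and the $\nu(T\cup\{i,j\})$ terms cancel in the second pair, leaving
\begin{equation*}
s_i - s_j = \frac{1}{N}\sum_{S \subseteq I \setminus \{i,j\}} \left[\frac{1}{\binom{N-1}{|S|}} + \frac{1}{\binom{N-1}{|S|+1}}\right]\bigl(\nu(S \cup \{i\}) - \nu(S \cup \{j\})\bigr).
\end{equation*}

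The remaining step is a purely combinatorial identity: writing $s = |S|$, I need to verify
\begin{equation*}
\frac{1}{N}\left[\frac{1}{\binom{N-1}{s}} + \frac{1}{\binom{N-1}{s+1}}\right] = \frac{1}{(N-1)\binom{N-2}{s}}.
\end{equation*}
Expanding each binomial as factorials, the bracket equals $\tfrac{s!(N-2-s)!}{(N-1)!}\bigl[(N-1-s) + (s+1)\bigr] = \tfrac{N\,s!(N-2-s)!}{(N-1)!}$, and dividing by $N$ gives $\tfrac{s!(N-2-s)!}{(N-1)!} = \tfrac{1}{(N-1)\binom{N-2}{s}}$, as required. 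Substituting this into the previous display yields exactly \eqref{eqn:shapley_diff}.

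The proof is essentially bookkeeping; there is no real obstacle beyond being careful that the reindexing $S = T \cup \{j\}$ in $s_i$ (and $S = T \cup \{i\}$ in $s_j$) shifts the binomial coefficient from $\binom{N-1}{|S|}$ to $\binom{N-1}{|T|+1}$, and that the sign of the marginal differences after cancellation agrees with what the claim demands. The combinatorial identity above is the only quantitative ingredient, and it follows from one line of factorial manipulation.
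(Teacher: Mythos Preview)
Your proof is correct and follows essentially the same approach as the paper: both arguments split the Shapley sums for $s_i$ and $s_j$ according to whether the other distinguished player lies in $S$, reindex the ``containing'' pieces by removing that player, observe the resulting cancellations, and then combine the two surviving coefficients into $\tfrac{1}{(N-1)\binom{N-2}{|S|}}$. The only difference is cosmetic---the paper carries the weights as factorials $\tfrac{|S|!(N-|S|-1)!}{N!}$ rather than reciprocals of binomials---but the decomposition, cancellations, and final simplification are identical.
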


\begin{proof}[Proof of Theorem \ref{thm:KNN_unweighted_class}]
W.l.o.g., we
assume that $x_1,\ldots, x_n$ are
sorted according to their similarity to $x_\text{test}$, that is, $x_i = x_{\alpha_i}$. For any given subset $S\subseteq I\setminus\{i,{i+1}\}$ of size $k$, we split the subset into two disjoint sets $S_1$ and $S_2$ such that $S = S_1 \cup S_2$ and  $|S_1| + |S_2| = |S| = k $. Given two neighboring points with indices $i,{i+1}\in I$, we constrain $S_1$ and $S_2$ to $S_1 \subseteq \{1,...,{i-1}\}$
and $S_2 \subseteq \{{i+2},...,{N}\}$.

Let $s_i$ be the SV of data point $x_i$. By Lemma~\ref{lm:shapley_diff}, we can draw conclusions about the SV difference $s_i - s_{i+1}$ by inspecting the utility difference $\nu(S\cup\{i\}) - \nu(S\cup\{i+1\})$ for any $S\subseteq I\setminus\{i,i+1\}$. We analyze $\nu(S\cup\{i\}) - \nu(S\cup\{i+1\})$ by considering the following cases.

\textbf{(1) $|S_1|\geq K$.} In this case, we know that $i,i+1 > K$ and therefore $\nu(S \cup \{i\}) = \nu(S \cup \{i+1\})= \nu(S)$, hence $\nu(S\cup\{i\}) - \nu(S\cup\{i+1\})=0$. 

\textbf{(2) $|S_1|< K$.} In this case, we know that $i \leq K$ and therefore $\nu(S \cup \{i\}) -\nu(S)$ might be nonzero. Note that including a point $i$ into $S$ can only expel the $K$th nearest neighbor from the original set of $K$ nearest neighbors. Thus, $\nu(S\cup\{i\})-\nu(S) = \frac{1}{K}(\mathbbm{1}[y_{i} = y_\text{test}] - \mathbbm{1}[y_K = y_\text{test}])$. The same hold for the inclusion of point $i+1$: $\nu(S\cup\{i+1\})-\nu(S) = \frac{1}{K}(\mathbbm{1}[y_{i+1} = y_\text{test}] - \mathbbm{1}[y_K = y_\text{test}])$. Combining the two equations, we have 
\begin{align*}
    \nu(S\cup\{i\}) - \nu(S\cup\{i+1\}) =\frac{\mathbbm{1}[y_{i} = y_\text{test}] - \mathbbm{1}[y_{i+1} = y_\text{test}]}{K} 
\end{align*}

Combining the two cases discussed above and applying Lemma~\ref{lm:shapley_diff}, we have
\begin{align}
        &s_i - s_{i+1} \nonumber\\
    =&
    \frac{1}{N-1} \sum_{k=0}^{N-2} \frac{1}{\binom{N-2}{k}}\!\!\!\!\!\!\!\
    \sum \limits_{ \substack{  S_1 \subseteq \{1,...,i-1\}, \\ S_2 \subseteq \{i+2,...,N\}: \\ |S_1| + |S_2|=k, |S_1| < K} }\!\!\!\!\!\!\!\!\!\!\!\!\!\!\!\!\!\! \frac{\mathbbm{1}[y_i = y_\text{test}] - \mathbbm{1}[y_{i+1} = y_\text{test}]}{K}  \nonumber\\
    =&
    \label{eqn:dev_mid}
    \frac{\mathbbm{1}[y_i = y_\text{test}] - \mathbbm{1}[y_{i+1} = y_\text{test}]}{K}
    \times
    \frac{1}{N-1} \sum_{k=0}^{N-2} \frac{1}{\binom{N-2}{k}} 
    \sum_{m = 0}^{\min(K-1,k)} \binom{i-1}{m}  \binom{N-i-1}{k-m}
\end{align}
The sum of binomial coefficients in (\ref{eqn:dev_mid}) can be simplified as follows:
\begin{align}
    & \sum_{k=0}^{N-2} \frac{1}{\binom{N-2}{k}} 
    \sum_{m = 0}^{\min\{K-1,k\}} \binom{i-1}{m}  \binom{N-i-1}{k-m}\\
    & = \sum_{m=0}^{\min\{K-1, i-1\}} \sum_{k'=0}^{N-i-1} \frac{\binom{i-1}{m} \binom{N-i-1}{k'}}   {\binom{N-2}{m+k'}} \\
    & = \frac{ \min\{K, i\} (N-1) }{i}
\end{align}
where the first equality is due to the exchange of the inner and outer summation and the second one is by taking $v= N-i-1$ and $u = i-1$ in the binomial identity $\sum_{j=0}^v \frac{{u\choose i}{v\choose j}}{{u+v\choose i+j}} = \frac{u+v+1}{u+1} $.

Therefore, we have the following recursion
\begin{align}
    s_i - s_{i+1} =
\frac{\mathbbm{1}[y_i = y_\text{test}] - \mathbbm{1}[y_{i+1} = y_\text{test}]}{K}
    \frac{\min\{K, i\}}{i}
\end{align}

Now, we analyze the formula for $s_N$, the starting point of the recursion. Since $x_N$ is farthest to $x_\text{test}$ among all training points, $x_N$ results in non-zero marginal utility only when it is added to the subsets of size smaller than $K$. Hence, $s_N$ can be written as
\begin{align}
    s_N &= \frac{1}{N} \sum_{k=0}^{K-1} \frac{1}{{N-1\choose k}} \sum_{|S|=k, S\subseteq I\setminus\{N\}} \nu(S\cup N) - \nu(S)\\
    &=\frac{1}{N}\sum_{k=0}^{K-1} \frac{1}{{N-1\choose k}} \sum_{|S|=k, S\subseteq I\setminus\{N\}} \frac{\mathbbm{1}[y_N=y_\text{test}]}{K}\\
    &= \frac{\mathbbm{1}[y_N=y_\text{test}]}{N}
\end{align}

\end{proof}


\subsection{LSH-based Approximation}
The exact calculation of the $K$NN SV for a query instance requires to sort the entire training dataset, and has computation complexity $\OO(N_\text{test}(Nd+N\log(N)))$, where $d$ is the feature dimension. Thus, the exact method becomes expensive for large and high-dimensional datasets. We now present a sublinear algorithm to approximate the $K$NN SV for classification tasks.

The key to boosting efficiency is to realize that only $\OO(1/\epsilon)$ nearest neighbors are needed to estimate the $K$NN SV with up to $\epsilon$ error. Therefore, we can avert the need of sorting the entire database for every new query point. 
\begin{theorem}
\label{thm:knn_shapley_approx}
Consider the utility function defined in (\ref{eqn:utility_classification_unweighted}). Consider $\{\hat{s}_i\}_{i=1}^N$ defined recursively by
\begin{align}
\label{eqn:knn_recursion_approx_1}
    &\hat{s}_{\alpha_i} = 0 \quad \quad \quad \text{   if $i\geq K^*$}\\
    \label{eqn:knn_recursion_approx_2}
    &\hat{s}_{\alpha_i} = \hat{s}_{\alpha_{i+1}} + \frac{\mathbbm{1}[y_{\alpha_{i}}=y_\text{test}]-\mathbbm{1}[y_{\alpha_{i+1}}=y_\text{test}]}{K} \frac{\min\{K,i\}}{i} \quad\quad\quad\text{   if $i\leq K^*-1$}
\end{align}
where $K^* = \max\{K,\ceil*{1/\epsilon}\}$ for some $\epsilon>0$. Then, $[\hat{s}_{\alpha_1}$,$\ldots$, $\hat{s}_{\alpha_N}]$ is an $(\epsilon,0)$-approximation to the true SV $[s_{\alpha_1}$,$\ldots$, $s_{\alpha_N}]$ and $\hat{s}_i-\hat{s}_{i+1} = s_i-s_{i+1}$ for $i\leq K^*-1$.
\end{theorem}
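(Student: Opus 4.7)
The plan is to exploit the fact that the approximate recursion \eqref{eqn:knn_recursion_approx_2} is \emph{identical} to the exact recursion \eqref{eqn:KNN_unweighted_class}. This makes the second claim, $\hat s_i - \hat s_{i+1} = s_i - s_{i+1}$ for $i \leq K^*-1$, immediate from the definitions. More importantly, it forces the error $\hat s_{\alpha_i} - s_{\alpha_i}$ to be \emph{constant} on the range $i \leq K^*-1$: both sequences pick up exactly the same increment when moving from $i+1$ to $i$, so by a one-line backward induction $\hat s_{\alpha_i} - s_{\alpha_i} = \hat s_{\alpha_{K^*}} - s_{\alpha_{K^*}} = -s_{\alpha_{K^*}}$ for every $i \leq K^*-1$. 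Consequently the entire approximation bound collapses to a single estimate, $|s_{\alpha_{K^*}}| \leq \epsilon$, together with the symmetric bound $|s_{\alpha_i}| \leq \epsilon$ for $i \geq K^*$ (where $\hat s_{\alpha_i} = 0$).

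Both of these estimates will follow from one lemma that I will prove: \emph{for every $i \geq K$, $|s_{\alpha_i}| \leq 1/i$.} To establish it I telescope the exact recursion of Theorem~\ref{thm:KNN_unweighted_class} downward from its anchor $s_{\alpha_N} = \mathbbm{1}[y_{\alpha_N} = y_\text{test}]/N$. Writing $a_j = \mathbbm{1}[y_{\alpha_j} = y_\text{test}] \in \{0,1\}$ and using $\min\{K,j\}/j = K/j$ whenever $j \geq K$, each one-step increment collapses to $s_{\alpha_j} - s_{\alpha_{j+1}} = (a_j - a_{j+1})/j$. An Abel-style reindexing of the resulting telescoping sum then yields the closed form
\begin{align*}
s_{\alpha_i} \;=\; \frac{a_i}{i} \;-\; \sum_{j=i+1}^{N-1} \frac{a_j}{j(j-1)} \;-\; \frac{a_N}{N(N-1)}
\end{align*}
for $K \leq i \leq N-1$ (the case $i = N$ is trivial). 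Since $a_j \in \{0,1\}$, the positive part lies in $[0,\,1/i]$ and the non-positive part is bounded in absolute value by $\sum_{j=i+1}^{N-1}\frac{1}{j(j-1)} + \frac{1}{N(N-1)}$, which via the standard telescoping identity $\sum_{j=i+1}^{N-1}\frac{1}{j(j-1)} = \frac{1}{i} - \frac{1}{N-1}$ simplifies to $\frac{1}{i} - \frac{1}{N}$. Hence $|s_{\alpha_i}| \leq 1/i$.

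The theorem then follows immediately. For $i \geq K^*$, $|\hat s_{\alpha_i} - s_{\alpha_i}| = |s_{\alpha_i}| \leq 1/i \leq 1/K^* \leq \epsilon$, since $K^* \geq \lceil 1/\epsilon\rceil$; for $i \leq K^*-1$, the first paragraph gives $|\hat s_{\alpha_i} - s_{\alpha_i}| = |s_{\alpha_{K^*}}| \leq 1/K^* \leq \epsilon$. The main obstacle I anticipate is the Abel rearrangement producing the closed form above: the rest is bookkeeping about where the two recursions agree and a trivial case split on $a_j \in \{0,1\}$. The rearrangement itself is short, but one must track the boundary contributions at $j = i$ and $j = N$ carefully so that the residual terms combine into the clean $1/(j(j-1))$ kernel that makes the final $1/i$ bound tight.
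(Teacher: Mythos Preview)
Your proposal is correct and follows essentially the same approach as the paper: both arguments observe that the error is constant on $i\le K^*-1$ because the two recursions coincide there, then telescope the exact recursion to the closed form $s_{\alpha_i}=a_i/i-\sum_{j=i+1}^N a_j/(j(j-1))$ for $i\ge K$ and bound it by $1/i$ via the identity $\sum_{j>i}1/(j(j-1))=1/i-1/N$. The one (minor) streamlining you make is that since $K^*\ge K$, every index you actually need to bound satisfies $i\ge K$, so you correctly skip the separate $i\le K-1$ case that the paper treats to get the stronger but unnecessary bound $|s_{\alpha_i}|\le\min(1/i,1/K)$.
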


Theorem~\ref{thm:knn_shapley_approx} indicates that we only need to find $\max\{K,\ceil*{1/\epsilon}\}$($\triangleq K^*$) nearest neighbors to obtain an $(\epsilon,0)$-approximation. Moreover, since $\hat{s}_i-\hat{s}_{i+1} = s_i-s_{i+1}$ for $i\leq K^*-1$, the approximation retains the original value rank for $K^*$ nearest neighbors.

The question on how to efficiently retrieve nearest neighbors to a query in large-scale databases has been studied extensively in the past decade. 
Various techniques, such as the kd-tree~\cite{mount1998ann}, LSH~\cite{datar2004locality}, have been proposed to find approximate nearest neighbors.
Although all of these techniques can potentially help improve the efficiency of the data valuation algorithms for $K$NN, we focus on LSH in this paper, as it was experimentally shown to achieve large speedup over several tree-based data structures~\cite{gionis1999similarity,har2012approximate,datar2004locality}. 
In LSH, every training instance $x$ is converted into codes in each hash table by using a series of hash functions $h_j(x)$, $j=1,\ldots,m$. Each hash function is designed to preserve the relative distance between different training instances; similar instances have the same hashed value with high probability. Various hash functions have been proposed to approximate $K$NN under different distance metrics~\cite{charikar2002similarity,datar2004locality}. 
We will focus on the distance measured in $l_2$ norm; in that case, a commonly used hash function is $h(x)=\floor*{\frac{w^Tx+b}{r}}$, where $w$ is a vector with entries sampled from a $p$-stable distribution, and $b$ is uniformly chosen from the range $[0,r]$. It is shown in~\cite{datar2004locality}: 
\begin{align}
    P[h(x_i)=h(x_\text{test})] = f_h(\|x_i-x_\text{test}\|_2)
\end{align}
where the function $f_h(c) = \int_0^r \frac{1}{c}f_2(\frac{z}{c})(1-\frac{z}{r})dz$ is a monotonically decreasing with $c$. Here, $f_2$ is the probability density function of the absolute value of a $2$-stable random variable.

We now present a theorem which relates the success rate of finding approximate nearest neighbors to the intrinsic property of the dataset and the parameters of LSH.

\begin{theorem}
\label{thm:lsh_complexity}
LSH with $\OO(d\log(N) N^{g(C_K\!)} \log \frac{K}{\delta})$ time complexity, $\OO(Nd+N^{g(C_K\!)\!+\!1} \log \frac{K}{\delta})$ space complexity, and $\OO(N^{g(C_K\!)}$ $\log \frac{K}{\delta})$ hash tables can find the exact $K$ nearest neighbors with probability $1-\delta$, where $g(C_K)=\log f_h(1/C_K)/\log f_h(1)$ is a monotonically decreasing function. $C_K=D_\text{mean}/D_K$, where $D_\text{mean}$ is the expected distance of a random training instance to a query $x_\text{test}$ and $D_K$ is the expected distance between $x_\text{test}$ to its $K$th nearest neighbor denoted by $x_{\alpha_i}(x_\text{test})$, i.e.,
\begin{align}
    &D_\text{mean} = \E_{x,x_\text{test}}[D(x,x_\text{test})]\\
    &D_K = \E_{x_\text{test}}[D(x_{\alpha_i}(x_\text{test}),x_\text{test}]
\end{align}
\end{theorem}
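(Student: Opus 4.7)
The plan is to reduce the claim to the classical Indyk--Motwani LSH analysis, with the two characteristic radii taken to be $D_K$ and $D_\text{mean}$. After rescaling distances so that $D_\text{mean}=1$ (and hence $D_K = 1/C_K$), the monotonicity of $f_h$ noted just before the theorem gives collision probabilities $p_1 := f_h(1/C_K) > p_2 := f_h(1)$ for a true $K$-nearest neighbor versus a ``typical'' training point, and the resulting LSH exponent equals $\rho = \log p_1 / \log p_2 = g(C_K) < 1$.

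I would then set the number of concatenated hash functions per table to $k = \lceil \log_{1/p_2} N \rceil$, so that the expected number of far-point collisions in a single bucket is at most $N p_2^k \le 1$, and set the number of hash tables to $L = \lceil N^{g(C_K)} \ln(K/\delta) \rceil$. For any fixed true $K$-nearest neighbor, the per-table collision probability with the query is at least $p_1^k = N^{-g(C_K)}$, so the probability of failing to collide in all $L$ tables is at most $(1 - N^{-g(C_K)})^L \le e^{-\ln(K/\delta)} = \delta/K$. A union bound over the $K$ true nearest neighbors then yields overall failure probability at most $\delta$.

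Summing the costs reproduces the expressions in the statement: evaluating $kL$ hashes per query costs $O(dkL) = O(d \log(N) N^{g(C_K)} \log(K/\delta))$ time; storing the $L$ hash tables costs $O(NL) = O(N^{g(C_K)+1} \log(K/\delta))$ space on top of $O(Nd)$ for the raw data; the expected number of candidate points to verify is dominated by the same $O(N^{g(C_K)} \log(K/\delta))$ term; and the number of hash tables is exactly $L = O(N^{g(C_K)} \log(K/\delta))$.

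The main obstacle will be justifying the use of \emph{expected} distances $D_\text{mean}$ and $D_K$ in what is classically a worst-case analysis. One must show that the average-case analog of the standard bucket-mass argument still yields an expected number of far collisions bounded by $N f_h(D_\text{mean})^k$, and that each of the $K$ true nearest neighbors collides with probability at least $f_h(D_K)^k$ on average over queries and data. Once this distributional bookkeeping is handled (for example, via Jensen-type arguments applied to the scalar function $f_h(\cdot)^k$ together with the monotonicity of $f_h$), the rest of the proof is the textbook Indyk--Motwani argument.
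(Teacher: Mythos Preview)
Your proposal is correct and follows essentially the same route as the paper: normalize so that $D_\text{mean}=1$, identify $p_1=f_h(1/C_K)$ and $p_2=f_h(1)$, set the number of hash bits so that $Np_2^m=O(1)$, derive the per-table hit probability $N^{-g(C_K)}$ for a true nearest neighbor, and finish with a union bound over the $K$ neighbors to get $L=O(N^{g(C_K)}\log(K/\delta))$ tables. The paper handles your ``main obstacle'' in exactly the heuristic way you anticipate---it simply plugs the expected distances $D_\text{mean}$ and $D_K$ into $f_h$ without further distributional justification---so you need not supply anything more rigorous than that.
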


The above theorem essentially extends the $1$NN hardness analysis in Theorem 3.1 of~\cite{he2012difficulty} to $K$NN. $C_K$ measures the ratio between the distance from a query instance to a random training instance and that to its $K$th nearest neighbor. We will hereinafter refer to $C_K$ as \emph{$K$th relative contrast}. Intuitively, $C_K$ signifies the difficulty of finding the $K$th nearest neighbor. A smaller $C_K$ implies that some random training instances are likely to have the same hashed value as the $K$th nearest neighbor, thus entailing a high computational cost to differentiate the true nearest neighbors from the false positives. Theorem~\ref{thm:lsh_complexity} shows that among the datasets of the same size, the one with higher relative contrast will need lower time and space complexity and fewer hash tables to approximate the $K$ nearest neighbors. 
Combining Theorem~\ref{thm:knn_shapley_approx} and Theorem~\ref{thm:lsh_complexity}, we obtain the following theorem that explicates the tradeoff between $K$NN SV approximation errors and computational complexity.

\begin{theorem}
\label{thm:lsh_approx_sp}
Consider the utility function defined in (\ref{eqn:knn_utility_multiple_test}). Let $\hat{x}_{\alpha_k^{(j)}}$ denote the $k$th closest training point to $x_{\text{test},j}$ output by LSH with $\OO(\!N_\text{test}d\log(N) N^{g(C_{K^*}\!)} \!\log\! \frac{N_\text{test}K^*}{\delta}\!)$ time complexity, $\OO(\!Nd+N^{g(C_{K^*}\!)\!+\!1} \!\log\! \frac{N_\text{test}K^*}{\delta}\!)$ space complexity, and $\OO(\!N^{g(C_{K^*}\!)}\!\log\!\frac{N_\text{test}K^*}{\delta}\!)$ hash tables, where $K^*=\max(K,\ceil*{1/\epsilon})$. Suppose that  $\{\hat{s}_{i}\}_{i=1}^N$ is computed via $\hat{s}_i = \frac{1}{N_\text{test}}\sum_{j=1}^{N_\text{test}} \hat{s}_{i,j}$
and $\hat{s}_{i,j}$ ($j=1,\ldots, N_\text{test}$) are defined recursively by
\begin{align}
\label{eqn:knn_recursion_approx_lsh_1}
    &\hat{s}_{\alpha_i^{(j)},j} = 0 \quad \quad \quad \text{ if $i\geq K^*$}\\
    \label{eqn:knn_recursion_approx_lsh_2}
    &\hat{s}_{\alpha_i^{(j)},j} = \hat{s}_{\alpha_{i+1}^{(j)},j}+ \frac{\mathbbm{1}[\hat{y}_{\alpha_i^{(j)}}=y_{\text{test},j}]-\mathbbm{1}[\hat{y}_{\alpha_{i+1}^{(j)}}=y_{\text{test},j}]}{K} \frac{\min\{K,i\}}{i} \quad \quad \quad\text{ if $i\leq K^*-1$}
\end{align}
where $\hat{y}_{\alpha_i^{(j)}}$ and $y_{\text{test},j}$ are the labels associated with $\hat{x}_{\alpha_i^{(j)}}$ and $x_\text{test,j}$, respectively. Let the true SV of $\hat{x}_{\alpha_k}$ be denoted by $s_{\alpha_i}$. Then, $[\hat{s}_{\alpha_1},\ldots,\hat{s}_{\alpha_N}]$ is an $(\epsilon,\delta)$-approximation to the true SV $[s_{\alpha_1},\ldots,s_{\alpha_N}]$.
\end{theorem}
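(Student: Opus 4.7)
The plan is to treat the theorem as a composition of two previously established results: Theorem~\ref{thm:knn_shapley_approx}, which says that knowing only the identity and labels of the top $K^*$ nearest neighbors of each test point suffices to compute an $(\epsilon,0)$-approximation of the Shapley values via the recursion in (\ref{eqn:knn_recursion_approx_lsh_1})--(\ref{eqn:knn_recursion_approx_lsh_2}); and Theorem~\ref{thm:lsh_complexity}, which quantifies the cost of making LSH correctly return the top-$K$ neighbors with a prescribed success probability. The idea is therefore to instantiate the LSH parameters of Theorem~\ref{thm:lsh_complexity} at $K = K^*$ and at a per-query failure probability of $\delta/N_\text{test}$, then to union-bound over the $N_\text{test}$ test points and finally to invoke Theorem~\ref{thm:knn_shapley_approx} on the conditioning event to transfer the exact-neighbor guarantee to the Shapley-value guarantee.

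First I would fix a test point $x_{\text{test},j}$ and apply Theorem~\ref{thm:lsh_complexity} with $K \leftarrow K^*$ and $\delta \leftarrow \delta/N_\text{test}$. This yields an LSH instance with time complexity $\OO\!\bigl(d\log(N)\,N^{g(C_{K^*})}\log\tfrac{N_\text{test} K^*}{\delta}\bigr)$, space complexity $\OO\!\bigl(Nd + N^{g(C_{K^*})+1}\log\tfrac{N_\text{test} K^*}{\delta}\bigr)$, and $\OO\!\bigl(N^{g(C_{K^*})}\log\tfrac{N_\text{test} K^*}{\delta}\bigr)$ hash tables which returns the exact top-$K^*$ neighbors $\hat{x}_{\alpha_1^{(j)}},\ldots,\hat{x}_{\alpha_{K^*}^{(j)}}$ of $x_{\text{test},j}$ with probability at least $1 - \delta/N_\text{test}$. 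Taking a union bound across the $N_\text{test}$ test queries, with probability at least $1-\delta$ the retrieved sets $\{\hat{x}_{\alpha_k^{(j)}}\}_{k=1}^{K^*}$ coincide with the genuine top-$K^*$ neighbors $\{x_{\alpha_k^{(j)}}\}_{k=1}^{K^*}$ for every $j\in\{1,\ldots,N_\text{test}\}$. Amortizing the per-query LSH cost over all $N_\text{test}$ queries reproduces the complexity expressions stated in the theorem (the build cost $Nd$ is paid once).

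Next, condition on this high-probability event. Under this event, for each $j$ the recursion (\ref{eqn:knn_recursion_approx_lsh_1})--(\ref{eqn:knn_recursion_approx_lsh_2}) is exactly the recursion (\ref{eqn:knn_recursion_approx_1})--(\ref{eqn:knn_recursion_approx_2}) from Theorem~\ref{thm:knn_shapley_approx} applied to the single-test-point utility induced by $x_{\text{test},j}$, because the truncated recursion depends only on the labels and identities of the top-$K^*$ neighbors, which are now correct. Theorem~\ref{thm:knn_shapley_approx} then yields $\max_i |\hat{s}_{i,j} - s_{i,j}| \leq \epsilon$ for every $j$, where $s_{i,j}$ denotes the exact single-test-point Shapley value. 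Averaging over $j$ and using the additivity axiom, which gives $s_i = \tfrac{1}{N_\text{test}}\sum_j s_{i,j}$ for the multi-test utility in (\ref{eqn:knn_utility_multiple_test}), a triangle inequality yields $\max_i |\hat{s}_i - s_i| \leq \tfrac{1}{N_\text{test}}\sum_j \max_i |\hat{s}_{i,j} - s_{i,j}| \leq \epsilon$, establishing the $(\epsilon,\delta)$-approximation claim.

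The one subtle step that I expect to need the most care is the union-bound bookkeeping: Theorem~\ref{thm:lsh_complexity} only guarantees correct retrieval of $K$ neighbors, whereas Theorem~\ref{thm:knn_shapley_approx} requires the top $K^* = \max\{K,\lceil 1/\epsilon\rceil\}$ neighbors, so one must genuinely instantiate LSH at parameter $K^*$ (not $K$), and the $\log\!\tfrac{N_\text{test} K^*}{\delta}$ factor in the complexity has to absorb both the test-point union bound and the $K^*$ factor already present in Theorem~\ref{thm:lsh_complexity}. Everything else is a bookkeeping composition of the two prior theorems with the additivity property, and requires no new combinatorial argument.
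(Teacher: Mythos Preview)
Your proposal is correct and matches the paper's own argument, which does not give a standalone proof but simply states that the result follows by ``combining Theorem~\ref{thm:knn_shapley_approx} and Theorem~\ref{thm:lsh_complexity}.'' You have filled in exactly the composition the paper intends: instantiate LSH at $K^*$ with per-query failure $\delta/N_\text{test}$, union-bound over test points, and then invoke Theorem~\ref{thm:knn_shapley_approx} together with additivity to pass from per-test-point error to aggregate error.
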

The gist of the LSH-based approximation is to focus only on the SV of the retrieved nearest neighbors and neglect the values of the rest of the training points since their values are small enough. For a error requirement $\epsilon$ not too small such that $C_{K^*}>1$, the LSH-based approximation has sublinear time complexity, thus enjoying higher efficiency than the exact algorithm.

\section{Extensions}
\label{section:extension}

We extend the exact algorithm for unweighted $K$NN to other settings. Specifically, as illustrated by Figure~\ref{fig:connection}, we categorize a data valuation problem according to whether data contributors are valued in tandem with a data analyst; whether each data contributor provides a single data instance or multiple ones; whether the underlying ML model is a weighted $K$NN or unweighted; and whether the model solves a regression or a classification task. We will discuss the valuation algorithm for each of the above settings.

\paragraph{\textbf{Unweighted $K$NN Regression}} For regression tasks, we define the utility function by the negative mean square error of an unweighted $K$NN regressor:
\begin{align}
\label{eqn:utility_regression}
 U(S) = -\bigg(\frac{1}{K}\sum_{k=1}^{\min\{K,|S|\}}  y_{\alpha_k(S)} - y_\text{test}\bigg)^2
\end{align}
Using similar proof techniques to Theorem~\ref{thm:KNN_unweighted_class}, we provide a simple iterative procedure to compute the SV for unweighted $K$NN regression in Appendix~\ref{section:appendix_regression}.

\paragraph{\textbf{Weighted $K$NN}} A weighted $K$NN estimate produced by a training set $S$ can be expressed as $\hat{y}(S) = \sum_{k=1}^{\min\{K,|S|\}}w_{\alpha_k(S)} y_{\alpha_k}$,
where $w_{\alpha_k(S)}$ is the weight associated with the $k$th nearest neighbor in $S$. The weight assigned to a neighbor in the weighted $K$NN estimate often varies with the neighbor-to-test distance so that the evidence from more nearby neighbors is weighted more heavily~\cite{dudani1976distance}.
Correspondingly, we define the utility function associated with weighted $K$NN classification and regression tasks as
\begin{align}
\label{eqn:utility_classification_weighted}
    U(S) = \sum_{k=1}^{\min\{K,|S|\}} w_{\alpha_k(S)} \mathbbm{1}[y_{\alpha_k(S)} = y_\text{test}]
\end{align}
and 
\begin{align}
\label{eqn:utility_regression_weighted}
 U(S) = -\bigg(\sum_{k=1}^{\min\{K,|S|\}} w_{\alpha_k(S)} y_{\alpha_k(S)} - y_\text{test}\bigg)^2.
\end{align}


For weighted $K$NN classification and regression, the SV can no longer be computed exactly in $\mathcal{O}(N\log N)$ time. In Appendix~\ref{section:appendix_weighted_knn}, 
we present a theorem showing that it is however possible to compute the exact SV for weighted $K$NN in $\mathcal{O}(N^K)$ time. Figure~\ref{fig:illustrate_weighted_knn} illustrates the origin of the polynomial complexity result. When applying (\ref{eqn:shapley_definition_no_order}) to $K$NN, we only need to focus on the subsets whose utility might be affected by the addition of $i$th training instance. Since there are only $N^K$ possible distinctive combinations for $K$ nearest neighbors, the number of distinct utility values for all $S\subseteq I$ is upper bounded by $N^K$.

\begin{figure}
    \centering
\includegraphics[width=0.5\textwidth]{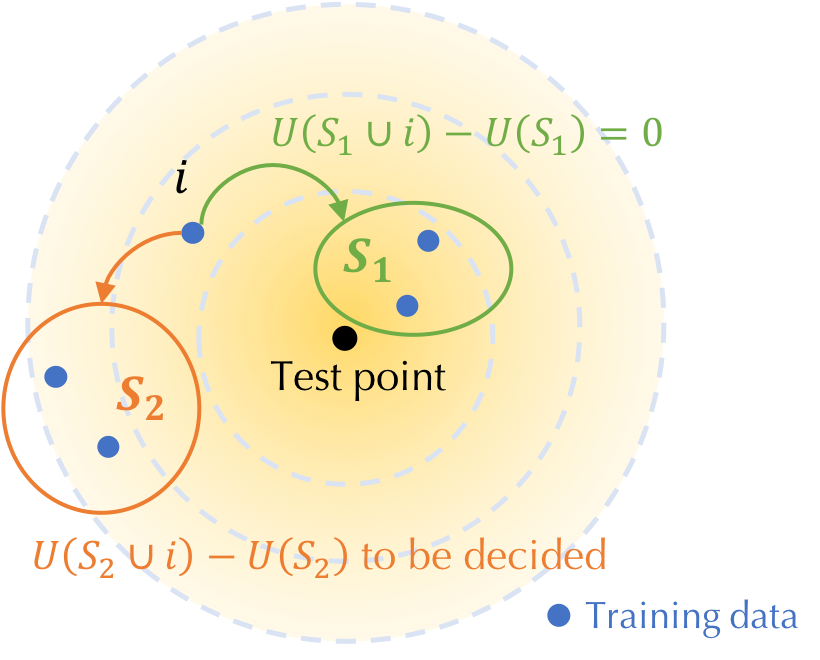}
\caption{Illustration of the idea to compute the SV for weighted $K$NN.}
    \label{fig:illustrate_weighted_knn}
\end{figure}



\paragraph{\textbf{Multiple Data Per Contributor}} We now study the case where each seller provides more than one data instance. The goal is to fairly value individual sellers in lieu of individual training points. In Appendix~\ref{section:appendix_multi_data_per_selelr}, 
we show that for both unweighted/weighted classifiers/regressors, the complexity for computing the SV of each seller is $\OO(M^K)$, where $M$ is the number of sellers. Particularly, when $K=1$, even though each seller can provision multiple instances, the utility function only depends on the training point that is nearest to the query point. Thus, for $1$NN, the problem of computing the multi-data-per-seller $K$NN SV reduces to the single-data-per-seller case; thus, the corresponding computational complexity is $\OO(M\log M)$.

\paragraph{\textbf{Valuing Computation}} Oftentimes, the buyer may outsource data analytics to a third party, which we call the analyst throughout the rest of the paper. The analyst analyzes the training dataset aggregated from different sellers and returns an ML model to the buyer. In this process, the analyst contributes various computation efforts, which may include intellectual property pertaining to data anlytics, usage of computing infrastructure, among others. Here, we want to address the problem of appraising both sellers (data contributors) and analysts (computation contributors) within a unified game-theoretic framework. 

Firstly, we extend the game-theoretic framework for data valuation to model the interplay between data and computation. The resultant game is termed a \emph{composite game}. By contrast, the game discussed previously which involves only the sellers is termed a \emph{data-only game}. In the composite game, there are $M+1$ players, consisting of $M$ sellers denoted by $I^s$ and one analyst denoted by $C$. We can express the utility function $\nu_c$ associated with the game in terms of the utility function $\nu$ in the data-only game as follows. Since in the case of outsourced analytics, both contributions from data sellers and data analysts are necessary for building models, the value of a set $S\subseteq I^s\cup \{C\}$ in the composite game is zero if $S$ only contains the sellers or the analyst; otherwise, it is equal to $\nu$ evaluated on all the sellers in $S$. Formally, we define the utility function $\nu_c$ by 
\begin{align}
\label{eqn:utility_composite}
    \nu_c(S)=\left\{\begin{array}{l}
         0,\text{ if $S=\{C\}$ or $S\subseteq I^s$}\\
         \nu(S\setminus\{C\}), \text{ otherwise}
    \end{array}
\right.
\end{align}
The goal in the composite game is to allocate $\nu_c(\{I^s,C\})$ to the individual sellers and the analyst. $s(\nu_c,i)$ and $s(\nu_c,C)$ represent the value received by seller $i$ and the analyst, respectively. We suppress the dependency of $s$ on the utility function whenever it is self-evident, denoting the value allocated to seller $i$ and the analyst by $s_i$ and $s_c$, respectively.

In Appendix~\ref{section:appendix_computation_valuation}, 
we show that one can compute the SV for both the sellers and the analyst with the same computational complexity as the one needed for the data-only game. 

\paragraph{\textbf{Comments on the Proof Techniques}} We have shown that we can circumvent the exponential complexity for computing the SV for a standard unweighted $K$NN classifier and its extensions. A natural question is whether it is possible to abstract the commonality of these cases and provide a general property of the utility function that one can exploit to derive efficient algorithms.


Suppose that some group of $S$'s induce the same $\nu(S\cup\{i\})-\nu(S\cup\{j\})$ and there only exists $T$ number of such groups. More formally, consider that $\nu(S\cup\{i\})-\nu(S\cup\{j\})$ can be represented by a ``piecewise'' form:
\begin{align}
\label{eqn:piecewise_util}
   \nu(S\cup\{i\})-\nu(S\cup\{j\}) = \sum_{t=1}^T C_{ij}^{(t)} \mathbbm{1}[S\in \mathcal{S}_t]
\end{align}
where $\mathcal{S}_t\subseteq 2^{I\setminus\{i,j\}}$ and $C_{i,j}^{(t)}\in \mathbb{R}$ is a constant associated with $t$th ``group.'' An application of Lemma~\ref{lm:shapley_diff} to the utility functions with the piecewise utility difference form indicates that the SV difference between $i$ and $j$ is
\begin{align}
    &s_i - s_j = \frac{1}{N-1}\sum_{S\subseteq I\setminus\{i,j\}} \sum_{t=1}^T \frac{ C_{ij}^{(t)}}{{N-2\choose |S|}} \mathbbm{1}[S\in \mathcal{S}_t] \\
    \label{eqn:piecewise_shapley_diff}
    &= \frac{1}{N-1}\sum_{t=1}^T  C_{ij}^{(t)} \bigg[\sum_{k=0}^{N-2} \frac{|\{S:S\in \mathcal{S}_t, |S|=k\}| }{{N-2\choose k}}\bigg]
\end{align}
With the piecewise property (\ref{eqn:piecewise_util}), the SV calculation is reduced to a counting problem. As long as the quantity in the bracket of (\ref{eqn:piecewise_shapley_diff}) can be efficiently evaluated, the SV difference between any pair of training points can be computed in $\OO(TN)$.

Indeed, one can verify that the utility function for unweighted $K$NN classification, regression and weighted $K$NN have the aforementioned ``piecewise'' utility difference property with $T=1,N-1, \sum_{k=0}^K {N-2\choose k}$, respectively. More details can be found in Appendix~\ref{section:appendix_piecewise}.

\section{Improved MC Approximation}
\label{section:improved_mc}

As discussed previously, the SV for unweighted $K$NN classification and regression can be computed exactly with $\OO(N\log N)$ complexity. However, for the variants including the weighted $K$NN and multiple-data-per-seller $K$NN, the complexity to compute the exact SV is $\OO(N^K)$ and $\OO(M^K)$, respectively, which are clearly not scalable. We propose a more efficient way to evaluate the SV up to provable approximation errors, which modifies the existing MC algorithm presented in Section~\ref{section:baseline}. By exploiting the locality property of the $K$NN-type algorithms, we propose a tighter upper bound on the number of permutations for a given approximation error and exhibit a novel implementation of the algorithm using efficient data structures.

The existing sample complexity bound is based on Hoeffding's inequality, which bounds the number of permutations needed in terms of the range of utility difference $\phi_i$. This bound is not always optimal as it depends on the extremal values that a random variable can take and thus accounts for the worst case. For $K$NN, the utility does not change after adding training instance $i$ for many subsets; therefore, the variance of $\phi_i$ is much smaller than its range. This inspires us to use Bennett's inequality, which bounds the sample complexity in terms of the variance of a random variable and often results in a much tighter bound than Hoeffding's inequality. 



\begin{theorem}
\label{thm:bennett_bound}
Given the range $[-r,r]$ of the utility difference $\phi_i$, an error bound $\epsilon$, and a confidence $1-\delta$, the sample size required such that
\begin{align*}
    P[\|\hat{s}-s\|_\infty \geq \epsilon] \leq \delta
\end{align*}
is $T\geq T^*$. $T^*$ is the solution of
\begin{align}
\label{eqn:bennett_bound}
        \sum_{i=1}^N \exp(-T^*(1-q_i^2) h(\frac{\epsilon}{(1-q_i^2)r})) = \delta/2.
\end{align}
where $h(u) = (1+u)\log (1+u)-u$ and 
\begin{align}
    q_i=\left\{\begin{array}{ll}
         &0, \text{ $i=1,\ldots,K$}  \\
         &\frac{i-K}{i} , \text{ $i=K+1,\ldots,N$}
    \end{array}
    \right.
\end{align}
\end{theorem}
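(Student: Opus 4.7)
The plan is to apply Bennett's inequality coordinate-wise to the MC estimator $\hat{s}_i=\tfrac{1}{T}\sum_{t=1}^T\phi_i^{(t)}$, with $\phi_i^{(t)}=\nu(P_i^{\pi_t}\cup\{i\})-\nu(P_i^{\pi_t})$, and then to combine the per-coordinate concentration bounds via a union bound over $i=1,\ldots,N$. The key improvement over the Hoeffding baseline of Section~\ref{section:baseline} comes from the locality of $K$NN: for most permutations $\pi$, the random variable $\phi_i$ is exactly $0$, so its variance is strictly smaller than the range $r$ that Hoeffding uses.

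The first step is a short combinatorial computation that yields a lower bound on $P[\phi_i=0]$. Assume WLOG that training points are indexed in order of increasing distance to the test point, so point $i$ has rank $i$. Since the $K$NN utility depends on a set only through its top-$K$ elements, $\phi_i$ vanishes whenever $i$ fails to enter the top $K$ of $P_i^\pi\cup\{i\}$, i.e., whenever at least $K$ of the points $\{1,\ldots,i-1\}$ appear before $i$ in $\pi$. By symmetry of the uniform random ordering on $\{1,\ldots,i\}$, point $i$ occupies one of the first $\min\{K,i\}$ positions with probability $\min\{K,i\}/i$, so
\[
P[\phi_i=0]\;\ge\;1-\frac{\min\{K,i\}}{i}\;=\;q_i,
\]
matching the $q_i$ in the statement.

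The second step is to convert this sparsity into a variance-type estimate for $\phi_i$ and plug it into Bennett's inequality. Using $|\phi_i|\le r$ together with $P[\phi_i=0]\ge q_i$ produces a bound on the centered second moment of $\phi_i$ of the form $(1-q_i^2)r^2$, and Bennett's inequality applied to $T$ i.i.d.\ copies then gives
\[
P\bigl[\,|\hat{s}_i-s_i|\ge\epsilon\,\bigr]\;\le\;2\exp\!\left(-T(1-q_i^2)\,h\!\left(\frac{\epsilon}{(1-q_i^2)\,r}\right)\right).
\]
A union bound over $i=1,\ldots,N$ yields
\[
P[\,\|\hat{s}-s\|_\infty\ge\epsilon\,]\;\le\;2\sum_{i=1}^N\exp\!\left(-T(1-q_i^2)\,h\!\left(\frac{\epsilon}{(1-q_i^2)\,r}\right)\right),
\]
and requiring the right-hand side to be at most $\delta$ produces exactly the implicit equation defining $T^*$; since each summand is strictly decreasing in $T$, existence and uniqueness of $T^*$ follow immediately.

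I expect the main obstacle to be the second step: pinning down the correct variance/second-moment estimate in terms of $q_i$ so that it slots cleanly into Bennett's function $h$ with the stated coefficient $(1-q_i^2)r^2$, as opposed to the weaker bound one would naively get from the range alone. The combinatorial identity in the first step is a one-line symmetry argument and the union bound together with the monotonic solution for $T^*$ are standard; the real content is showing that the locality of $K$NN (encoded by $q_i$) survives intact into Bennett's inequality, yielding a sharper sample complexity than the distribution-agnostic Hoeffding bound.
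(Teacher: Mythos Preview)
Your proposal is correct and follows the same overall strategy as the paper: bound $\Var[\phi_i]$ via the sparsity $P[\phi_i=0]\ge q_i$, apply Bennett's inequality coordinate-wise, then take a union bound over $i$ to obtain the implicit equation for $T^*$. Your symmetry argument for $q_i$ is actually cleaner than the paper's explicit binomial-sum computation, and the variance step you flag as the main obstacle is in fact immediate from $\Var[\phi_i]\le \E[\phi_i^2]\le r^2\,P[\phi_i\neq 0]\le (1-q_i)r^2\le (1-q_i^2)r^2$; the paper reaches the same $(1-q_i^2)r^2$ bound via a longer law-of-total-variance decomposition, and its per-coordinate $\delta_i$ optimization collapses to exactly the plain union bound you wrote.
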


Given $\epsilon$, $\delta$, and $r$, the required permutation size $T^*$ derived from Bennett's bound can be computed numerically. For general utility functions the range $r$ of the utility difference is twice the range of the utility function, while for the special case of the unweighted $K$NN classifier, $r=\frac{1}{K}$.

Although determining exact $T^*$ requires numerical calculation, we can nevertheless gain insights into the relationship between $N$, $\epsilon$, $\delta$ and $T^*$ through some approximation.
We leave the detailed derivation to Appendix~\ref{section:appendix_lower_bound},
but it is often reasonable
to use the following $\tilde{T}$ as an approximation of $T^*$:
\begin{align}
\label{eqn:bennett_approximate}
    \tilde{T}\geq \frac{r^2}{\epsilon^2}\log \frac{2K}{\delta}
\end{align}
The sample complexity bound derived above does not change with $N$. On the one hand, a larger training data size implies more unknown SVs to be estimated, thus requiring more random permutations. On the other hand, the variance of the SV across all training data decreases with the training data size, because an increasing proportion of training points makes insignificant contributions to the query result and results in small SVs. These two opposite driving forces make the required permutation size about the same across all training data sizes. 

The algorithm for the improved MC approximation is provided in Algorithm~\ref{alg:sampling}. We use a max-heap to organize the $K$NN. Since inserting any training data to the heap costs $\OO(\log K)$, incrementally updating the $K$NN in a permutation costs $\OO(N\log K)$. Using the bound on the number of permutations in (\ref{eqn:bennett_approximate}), we can show that the total time complexity for our improved MC algorithm is $\OO(\frac{N}{\epsilon^2}\log K \log \frac{K}{\delta})$.

\begin{algorithm}[t!]
\SetAlgoLined
\SetKwInOut{Input}{input}
\SetKwInOut{Output}{output}
\Input{Training set - $D = \{(x_i,y_i)\}_{i=1}^N$, utility function $\nu(\cdot)$, the number of measurements - $M$, the number of permutations - $T$}
\Output{The SV of each training point - $\hat{s}\in \R^N$}
    \For{$t\gets 1$ \KwTo $T$}{
        $\pi_t \leftarrow \text{GenerateUniformRandomPermutation}(D)$\;
        Initialize a length-$K$ max-heap $H$ to maintain the $K$NN\;
        \For{$i\gets1$ \KwTo $N$}{
            Insert $\pi_{t,i}$ to $H$\;
            \uIf{$H$ changes}{
                $\phi^t_{\pi_{t,i}} \leftarrow  \nu(\pi_{t,1:i}) -  \nu(\pi_{t,1:i-1})$\;
            }
            \Else{
                $\phi^t_{\pi_{t,i}} \leftarrow \phi^t_{\pi_{t,i-1}}$\;
            }
        }
    }
    $\hat{s}_i = \frac{1}{T}\sum_{t=1}^T \phi^t_i$ for $i=1,\ldots,N$\;
 \caption{Improved MC Approach}
 \label{alg:sampling}
\end{algorithm}

\section{Experiments}
\label{section:experiment}
We evaluate the proposed approaches to computing the SV of training data for various nearest neighbor algorithms.

\subsection{Experimental Setup}

\paragraph{\textbf{Datasets}} We used the following popular benchmark datasets of different sizes: (1) \texttt{dog-fish}~\cite{koh2017understanding} contains the features of dog and cat images extracted from ImageNet, with $900$ training examples and $300$ test examples for each class. The features have $2048$ dimensions, generated by the state-of-the-art Inception v3 network~\cite{szegedy2016rethinking} with all but the top layer. (2) \texttt{MNIST}~\cite{lecun-mnisthandwrittendigit-2010} is a handwritten digit dataset with $60000$ training images and $10000$ test images. We extracted  $1024$-dimensional features via a convolutional network. (3) The \texttt{CIFAR-10} dataset consists of $60000$ $32 \times 32$ color images in $10$ classes, with $6000$ images per class. The deep features have $2048$ dimensions and were extracted via the ResNet-50~\cite{he2016deep}. (4) \texttt{ImageNet}~\cite{imagenet_cvpr09} is an image dataset with more than $1$ million images organized according to the WordNet hierarchy. We chose $1000$ classes which have in total around $1$ million images and extracted $2048$-dimensional deep features by the ResNet-50 network. (5) Yahoo Flickr Creative Commons 100M
that consists of 99.2 million photos. We randomly chose a 10-million subset (referred to as \texttt{Yahoo10m} hereinafter) for our experiment, and used the deep features extracted by~\cite{amato2016yfcc100m}.

\paragraph{\textbf{Parameter selection for LSH}} The three main parameters that affect the performance of the LSH are the number of projections per hash value ($m$), the number of hash tables ($h$), and the width of the project ($r$). Decreasing $r$ decreases the probability of collision for any two points, which is equivalent to increasing $m$. Since a smaller $m$ will lead to better efficiency, we would like to set $r$ as small as possible. However, decreasing $r$ below a certain threshold increases the quantity $g(C_K)$, thereby requiring us to increase $h$. 
Following~\cite{datar2004locality}, we performed grid search to find the optimal value of $r$ which we used in our experiments. 
Following~\cite{gionis1999similarity}, we set $m=\alpha \log N/\log (f_h(D_\text{mean})^{-1})$. For a given value of $m$, it is easy to find the optimal value of $h$ which will guarantee that the SV approximation error is no more than a user-specified threshold. We tried a few values for $\alpha$ and reported the $m$ that leads to lowest runtime.
For all experiments pertaining to the LSH, we divided the dataset into two disjoint parts: one for selecting the parameters, and another for testing the performance of LSH for computing the SV.

\subsection{Experimental Results}
\subsubsection{Unweighted KNN Classifier}

\paragraph{\textbf{Correctness}}
We first empirically validate our theortical
result. We randomly 
selected $1000$ training points and $100$ test points from \texttt{MNIST}. We computed the SV of each training point with respect to the $K$NN utility using the exact algorithm and the baseline MC method. Figure~\ref{fig:correctness} shows that the MC estimate of the SV for each training point converges to the result of the exact algorithm. 

\begin{figure}
    \centering
\includegraphics[width=0.5\textwidth]{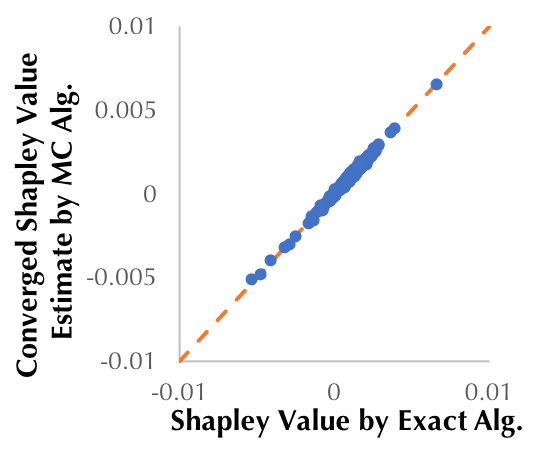}
\caption{The SV produced by the exact algorithm and the baseline MC approximation algorithm. }
    \label{fig:correctness}
\end{figure}




\paragraph{\textbf{Performance}} We validated the hypothesis that
our exact algorithm and the LSH-based method outperform
the baseline MC method.
We take the approximation error $\epsilon=0.1$ and $\delta=0.1$ for both MC and LSH-based approximations. 
We bootstrapped the \texttt{MNIST} dataset to synthesize training datasets of various sizes. The three SV calculation methods were implemented on a machine with 2.6 GHz Intel Core i7 CPU. The runtime of the three methods for different datasets is illustrated in Figure~\ref{fig:runtime_unweighted} (a). The proposed exact algorithm is faster than the baseline approximation by several orders magnitude and it produces the exact SV. By circumventing the computational complexity of sorting a large array, the LSH-based approximation can significantly outperform the exact algorithm, especially when the training size is large. Figure~\ref{fig:runtime_unweighted} (b) sheds light on the increasing performance gap between the LSH-based approximation and the exact method with respect to the training size. The relative contrast of these bootstrapped datasets grows with the number of training points, thus requiring fewer hash tables and less time to search for approximate nearest neighbors. We also tested the approximation approach proposed in our prior work~\cite{jia2018shapley}, which achieves the-start-of-the-art performance for ML models that cannot be incrementally maintained. However, for models that have efficient incremental training algorithms, like $K$NN, it is less efficient than the baseline approximation, and the experiment for $1000$ training points did not finish in $4$ hours.


\begin{figure}[t!]
\centering
\includegraphics[width=\columnwidth]{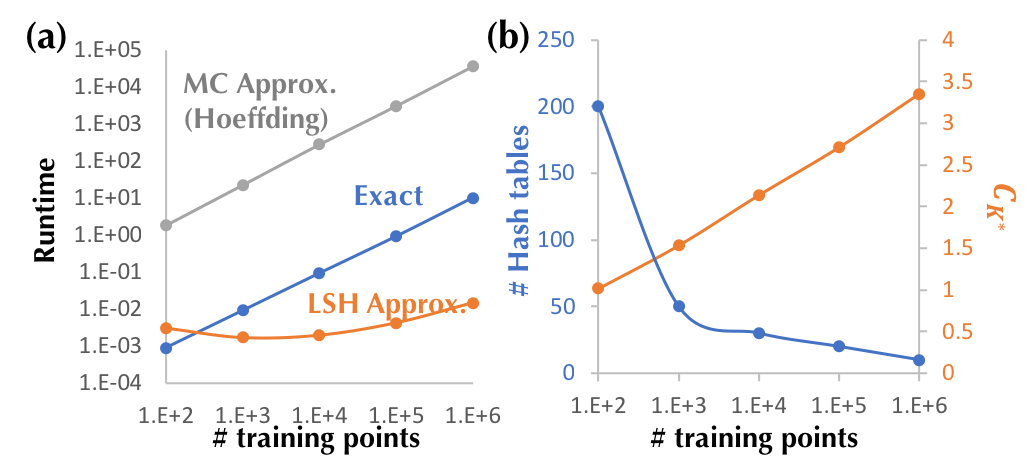}
\caption{Performance of unweighted $K$NN classification in the single-data-per-seller case. }
\label{fig:runtime_unweighted}
\end{figure}

Using a machine with the Intel Xeon E5-2690 CPU and $256$ GB RAM, we benchmarked the runtime of the exact and the LSH-based approximation algorithm on three popular datasets, including \texttt{CIFAR-10}, \texttt{ImageNet}, and \texttt{Yahoo10m}. For each dataset, we randomly selected $100$ test points, computed the SV of all training points with respect to each test point, and reported the average runtime across all test points. 
The results for $K=1$ are reported in Figure~\ref{table:runtime}. We can see that the LSH-based method can bring a $3\times$-$5\times$ speed-up compared with the exact algorithm. The performance of LSH depends heavily on the dataset, especially in terms of its relative contrast. This effect will be thoroughly studied in the sequel. We compare the prediction accuracy of $K$NN ($K=1,2,5$) with the commonly used logistic regression and the result is illustrated in Figure~\ref{table:accuracy}. We can see that $K$NN achieves comparable prediction power to logistic regression when using features extracted via deep neural networks. The runtime of the exact and the LSH-based approximation for $K=2,5$ is similar to the $K=1$ case in Figure~\ref{table:runtime}, so we will leave their corresponding results to Appendix~\ref{appendix:runtime_k25}.

\begin{figure}[t!]
\caption{Average runtime of the exact and the LSH-based approximation algorithm for computing the unweighted $K$NN SV for a single test point. We take $\epsilon,\delta=0.1$ and $K=1$.}
\label{table:runtime}
\begin{tabular}{ccccc}
\hline
\textbf{Dataset}    & \textbf{Size} & \begin{tabular}[c]{@{}c@{}}\textbf{Estimated}  \\ \textbf{Contrast}\end{tabular} & \begin{tabular}[c]{@{}c@{}}\textbf{Runtime} \\ \textbf{(Exact)}\end{tabular} & \begin{tabular}[c]{@{}c@{}}\textbf{Runtime}\\ \textbf{(LSH)}\end{tabular} \\ \hline
\texttt{CIFAR-10}      &  $6$E+$4$    & 1.2802 &  0.78s                                                            &        0.23s                                                              \\ \hline
\texttt{ImageNet}    &   $1$E+$6$   &  1.2163  &    $11.34$s                                                        &      $2.74$s                                                \\ \hline
\texttt{Yahoo10m} &    $1$E+$7$  &  1.3456 &                                   $203.43$s                            &                                 $44.13$s                                      \\ \hline
\end{tabular}

\end{figure}






\begin{figure}[t!]
\caption{Comparison of prediction accuracy of $K$NN vs. logistic regression on deep features.}
\label{table:accuracy}
\begin{tabular}{ccccc}
\hline
\textbf{Dataset}   & \textbf{$1$NN} & \textbf{$2$NN} & \textbf{$5$NN} & \textbf{Logistic Regression}  \\ \hline
\texttt{CIFAR-10}     &  $81\%$    &   $83\%$       &  $80\%$ &  $\mathbf{87\%}$         \\ \hline
\texttt{ImageNet}  &   $77\%$   &  $73\%$   & $\mathbf{84\%}$ & $82\%$                \\ \hline
\texttt{Yahoo10m} &    $90\%$  &   $96\%$   & $\mathbf{98\%}$ & $96\%$                \\ \hline
\end{tabular}
\end{figure}


\paragraph{\textbf{Effect of relative contrast on the LSH-based method}}
Our theoretical result suggests that the $K^*$th relative contrast ($K^*=\max\{K,\ceil*{1/\epsilon}\}$) determines the complexity of the LSH-based approximation. We verified the effect of relative contrast by experimenting on three datasets, namely, \texttt{dog-fish}, \texttt{deep} and \texttt{gist}. \texttt{deep} and \texttt{gist} were constructed by extracting the deep features and gist features~\cite{siagian2007rapid} from \texttt{MNIST}, respectively. All of these datasets were normalized such that $D_\text{mean}=1$. Figure~\ref{fig:LSH} (a) shows that the relative contrast of each dataset decreases as $K^*$ increases. In this experiment, we take $\epsilon=0.01$ and $K=2$, so the corresponding $K^*=1/\epsilon=100$. At this value of $K^*$, the relative contrast is in the following order: \texttt{deep} ($1.57$) $>$ \texttt{gist} ($1.48$) $>$ \texttt{dog-fish} ($1.17$).
 From Figure~\ref{fig:LSH} (b) and (c), we see that the number of hash tables and the number of returned points required to meet the $\epsilon$ error tolerance for the three datasets follow the reversed order of their relative contrast, as predicted by Theorem~\ref{thm:lsh_approx_sp}. Therefore, the LSH-based approximation will be less efficient if the $K$ in the nearest neighbor algorithm is very large or the desired error $\epsilon$ is small. Figure~\ref{fig:LSH} (d) shows that
 the LSH-based method can better approximate the true SV as the recall of the underlying nearest neighbor retrieval gets higher. For the datasets with high relative contrast, e.g., \texttt{deep} and \texttt{gist}, a moderate value of recall ($\sim0.7$) can already lead to an approximation error below the desired threshold. On the other hand, \texttt{dog-fish}, which has low relative contrast, will need fairly accurate nearest neighbor retrieval (recall $\sim1$) to obtain a tolerable approximation error. The reason for the different retrieval accuracy requirements is that for the dataset with higher relative contrast, even if the retrieval of the nearest neighbors is inaccurate, the rank of the erroneous elements in the retrieved set may still be close to that of the missed true nearest neighbors. Thus, these erroneous elements will have only little impacts on SV approximation errors.

\begin{figure}[t!]
\centering
\includegraphics[width=\columnwidth]{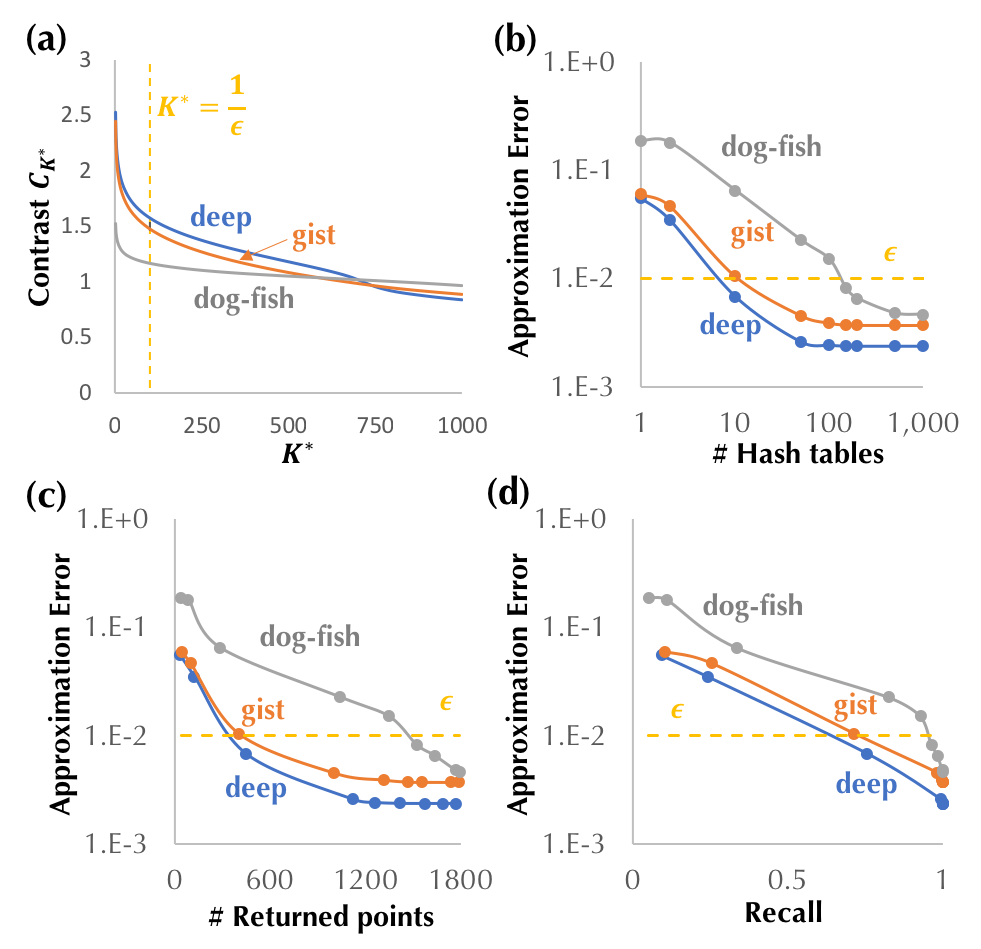}
\caption{Performance of LSH on three datasets: \texttt{deep}, \texttt{gist}, \texttt{dog-fish}. (a) Relative contrast $C_{K^*}$ vs. $K^*$. (b), (c) and (d) illustrate the trend of the SV approximation error for different number of hash tables, returned points and recalls.}
\label{fig:LSH}
\end{figure}

\paragraph{\textbf{Simulation of the theoretical bound of LSH}} According to Theorem~\ref{thm:lsh_approx_sp}, the complexity of the LSH-based approximation is dominated by the exponent $g(C_{K^*})$, where $K^*=\min\{K,1/\epsilon\}$ and $g(\cdot)$ depends on the width $r$ of the $p$-stable distribution used for LSH. We computed $C_{K^*}$ and $g(C_{K^*})$ for $\epsilon\in\{0.001,0.01,0.1,1\}$ and let $K=1$ in this simulation. The orange line in Figure~\ref{fig:LSH_bound} (a) shows that a larger $\epsilon$ induces a larger value of relative contrast $C_{K^*}$, rendering the underlying nearest neighbor retrieval problem of the LSH-based approximation method easier. In particular, $C_{K^*}$ is greater than $1$ for all epsilons considered except for  $\epsilon=0.001$. Recall that $g(C_K)=\log f_h(1/C_K)/\log f_h(1)$; thus, $g(C_{K^*})$ will exhibit different trends for the epsilons with $C_{K^*}>1$ and the ones with $C_{K^*}<1$, as shown in Figure~\ref{fig:LSH_bound} (b). Moreover, Figure~\ref{fig:LSH_bound} (b) shows that the value of $g(C_{K^*})$ is more or less insensitive to $r$ after a certain point. For $\epsilon$ that is not too small, we can choose $r$ to be the value at which $g(C_{K^*})$ is minimized. It does not make sense to use the LSH-based approximation if the desired error $\epsilon$ is too small to have the corresponding $g(C_{K^*})$ less than one, since its complexity is theoretically higher than the exact algorithm. The blue line in Figure~\ref{fig:LSH_bound} (a) illustrates the exponent $g(C_{K^*})$ as a function of $\epsilon$ when $r$ is chosen to minimize $g(C_{K^*})$. We observe that $g(C_{K^*})$ is always below $1$ except when $\epsilon=0.001$.

\begin{figure}[t!]
\centering
\includegraphics[width=\columnwidth]{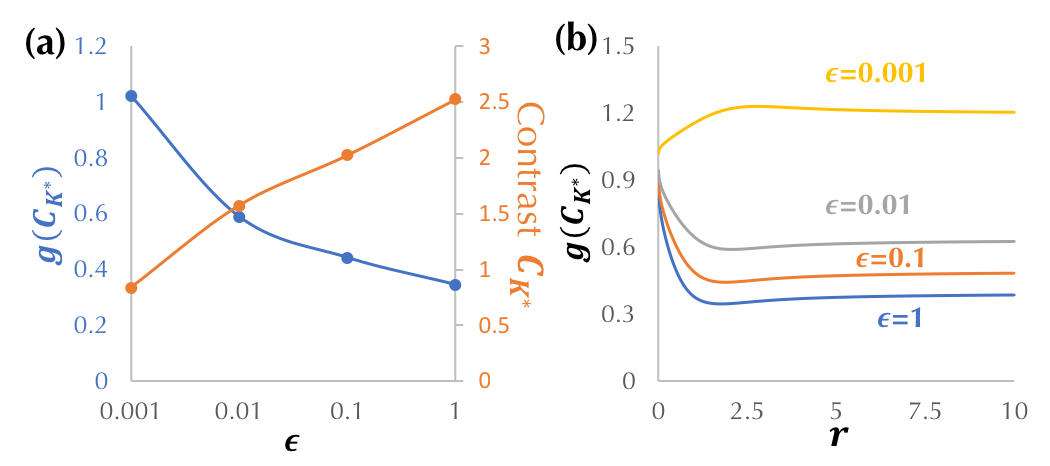}
\caption{(a) The exponent $g(C_{K^*})$ in the complexity bound of the LSH-based method and the relative contrast $C_{K^*}$ computed for different $\epsilon$. $K$ is fixed to $1$. (b) $g(C_{K^*})$ vs. the projection width $r$ of the LSH.}
\label{fig:LSH_bound}
\end{figure}

\subsubsection{Evaluation of Other Extensions}
We introduced the extensions of the exact SV calculation algorithm to the settings beyond unweighted $K$NN classification. Some of these settings require polynomial time to compute the exact SV, which is impractical for large-scale datasets. For those settings, we need to resort to the MC approximation method. We first compare the sample complexity of different MC methods, including the baseline and our improved MC method (Section~\ref{section:improved_mc}). Then, we demonstrate data values computed in various settings.

\paragraph{\textbf{Sample complexity for MC methods}} The time complexity of the MC-based SV approximation algorithms is largely dependent on the number of permutations. Figure~\ref{fig:permutation_bound} compares the permutation sizes used in the following three methods against the actual permutation size needed to achieve a given approximation error (marked as ``ground truth'' in the figure): (1) ``Hoeffding'', which is the baseline approach and uses the Hoeffding's inequality to decide the number of permutations; (2) ``Bennett'', which is our proposed approach and exploits Bennett's inequality to derive the permutation size; (3) ''Heuristic'', which terminates MC simulations when the change of the SV estimates in the two consecutive iterations is below a certain value, which we set to $\epsilon/50$ in this experiment. We notice that the ground truth requirement for the permutation size decreases at first and remains constant when the training data size is large enough.
From Figure~\ref{fig:permutation_bound}, the bound based on the Hoeffding's inequality is too loose to correctly predict the correct trend of the required permutation size. By contrast, our bound based on Bennett's inequality exhibits the correct trend of permutation size with respect to training data size. In terms of runtime, our improved MC method based on Bennett's inequality is more than $2\times$ faster than the baseline method when the training size is above $1$ million. Moreover, using the aforementioned heuristic, we were able to terminate the MC approximation algorithm even earlier while satisfying the requirement of the approximation error. 

\begin{figure}[t!]
\centering
\includegraphics[width=\columnwidth]{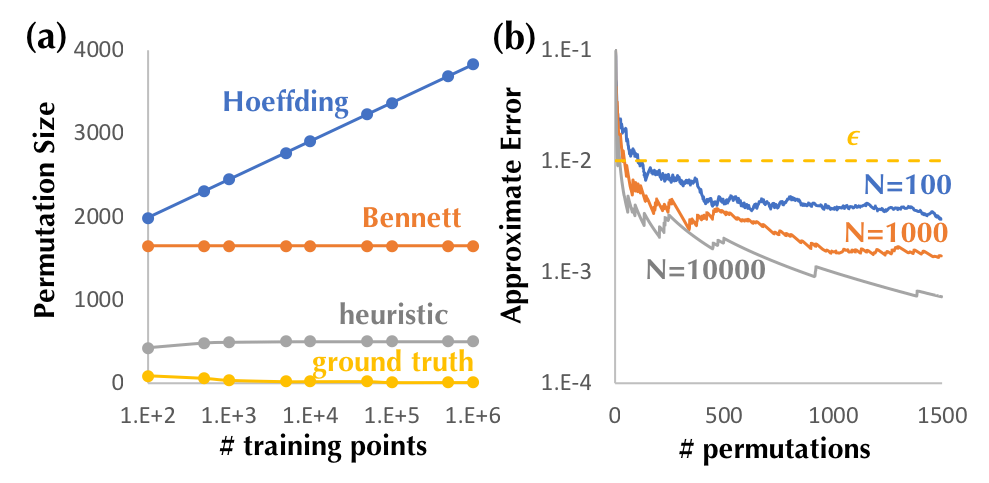}
\caption{Comparison of the required permutation sizes for different number of training points derived from the Hoeffding's inequality (baseline), Bennett's inequality and the heuristic method against the ground truth.}
\label{fig:permutation_bound}
\end{figure}

\paragraph{\textbf{Performance}} We conducted experiments on the \texttt{dog-fish} dataset to compare the runtime of the exact algorithm and our improved MC method. We took $\epsilon=0.01$ and $\delta=0.01$ in the approximation algorithm and used the heuristic to decide the stopping iteration.

Figure~\ref{fig:runtime_weighted} compares the runtime of the exact algorithm and our improved MC approximation for weighted $K$NN classification. In the first plot, we fixed $K=3$ and varied the number of training points. In the second plot, we set the training size to be $100$ and changed $K$. We can see that the runtime of the exact algorithm exhibits polynomial and exponential growth with respect to the training size and $K$, respectively. By contrast, the runtime of the approximation algorithm increases slightly with the number of training points and remains unchanged for different values of $K$.

\begin{figure}[t!]
\centering
\includegraphics[width=\columnwidth]{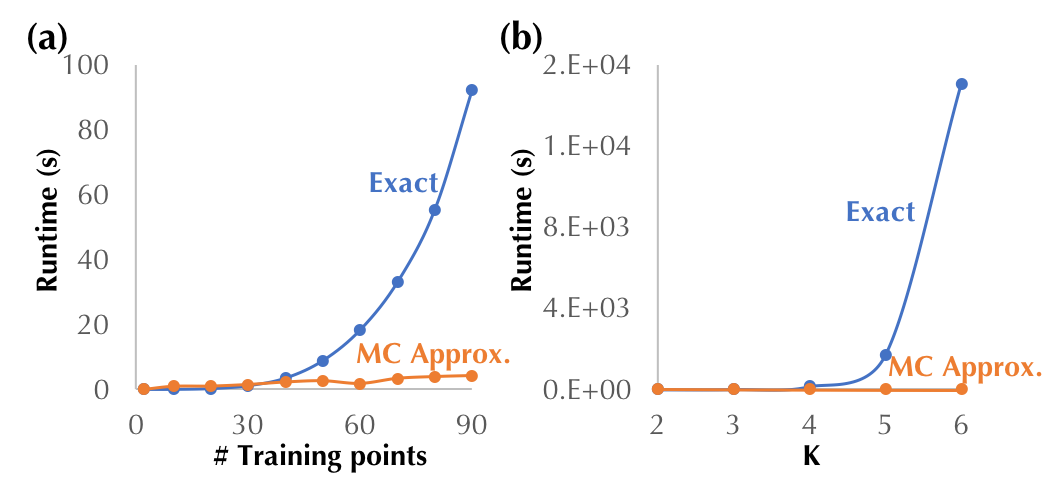}
\caption{Performance of the weighted $K$NN classification. }
\label{fig:runtime_weighted}
\end{figure}

Figure~\ref{fig:runtime_multi} compares the runtime of the exact algorithm and the MC approximation for the unweighted $K$NN classification when each seller can own multiple data instances. To generate Figure~\ref{fig:runtime_multi} (a), we set $K=2$ and varied the number of sellers. We kept the total number of training instances of all sellers constant and randomly assigned the same number of training instances to each seller. We can see that the exact calculation of the SV in the multi-data-per-seller case has polynomial time complexity, while the runtime of the approximation algorithm barely changes with the number of sellers. Since the training data in our approximation algorithm were sequentially inserted into a heap, the complexity of the approximation algorithm is mainly determined by the total number of training data held by all sellers. Moreover, as we kept the total number of training points constant, the approximation algorithm appears invariant over the number of sellers. Figure~\ref{fig:runtime_multi} (b) shows that the runtime of exact algorithm increases with $K$, while the approximation algorithm's runtime is not sensitive to $K$. To summarize, the approximation algorithm is preferable to the exact algorithm when the number of sellers and $K$ are large.

\begin{figure}[t!]
\centering
\includegraphics[width=\columnwidth]{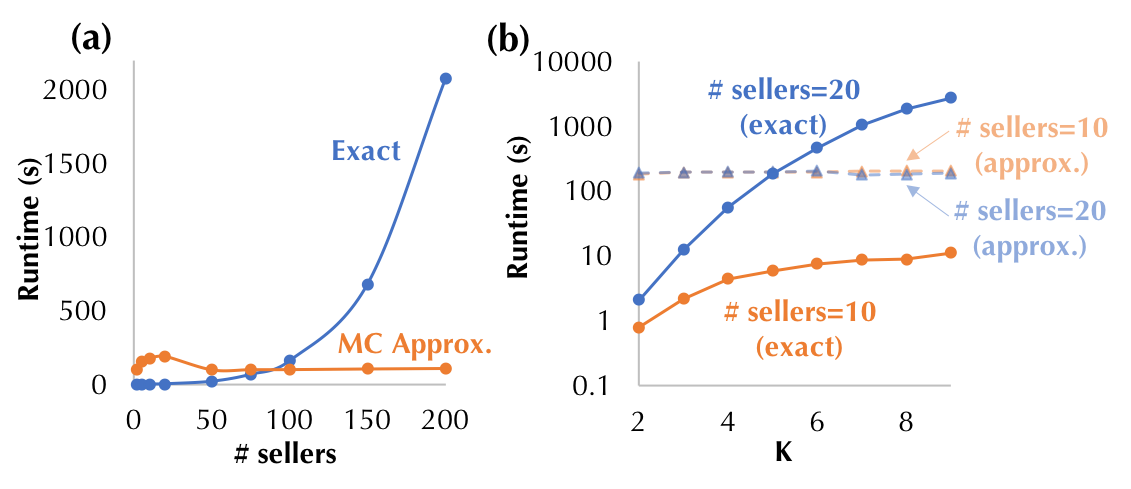}
\caption{Performance of the $K$NN classification in the multi-data-per-seller case. }
\label{fig:runtime_multi}
\end{figure}

\paragraph{\textbf{Unweighted vs. weighted $K$NN SV}} We constructed an unweighted $K$NN classifier using the \texttt{dog-fish}. Figure~\ref{fig:weighted_unweighted_class} (a) illustrates the training points with top $K$NN SVs with respect to a specific test image. We see that the returned images are semantically correlated with the test one. We further trained a weighted $K$NN on the same training set using the weight function that weighs each nearest neighbor inversely proportional to the distance to a given test point; and compared the SV with the ones obtained from the unweighted $K$NN classifier. We computed the average SV across all test images for each training point and demonstrated the result in Figure~\ref{fig:weighted_unweighted_class} (b). Every point in the figure represents the SVs of a training point under the two classifiers. We can see that the unweighted $K$NN SV is close to the weighted one. This is because in the high-dimensional feature space, the distances from the retrieved nearest neighbors to the query point are large, in which case the weights tend to be small and uniform. 

\begin{figure}[t!]
\centering
\includegraphics[width=\columnwidth]{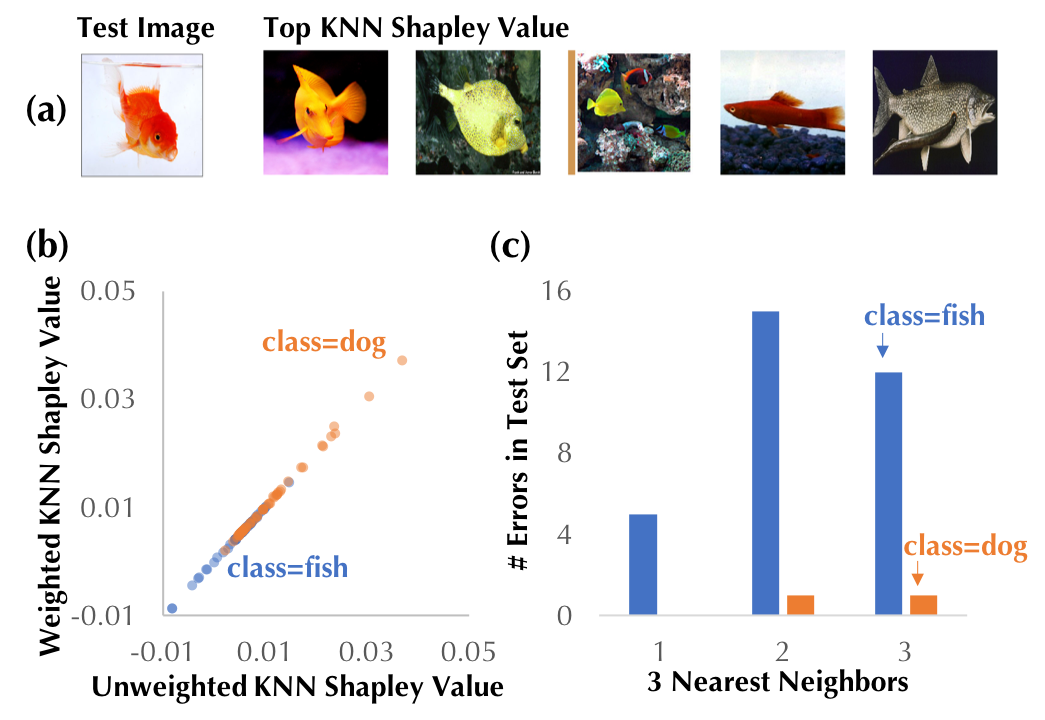}
\caption{Data valuation on \textsc{Dog-Fish} dataset ($K=3$). (a)
top valued data points; (b) unweighted vs. weighted $K$NN SV on the whole test set; (c) Per-class top-$K$ neighbors labeled inconsistently with the misclassified test example.}
\label{fig:weighted_unweighted_class}
\end{figure}

Another observation from Figure~\ref{fig:weighted_unweighted_class} (b) is that the $K$NN SV assigns more values to dog images than fish images.
Figure~\ref{fig:weighted_unweighted_class} (c) plots the distribution of the number test examples with regard to the number of their top-$K$ neighbors in the training set are with a label inconsistent with the true label of the test example. We see that most of the nearest neighbors with inconsistent labels belong to the fish class. In other words, the fish training images are more close to the dog images in the test set than the dog training images to the test fish. Thus, the fish training images are more susceptible to mislead the predictions and should have lower values. This intuitively explains why the $K$NN SV places a higher importance on the dog images.

\paragraph{\textbf{Data-only vs. composite game}} We introduced two game-theoretic models for distributing the gains from an ML model and would like to understand how the shares of the analyst and the data contributors differ in the two models. We constructed an unweighted $K$NN classifier with $K=10$ on the \texttt{dog-fish} dataset and compute the SV of each player in the data-only and the composite game. Recall that the total utility of both games is defined as the average test accuracy trained on the full set of training data. Figure~\ref{fig:unweighted_class_dataonly_general} (a) shows that the SV for the analyst increases with the total utility. Therefore, under the composite game formulation, the analyst has huge incentive to train a good ML model as the values assigned to the analyst gets larger with a better ML model. In addition, in the composite game formulation, the analyst has exclusive control over the computational resources and the data only creates value when it is analyzed with computational modules, the analyst should take the greatest share of the utility extracted from the ML model. This intuition is reflected in Figure~\ref{fig:unweighted_class_dataonly_general} (a). Figure~\ref{fig:unweighted_class_dataonly_general} (b) demonstrates that the SV of the data contributors in the composite game is correlated with that in the data-only game, although the actual value is much smaller. Figure~\ref{fig:unweighted_class_dataonly_general} (c) exhibits the trend of the SV of the analyst and data contributors as more data contributors participate in a data transaction. The SV of the analyst gets larger with more data contributors, while the average value obtained by each data contributor decreases in both composite and data-only games. Figure~\ref{fig:unweighted_class_dataonly_general} (d) zooms into the change of the maximum and minimum value among all data contributors in the data-only game setting (the result in the composite game setting is similar). We can see that both the maximum and minimum value decreases at the beginning; as more data contributors are involved in a data transaction, the minimum value demonstrates a small increment. The points with lowest values tend to hurt the ML model performance when they are added into the training set. With more data contributors and more training points, the negative impacts of these ``outliers'' can get mitigated.

\begin{figure}[t!]
\centering
\includegraphics[width=\columnwidth]{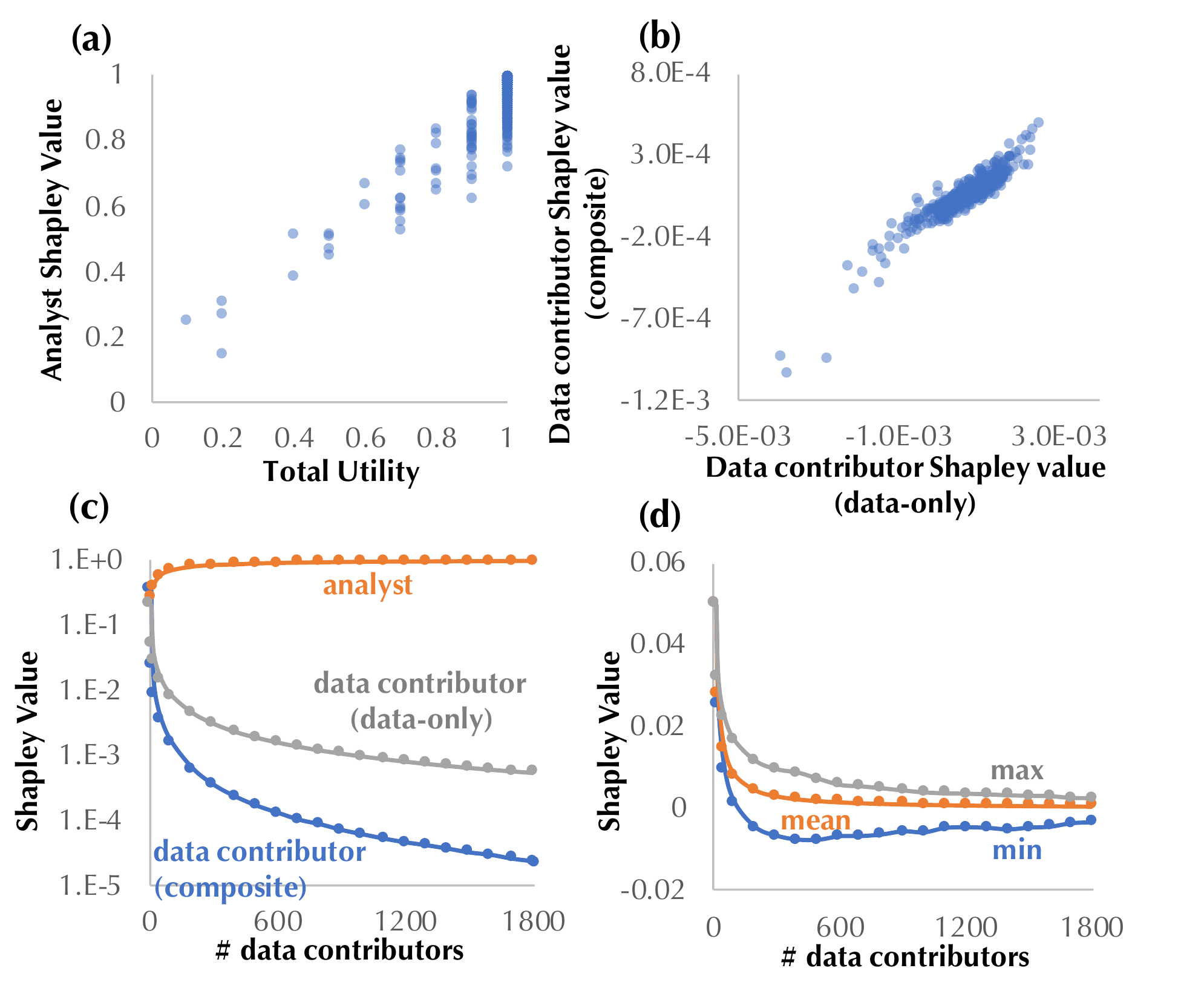}
\caption{(a) The SV of the analyst in the composite game vs. total utility obtained from the ML model; (b) the correlation between the data contributors' SV in the composite game with that in the data-only game; (c) The SV of all players in the two games for different number of data contributors; (d) The mean, maximum, minimum of the data contributors' SVs in the data-only game.}
\label{fig:unweighted_class_dataonly_general}
\end{figure}

\paragraph{\textbf{Remarks}} We summarize several takeaways from our experimental evaluation. (1) For unweighted $K$NN classifiers, the LSH-based approximation is more preferable than the exact algorithm when a moderate amount of approximation error can be tolerated and $K$ is relatively small. Otherwise, it is recommended to use the exact algorithm as a default approach for data valuation. (2) For weighted $K$NN regressors or classifiers, computing the exact SV has $\OO(N^K)$ compleixty, thus not scalable for large datasets and large $K$. Hence, it is recommended to adopt the Monte Carlo method in Algorithm~\ref{alg:sampling}. Moreover, using the heuristic based on the change of SV estimates in two consecutive iterations to decide the termination point of the algorithm is much more efficient than using the theoretical bounds, such as Hoeffding or Bennett. 


\section{Discussion}
\label{section:discussion}

\paragraph{\textbf{From the $K$NN SV to Monetary Reward}} Thus far, we have focused on the problem of attributing the $K$NN utility and its extensions to each data and computation contributor. In practice, the buyer pays a certain amount of money depending on the model utility and it is required to determine the share of each contributor in terms of monetary rewards. Thus, a remaining question is how to map the $K$NN SV, a share of the total model utility, to a share of the total revenue acquired from the buyer. A simple method for such mapping is to assume that the revenue is an affine function of the model utility, i.e., $R(S) = a\nu(S)+b$ where $a$ and $b$ are some constants which can be determined via market research. Due to the additivity property, we have $s(R, i) = as(\nu,i)+b$. Thus, we can apply the same affine function to the $K$NN SV to obtain the the monetary reward for each contributor.

\begin{figure}
    \centering
\includegraphics[width=0.5\textwidth]{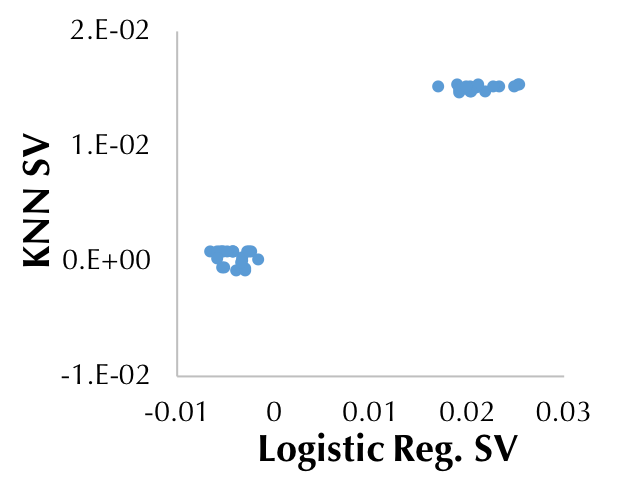}
\caption{Comparison of the SV for a logistic regression and a $K$NN trained on the \texttt{Iris} dataset.}
    \label{fig:knn_logistic}
\end{figure}


\paragraph{\textbf{Computing the SV for Models Beyond $K$NN}} The efficient algorithms presented in this paper are possible only because of the ``locality'' property of $K$NN. However, given many previous empirical results showing that a $K$NN classifier can often achieve a classification accuracy that is comparable with classifiers such as SVMs and logistic regression given sufficient memory, we could use the $K$NN SV as a proxy for other classifiers. We compute the SV for a logistic regression classifier and a $K$NN classifier trained on the same dataset namely \texttt{Iris}, and the result shows that the SVs under these two classifiers are indeed correlated (see Figure~\ref{fig:knn_logistic}). The only caveat is that $K$NN SV does not distinguish between neighboring data points that have the same label. If this caveat is acceptable, we believe that the $K$NN SV provides an efficient way to approximately assess the relative contribution of different data points for other classifiers as well. Moreover, for calculating the SV for general deep neural networks, we can take the deep features (i.e., the input to the last softmax layer) and corresponding labels, and train a $K$NN classifier on the deep features. We calibrate $K$ such that the resulting $K$NN mimics the performance of the original deep net and then employ the techniques presented in this paper to calculate a surrogate for the SV under the deep net.

\paragraph{\textbf{Implications of Task-Specific Data Valuation}} Since the SV depends on the utility function associated with the game, data dividends based on the SV are contingent on the definition of model usefulness in specific ML tasks. The task-specific nature of our data valuation framework offers clear advantages---it allows to accommodate the variability of a data point's utility from one application to another and assess its worth accordingly. Moreover, it enables the data buyer to defend against \emph{data poisoning attacks}, wherein the attacker intentionally contributes adversarial training data points crafted specifically to degrade the performance of the ML model. In our framework, the ``bad'' training points will naturally have low SVs because they contribute little to boosting the performance of the model. \\
Having the data values dependent on the ML task, on the other hand, may raise some concerns about whether the data values may inherit the flaws of the ML models as to which the values are computed: if the ML model is biased towards a subpopulation with specific sensitive attributes (e.g., gender, race), will the data values reflect the same bias? Indeed, these concerns can be addressed by designing proper utility functions that devalue the unwanted properties of ML models. For instance, even if the ML model may be biased towards specific subpopulation, the buyer and data contributors can agree on a utility function that gives lower score to unfair models and compute the data values with respect to the concordant utility function. In this case, the training points will be appraised partially according to how much they contribute to improving the model fairness and the resulting data values would not be affected by the bias of the underlying model.
Moreover, there is a venerable line of works studying algorithms to help improve fairness~\cite{zemel2013learning,woodworth2017learning,hardt2016equality}. These algorithms can also be applied to resolve the potential bias in value assignments. For instance, before providing the data to the data buyer, data contributors can preprocess the training data so that the ``sanitized'' data removes the information correlated with sensitive attributes~\cite{zemel2013learning}. However, to ensure that the data values are accurately computed according to an appropriate utility function that the buyer and the data contributors agree on or that the models are trained with proper fairness criteria, it is necessary to develop systems that can support transparent machine learning processes. Recent work has been studying training machine learning models on blockchains for removing the middleman to audit the model performance and enhancing transparency~\cite{blockchain}. We are currently implementing the data valuation framework on a blockchain-based data market, which can naturally resolve the problems of transparency and trust. Since the focus of this work is the algorithmic foundation of data valuation, we will leave the discussion of the combination of blockchains and data valuation for future work.

\section{Related Work}
\label{section:related_work}
The problem of data pricing has received a lot of attention recently. The pricing schemes deployed in the existing data marketplaces are simplistic, typically setting a fixed price for the whole or parts of the dataset. Before withdrawn by Microsoft, the Azure Data Marketplace adopted a subscription model that gave users access to a certain number of result pages per month~\cite{koutris2015query}. 
Xignite~\cite{xignite} sells financial datasets and prices data based on the data type, size, query frequency, etc. 

There is rich literature on query-based pricing~\cite{koutris2015query,koutris2013toward,koutris2012querymarket,deep2017qirana,lin2014arbitrage,li2012pricing,upadhyaya2016price}, aimed at design pricing schemes for fine-grained queries over a dataset. In query-based pricing, a seller can assign prices to a few views and the price for any queries purchased by a buyer is automatically derived from the explicit prices over the views. Koutris et al.~\cite{koutris2015query} identified two important properties that the pricing function must satisfy, namely, arbitrage-freeness and discount-freeness. The arbitrage-freeness indicates that whenever
query $Q_1$ discloses more information than query $Q_2$, we want to ensure that the price of $Q_1$ is higher than $Q_2$; otherwise, the data buyer has an arbitrage opportunity to purchase the desired information at a lower price. The discount-freeness requires that the prices offer no additional discounts than the ones specified by the data seller. The authors further proved the uniqueness of the pricing function with the two properties, and established a dichotomy on the complexity of the query pricing problem when all views are selection queries. Li et al.~\cite{li2012pricing} proposed additional criteria for data pricing, including non-disclosiveness (preventing the buyers from inferring unpaid query answers by analyzing the publicly available prices of queries) and regret-freeness (ensuring that the price of asking a sequence of queries in multiple interactions is not higher than asking them all-at-once), and investigated the class of pricing functions that meet these criteria. Zheng et al.~\cite{zheng2019arete} studied how data uncertainty should affect the price of data, and proposed a data pricing framework for mobile crowd-sensed data. Recent work on query-based pricing focuses on enabling efficient pricing over a wider range of queries, overcoming the issues such as double-charging arising from building practical data marketplaces~\cite{koutris2013toward,deep2017qirana,upadhyaya2016price}, and compensating data owners for their privacy loss~\cite{li2017theory}. Due to the increasing pervasiveness of ML-based analytics, there is an emerging interest in studying the cost of acquiring data for ML. Chen et al.~\cite{chen2018model,chen2017model} proposed a formal framework to price ML model instances, wherein an optimization problem was formulated to find the arbitrage-free price that maximizes the revenue of a seller. The model price can be also used for pricing its training dataset. This paper is complementary to these works in that we consider the scenario where the training set is contributed by multiple sellers and focus on the revenue sharing problem thereof.


While the interaction between data analytics and economics has been extensively studied in the context of both relational database queries and ML, few works have dived into the vital problem of allocating revenues among data owners. \cite{koutris2012querymarket} presented a technique for fair revenue sharing when multiple sellers are involved in a relational query. By contrast, our paper focuses on the revenue allocation for nearest neighbor algorithms, which are widely adopted in the ML community. Moreover, our approach establishes a formal notion of fairness based on the SV. The use of the SV for pricing personal data can be traced back to~\cite{kleinberg2001value}, which studied the SV in the context of marketing survey, collaborative filtering, and recommendation systems. \cite{chessa2017cooperative} also applied the SV to quantify the value of personal information when the population of data contributors can be modeled as a network. \cite{michalak2013efficient} showed that for specific network games, the exact SV can be computed efficiently.

There exist various methods to rank the importance of training data, which can also potentially be used for data valuation. For instance, influence functions~\cite{koh2017understanding} approximate the change of the model performance after removing a training point for smooth parametric ML models. Ogawa et al.~\cite{ogawa2013safe} proposed rules to identify and remove the least influential data when training support vector machines (SVM) to reduce the computation cost. 
However, unlike the SV, these approaches do not satisfy the group rationality, fairness, and additivity properties simultaneously.

Despite the desirable properties of the SV, computing the SV is known to be expensive. In its
most general form, the SV can be $\mathsf{\#P}$-complete to compute~\cite{deng1994complexity}. For bounded utility functions, Maleki et al.~\cite{maleki2013bounding} described a sampling-based approach that requires $\mathcal{O}(N\log N)$ samples to achieve a desired approximation error. By taking into account special properties of the utility function, one can derive more efficient approximation algorithms. For instance, Fatima et al.~\cite{fatima2008linear} proposed a probabilistic approximation algorithm with $\mathcal{O}(N)$ complexity for weighted voting games. Ghorbani et al.~\cite{ghorbani2019data} developed two heuristics to accelerate the estimation of the SV for complex learning algorithms, such as neural networks. One is to truncate the calculation of the marginal contributions as the change in performance by adding only one
more training point becomes smaller and smaller. Another is to use one-step gradient to approximate the marginal contribution. The authors also demonstrate the use of the approixmate SV for outlier identification and informed acquisition of new training data. However, their algorithms do not provide any guarantees on the approximation error, thus limiting its viability for practical data valuation. Raskar et al~\cite{raskar2019data} presented a taxonomy of data valuation problems for data markets and discussed challenges associated with data sharing. 


\section{Conclusion}
The SV has been long advocated as a useful economic concept to measure data value but has not found its way into practice due to the issue of exponential computational complexity. This paper presents a step towards practical algorithms for data valuation based on the SV. We focus on the case where data are used for training a $K$NN classifier and develop algorithms that can calculate data values exactly in quasi-linear time and approximate them in sublinear time. We extend the algorithms to the case of $K$NN regression, the situations where a contributor can own multiple data points, and the task of valuing data contributions and analytics simultaneously. 
For future work, we will integrate our proposed data valuation algorithms into the clinical data market that we are currently building. We will also explore efficient algorithms to compute the data values for other popular ML algorithms such as gradient boosting, logistic regression, and deep neural networks.

\section*{Acknowledgement}
This work is supported in part by the Republic of Singapore’s National Research
Foundation through a grant to the Berkeley Education Alliance for Research in
Singapore (BEARS) for the Singapore-Berkeley Building Efficiency and
Sustainability in the Tropics (SinBerBEST) Program. This work is also supported in part by the CLTC (Center for Long-Term
Cybersecurity); FORCES (Foundations Of Resilient
CybEr-Physical Systems), which receives support from
the National Science Foundation (NSF award numbers
CNS-1238959, CNS-1238962, CNS-1239054, CNS1239166); the National Science Foundation under
Grant No. TWC-1518899; and DARPA FA8650-18-2-7882. 
CZ and the DS3Lab gratefully acknowledge the support from the Swiss National Science Foundation (Project Number 200021\_184628) and a Google Focused Research Award.








\bibliography{aomsample}
\bibliographystyle{aomalpha}

\onecolumn

\appendix

\section{Additional Experiments}

\subsection{Runtime Comparision for Computing the Unweighted $K$NN SV}
\label{appendix:runtime_k25}

For each dataset, we randomly selected $100$ test points, computed the SV of all training points with respect to each test point, and reported the average runtime across all test points. The results for $K=2,5$ are presented in Figure~\ref{table:runtime_k_25}. We can see that the LSH-based method can bring a $3\times$-$5\times$ speed-up compared with the exact algorithm. 

\begin{figure}[h!]
\caption{Average runtime of the exact and the LSH-based approximation algorithm for computing the unweighted $K$NN SV for a single test point. We take $\epsilon,\delta=0.1$ and $K=2,5$.}
\label{table:runtime_k_25}
\begin{tabular}{ccccccc}
\hline
\multirow{2}{*}{\textbf{Dataset}} & \multirow{2}{*}{\textbf{Size}} & \multirow{2}{*}{\textbf{\begin{tabular}[c]{@{}c@{}}Estimated\\ Contrast\end{tabular}}} & \multicolumn{2}{c}{\textbf{K=2}}                                                                                                      & \multicolumn{2}{c}{\textbf{K=5}}                                                                                                      \\ \cline{4-7} 
                                  &                                &                                                                                        & \textbf{\begin{tabular}[c]{@{}c@{}}Runtime\\ (Exact)\end{tabular}} & \textbf{\begin{tabular}[c]{@{}c@{}}Runtime\\ (LSH)\end{tabular}} & \textbf{\begin{tabular}[c]{@{}c@{}}Runtime\\ (Exact)\end{tabular}} & \textbf{\begin{tabular}[c]{@{}c@{}}Runtime\\ (LSH)\end{tabular}} \\ \hline
\texttt{CIFAR-10  }                        & $6$E+$4$                       & 1.2802                                                                                 & 0.83                                                               & 0.25                                                             & 0.82                                                               & 0.26                                                             \\ \hline
\texttt{ImageNet}                          & $1$E+$6$                       & 1.2163                                                                                 & 12.71                                                              & 3.29                                                             & 12.57                                                              & 3.25                                                             \\ \hline
\texttt{Yahoo10m}                          & $1$E+$7$                       & 1.3456                                                                                 & 198.73                                                             & 41.83                                                            & 200.06                                                             & 39.20                                                            \\ \hline
\end{tabular}

\end{figure}


    
    

\section{Proof of Lemma~\ref{lm:shapley_diff}}

\begin{proof}
\begin{align}
 &s_i-s_j \\
 &= \sum_{S\subseteq I\setminus\{i\}} \frac{|S|!(N-|S|-1)!}{N!} \big[\nu(S\cup \{i\})-\nu(S)\big]- \sum_{S\subseteq I\setminus\{j\}} \frac{|S|!(N-|S|-1)!}{N!} \big[\nu(S\cup \{j\})-\nu(S)\big]   \\
 &=\sum_{S\subseteq I\setminus\{i,j\}} \frac{|S|!(N-|S|-1)!}{N!} \big[\nu(S\cup\{i\}) - \nu(S\cup \{j\})\big] \nonumber\\
 &\quad\quad+\sum_{S\in\{T|T\subseteq I,i\notin T, j\in T\}}  \frac{|S|!(N-|S|-1)!}{N!}   \big[\nu(S\cup \{i\})-\nu(S)\big]\nonumber \\
 &\quad\quad-\sum_{S\in\{T|T\subseteq I,i\in T, j\notin T\}} \frac{|S|!(N-|S|-1)!}{N!} \big[\nu(S\cup \{j\})-\nu(S)\big]\\
 &=\sum_{S\subseteq I\setminus\{i,j\}} \frac{|S|!(N-|S|-1)!}{N!} \big[\nu(S\cup\{i\}) - \nu(S\cup \{j\})\big] \nonumber\\
 &\quad\quad+ \sum_{S'\subseteq I\setminus\{i,j\}} \frac{(|S'|+1)!(N-|S'|-2)!}{N!} \big[\nu(S'\cup\{i\}) - \nu(S'\cup \{j\})\big]\\
 &= \sum_{S\subseteq I\setminus\{i,j\}} \big( \frac{|S|!(N-|S|-1)!}{N!} + \frac{(|S|+1)!(N-|S|-2)!}{N!}\big)  \big[\nu(S\cup\{i\}) - \nu(S\cup \{j\})\big]\\
 &=\frac{1}{N-1}  \sum_{S\subseteq I\setminus\{i,j\}} \frac{1}{C_{N-2}^{|S|}}\big[\nu(S\cup\{i\}) - \nu(S\cup \{j\})\big] \, .
\end{align}
\end{proof}

\section{Proof of Theorem~\ref{thm:knn_shapley_approx}}

\begin{proof}
We first observe that if the true Shapley value $|s_{\alpha_i}|\leq \min(\frac{1}{i},\frac{1}{K})$, then $|s_i|\leq \epsilon$ for $i\geq i^*=\max(K,\ceil*{1/\epsilon})$. Hence, when $i\geq i^*$, the approximation error is given by
\begin{align}
   |\hat{s}_{\alpha_i}-s_{\alpha_i}| = | s_{\alpha_i}|\leq \epsilon.
\end{align}
When $i\leq i^*-1$, $\hat{s}_{\alpha_i}$ and $s_{\alpha_i}$ follow the same recursion, i.e.,
\begin{align}
  \hat{s}_{\alpha_i}-\hat{s}_{\alpha_{i+1}}= s_{\alpha_i}-s_{\alpha_{i+1}} = \frac{\mathbbm{1}[y_{\alpha_{i}}=y_\text{test}]-\mathbbm{1}[y_{\alpha_{i+1}}=y_\text{test}]}{K} \frac{\min(K-1,i-1)+1}{i}.
\end{align} 
As a result, we have
\begin{align}
    |\hat{s}_{\alpha_i}-s_{\alpha_i}| = |\hat{s}_{\alpha_{i+1}}-s_{\alpha_{i+1}}|=\cdots =|\hat{s}_{\alpha_{i^*}}-s_{\alpha_{i^*}}| \leq \epsilon
\end{align}
To sum up, $|\hat{s}_{\alpha_i}-s_{\alpha_i}|\leq \epsilon$ for all $i=1,\ldots,N$, provided that $|s_{\alpha_i}|\leq \min(\frac{1}{i},\frac{1}{K})$. In the following, we will prove that the aforementioned condition is satisfied.

We can convert the recursive expression of the $K$NN Shapley value in Theorem~\ref{thm:KNN_unweighted_class} to a non-recursive one:
\begin{align}
    &s_{\alpha_N} = \frac{\mathbbm{1}[y_{\alpha_N}=y_\text{test}]}{N}\\
    \label{eqn:knn_shapley_greater_K}
    &s_{\alpha_i} = \frac{\mathbbm{1}[y_{\alpha_i}=y_\text{test}]}{i} - \sum_{j=i+1}^{N}\frac{\mathbbm{1}[y_{\alpha_{j}}=y_\text{test}]}{j(j-1)} \text{ for $i\geq K$}\\
    \label{eqn:knn_shapley_less_K}
    &s_{\alpha_i} = \frac{\mathbbm{1}[y_{\alpha_i}=y_\text{test}]}{K} - \sum_{j=K+1}^{N}\frac{\mathbbm{1}[y_{\alpha_{j}}=y_\text{test}]}{j(j-1)} \text{ for $i\leq K-1$}
\end{align}

We examine the bound on the absolute value of the Shapley value in three cases: (1) $i=N$, (2) $i\geq K$, and (3) $i\leq K-1$.

\textbf{Case (1).} It is easy to verify that $|s_{\alpha_N}|\leq \frac{1}{N}$.

\textbf{Case (2).} We can bound the second term in (\ref{eqn:knn_shapley_greater_K}) by
\begin{align}
\label{eqn:bound_second_term}
    0 \leq \sum_{j=i+1}^{N}\frac{\mathbbm{1}[y_{\alpha_{j}}=y_\text{test}]}{j(j-1)} \leq \sum_{j=i+1}^N \frac{1}{j(j-1)}  = \sum_{j=i+1}^N (\frac{1}{j-1} - \frac{1}{j}) = \frac{1}{i}-\frac{1}{N}
\end{align}
Thus, $s_{\alpha_i}$ can be bounded by
\begin{align}
    -(\frac{1}{i}-\frac{1}{N})\leq s_{\alpha_i} \leq \frac{1}{i},
\end{align}
which yields the bound on the absolute value of $s_{\alpha_i}$:
\begin{align}
    |s_{\alpha_i}|\leq \frac{1}{i}.
\end{align}

\textbf{Case (3).} The absolute value of $s_{\alpha_i}$ for $i\leq K-1$ can be bounded using a similar technique as in Case (2). By (\ref{eqn:knn_shapley_less_K}), we have
\begin{align}
   -(\frac{1}{K}-\frac{1}{N})\leq s_{\alpha_i} \leq \frac{1}{K}
\end{align}
Therefore, $|s_{\alpha_i}|\leq 1/K$.

Summarizing the results in Case (1), (2), and (3), we obtain $|s_{\alpha_i}|\leq \min(1/i,1/K)$ for $i=1,\ldots,N$.

\end{proof}

\section{Proof of Theorem~\ref{thm:lsh_complexity}}

\begin{proof}
For the hashing function $h(x)=\floor*{\frac{w^Tx+b}{t}}$, \cite{datar2004locality} have shown that
\begin{align}
\label{eqn:lhs_l2}
    P(h(x_i)=h(x_\text{test}))=f_h(\|x_i-x_\text{test}\|_p)
\end{align}
where the function $f_h(a)=\int_0^t \frac{1}{a}f_p(\frac{z}{a}(1-\frac{z}{t})dz$ is monotonically decreasing with $a$. $f_p$ is the probability density function of the absolute value of a $p$-stable random variable.

Suppose the data are normalized by a factor such that $D_{\text{mean}}=1$. Since such a normalization does not change the nearest neighbor search results, $D_{k}=1/C_k$ for $k=1,\ldots,K$. Denote the probability for one random test point $x_\text{test}$ and a random training point to have the same code with one hash function by $p_\text{rand}$ and the probability for $x_\text{test}$ and its $k$-nearest neighbor to have the same code by $p_\text{nn}^k$. According to (\ref{eqn:lhs_l2}), 
\begin{align}
    p_\text{rand} = f_h(1)
\end{align}
and 
\begin{align}
    p_{\text{nn},k} = f_h(1/C_k)
\end{align}
because the expected distance between $x_\text{test}$ and a random training point is $D_\text{mean}=1$, and the expected distance between $x_\text{test}$ and its $k$-nearest neighbor is $1/C_k$. 

Let $E_k$ denote the event that the $k$-nearest neighbor of $x_\text{test}$ is included by one of the hash tables. Then, the probability of the inclusion of all $K$ nearest neighbors is
\begin{align}
    P(E_1,\ldots,E_K)& = 1 - P(\cup_{k=1}^K \bar{E}_k) \\
    &\geq 1 - \sum_{k=1}^K P(\bar{E}_k).
\end{align}
We want to make sure that $P(E_1,\ldots,E_K) \geq 1-\delta$, so it suffices to let $P(\bar{E}_k)\leq \delta/K$ for all $k=1,\ldots,K$.

Suppose there are $m$ hash bits in one table and $l$ hash tables in LSH. The probability that the true $k$-nearest neighbor has the same code as the query in one hash table is $p_{\text{nn},k}^m$. Hence, the probability that the true $k$-nearest neighbor is missed by $l$ hash tables is $P(\bar{E}_k) = (1-p_{\text{nn},k}^m)^l$. In order to ensure $P(\bar{E}_k)\leq \delta/K$, we need 
\begin{align}
\label{eqn:num_hash_tables}
    l\geq \frac{\log \frac{\delta}{K}}{\log (1-p_{\text{nn},k}^m)} 
\end{align}
The RHS is upper bounded by $ \frac{-\log \frac{\delta}{K}}{p_{\text{nn},k}^m} = p_{\text{nn},k}^{-m}\log \frac{K}{\delta}$. Therefore, it suffices to ensure 
\begin{align}
    l\geq p_{\text{nn},k}^{-m}\log \frac{K}{\delta}
\end{align}
 Note that $p_{\text{nn},k} = p_\text{rand}^{\frac{\log p_{\text{nn},k}}{\log p_\text{rand}}}$ and we can choose $Np_\text{rand}^m = \OO(1)$, i.e., $m=\OO(\frac{\log N}{\log p_\text{rand}^{-1}})$, as discussed in~\cite{gionis1999similarity}.  Hence, \begin{align}
\label{eqn:prob_nn}
   p_{\text{nn},k}^m =  p_\text{rand}^{m\frac{\log p_{\text{nn},k}}{\log p_\text{rand}}} = \OO((\frac{1}{N})^{\frac{\log p_{\text{nn},k}}{\log p_\text{rand}}}) = \OO(N^{-g(C_k)})
\end{align}
where $g(C_k) =\frac{\log p_{\text{nn},k}}{\log p_\text{rand}} = \frac{\log f_h(1/C_k)}{\log f_h(1)} $. Plugging (\ref{eqn:prob_nn}) into (\ref{eqn:num_hash_tables}), we obtain
\begin{align}
    l\geq \OO(N^{g(C_k)}\log \frac{K}{\delta})
\end{align}

In order to guarantee $P(\bar{E}_k)\leq \delta/K$ for all $k=1,\cdots, K$, the number of hash tables needed is 
\begin{align}
   \OO( N^{g(C_K)} \log \frac{K}{\delta})
\end{align}


\end{proof}

\section{Detailed Algorithms and Proofs for the Extensions}
We first extend the algorithms to calculate the SV for unweighted $K$NN regression and weighted $K$NN.
Further, we address the data valuation problem wherein a data curator contributes multiple data points. We then discuss how to valuate the parties offering computation in the data market.

\subsection{Unweighted $K$NN Regression}
\label{section:appendix_regression}
For regression tasks, we define the utility function by the negative mean square error of an unweighted $K$NN regressor:
\begin{align}
\label{eqn:utility_regression_dup}
 \nu(S) = -\bigg(\frac{1}{K}\sum_{k=1}^{\min\{K,|S|\}}  y_{\alpha_k(S)} - y_\text{test}\bigg)^2
\end{align}
The following theorem provides a simple iterative procedure to compute the SV for unweighted $K$NN regression. The derivation of the theorem requires to analyze the utility difference between two adjacent training points, similar to $K$NN classification. 


\begin{theorem}
\label{thm:KNN_unweighted_regression}
Consider the $K$NN regression utility function in (\ref{eqn:utility_regression_dup}). Then, the SV of each training point can be calculated recursively as follows:
\begin{align}
\label{eqn:KNN_regression_1}
&s_{\alpha_N} =-\frac{K-1}{NK}y_{\alpha_N}\bigg[\frac{1}{K}y_{\alpha_N}-2y_\text{test}+ \frac{1}{N-1}\sum_{l\in I\setminus\{N\}} y_{\alpha_l}\bigg] -\frac{1}{N}\bigg[\frac{1}{K} y_{\alpha_N} - y_\text{test}\bigg]^2
\end{align}
\begin{align}
\label{eqn:KNN_regression_2}
 &s_{\alpha_i} = s_{\alpha_{i+1}}+\frac{1}{K} (y_{\alpha_{i+1}} - y_{\alpha_i})\frac{\min\{K,i\}}{i}  (\frac{1}{K}\sum_{l=1}^N A_i^{(l)} y_{\alpha_l}-2y_\text{test})
\end{align}
where
\begin{align}
    A_i^{(l)}=\left\{\begin{array}{ll}
         \frac{\min\{K-1,i-1\}}{i-1}& \text{ if $1\leq l\leq i-1$}\\
         1& \text{ if $l\in\{i,i+1\}$}\\
          \frac{\min\{K,l-1\}\min\{K-1,l-2\} i}{(l-1)(l-2)\min\{K,i\}}& \text{ if $i+2\leq l \leq N$}
    \end{array}
    \right.
\end{align}
\end{theorem}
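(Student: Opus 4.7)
The plan is to mirror the proof strategy of Theorem~\ref{thm:KNN_unweighted_class}. Without loss of generality, relabel the training points so that $x_i=x_{\alpha_i}$, and for any $S\subseteq I\setminus\{i,i+1\}$ split $S=S_1\cup S_2$ with $S_1\subseteq\{1,\ldots,i-1\}$ and $S_2\subseteq\{i+2,\ldots,N\}$. By Lemma~\ref{lm:shapley_diff}, $s_i-s_{i+1}$ is a weighted sum of the differences $\nu(S\cup\{i\})-\nu(S\cup\{i+1\})$ over such $S$. Using the factorization $-a^2+b^2=(b-a)(b+a)$ with $a=\hat{y}(S\cup\{i\})-y_\text{test}$ and $b=\hat{y}(S\cup\{i+1\})-y_\text{test}$, where $\hat{y}(T)=\frac{1}{K}\sum_{k=1}^{\min(K,|T|)}y_{\alpha_k(T)}$, one sees that the factor $b-a$ vanishes when $|S_1|\geq K$ (both $i$ and $i+1$ then lie outside the top-$K$ neighbors, so $\hat{y}(S\cup\{i\})=\hat{y}(S\cup\{i+1\})=\hat{y}(S)$), while for $|S_1|<K$ swapping $i$ for $i+1$ changes only one slot of the $K$NN set and $\hat{y}(S\cup\{i+1\})-\hat{y}(S\cup\{i\})=(y_{i+1}-y_i)/K$. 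Pulling this common factor out of the sum gives
\[
s_i-s_{i+1}=\frac{y_{i+1}-y_i}{K(N-1)}\sum_{S:\,|S_1|<K}\frac{\hat{y}(S\cup\{i\})+\hat{y}(S\cup\{i+1\})-2y_\text{test}}{\binom{N-2}{|S|}}.
\]

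The next step is to extract, for each training label $y_l$ and for $y_\text{test}$, the total coefficient in this weighted sum; matching the result against the claimed recursion pins down $A_i^{(l)}$. The $-2y_\text{test}$ piece is exactly the sum already evaluated in the classification proof and equals $(N-1)\min\{K,i\}/i$, recovering the $-2y_\text{test}\min\{K,i\}/i$ term of the theorem. For $l\in\{i,i+1\}$ the label appears in exactly one of the two predictions, so its coefficient again reduces to the classification normalization and $A_i^{(l)}=1$. For $l\in\{1,\ldots,i-1\}$ the label contributes to both $\hat{y}(S\cup\{i\})$ and $\hat{y}(S\cup\{i+1\})$ whenever $l\in S_1$; factoring out the fixed index $l$ and applying the identity $\sum_{j=0}^v\binom{u}{a}\binom{v}{j}/\binom{u+v}{a+j}=(u+v+1)/(u+1)$ with shifted parameters yields $A_i^{(l)}=\min\{K-1,i-1\}/(i-1)$. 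The case $l\geq i+2$ is the main obstacle: the coefficient now depends on whether $l$ ranks among the top $K-|S_1|-1$ elements of $S_2$, so $S_2$ must be partitioned as $S_2\cap\{i+2,\ldots,l-1\}$ and $S_2\cap\{l+1,\ldots,N\}$ and the size of the former tracked explicitly. Applying the same binomial identity twice, once over each half of $S_2$, telescopes the resulting triple sum into the stated $A_i^{(l)}=\min\{K,l-1\}\min\{K-1,l-2\}\,i/[(l-1)(l-2)\min\{K,i\}]$.

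Finally, the base case $s_{\alpha_N}$ is evaluated directly from the Shapley definition. Because $x_{\alpha_N}$ is the farthest training point, $\nu(S\cup\{N\})=\nu(S)$ whenever $|S|\geq K$, so only subsets of size at most $K-1$ carry a nonzero marginal contribution. For such $S$ the update $\hat{y}(S\cup\{N\})=\hat{y}(S)+y_{\alpha_N}/K$ together with $-a^2+b^2=(b-a)(b+a)$ yields $\nu(S\cup\{N\})-\nu(S)=-(y_{\alpha_N}/K)^2-(2y_{\alpha_N}/K)(\hat{y}(S)-y_\text{test})$; the standard Shapley convention $\nu(\emptyset)=0$ supplies the additional $-y_\text{test}^2/N$ term visible in (\ref{eqn:KNN_regression_1}). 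Summing over $k=0,\ldots,K-1$ with weights $1/\binom{N-1}{k}$, and using $\sum_{|S|=k,\,S\not\ni N}\sum_{j\in S}y_{\alpha_j}=\binom{N-2}{k-1}\sum_{l\neq N}y_{\alpha_l}$ together with $\sum_{k=0}^{K-1}k=K(K-1)/2$, collects all the pieces into the closed form stated in the theorem.
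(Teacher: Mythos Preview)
Your proposal is correct and follows the same route as the paper: apply Lemma~\ref{lm:shapley_diff} to the difference-of-squares factorization, reduce to the case $|S_1|<K$, extract the coefficient of each $y_l$ separately, and compute $s_{\alpha_N}$ directly from the definition.

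One small caveat on the $l\geq i+2$ step: applying the classification identity ``twice, once over each half of $S_2$'' does not go through as stated. After you sum over $S_2''=S_2\cap\{l+1,\ldots,N\}$, the remaining inner sum over $S_2'=S_2\cap\{i+2,\ldots,l-1\}$ is still constrained by $|S_1|+|S_2'|\le K-2$, so it does not run over its full range and the identity cannot be applied a second time. The paper sidesteps this by merging $S_1$ and $S_2'$ into a single ``before~$l$'' block of size $l-3$ from the outset (the constraint then becomes a single bound $m\le K-2$), and then applies the variant identity $\sum_{j}\binom{u}{m}\binom{v}{j}/\binom{u+v+1}{m+j+1}=(m+1)(u+v+2)/[(u+2)(u+1)]$ once. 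Equivalently, after your first summation you can collapse $m_1$ and $m_2'$ via Vandermonde into $m=m_1+m_2'$ and then simplify directly. The same variant identity, not the classification one ``with shifted parameters,'' is what the paper uses for the $l\le i-1$ case as well.
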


According to (\ref{eqn:KNN_regression_2}), two adjacent training points will have the same SV if they have the same label. Otherwise, their SV difference will depend on three terms: (1) their difference in the labels $y_{\alpha_{i+1}}-y_{\alpha_i}$, (2) the rank of their distances to the test point $\frac{\min(K,i)}{i}$, and (3) the goodness of fit term $\frac{1}{K}\sum_{l=1}^N A_i^{(l)} y_{\alpha_l}-2y_\text{test}$ of a ``weighted'' $K$NN regression model in which $A_i^{(l)}$ stands for the weight. By simple algebraic operations, it can be obtained that $y_{\alpha_i}$ and $y_{\alpha_{i+1}}$ are weighted highest among all training points; therefore, the third term can be roughly thought of as how much error $y_{\alpha_i}$ and $y_{\alpha_{i+1}}$ induce for predicting $y_\text{test}$. If the goodness of fit term represents a positive error and $y_{\alpha_i} > y_{\alpha_{i+1}}$, then adding $(x_{\alpha_i},y_{\alpha_i})$ into the training dataset will even enlarge the positive prediction error. Thus, $(x_{\alpha_i},y_{\alpha_i})$ is less valuable than $(x_{\alpha_{i+1}},y_{\alpha_{i+1}})$ in terms of the SV. Similar intuition about the interaction between the first and third term can be established when $y_{\alpha_i} < y_{\alpha_{i+1}}$. Moreover, the training points closer to the test point are more influential to the prediction result; this phenomenon is captured by the second term. In summary, the SV difference between two adjacent training points is large when their labels differ largely, their distances to the test point are small, and their presence in the training set leads to large prediction errors.

\begin{proof}[Proof of Theorem~\ref{thm:KNN_unweighted_regression}]
W.l.o.g., we
assume that $x_1,\ldots, x_n$ are
sorted according to their similarity to $x_\text{test}$, that is, $x_i = x_{\alpha_i}$. We split a subset $S\subseteq I\setminus\{i,i+1\}$ into two disjoint sets $S_1$ and $S_2$ such that $S=S_1\cup S_2$ and $S_1\cap S_2 = \emptyset$. Given two neighboring points with indices $i,i+1\in I$, we constrain $S_1$ and $S_2$ to $S_1\subseteq\{1,\ldots,i-1\}$ and $S_2\subseteq\{i+2,\ldots,N\}$.

We analyze the difference between $s_i$ and $s_{i+1}$ by considering the following cases:

\textbf{Case 1.\quad} Consider the case $|S_1|\geq K$. We know that $i>K$ and therefore $\nu(S\cup \{i\}) = \nu(S\cup \{i+1\})= \nu(S)$. From Lemma~\ref{lm:shapley_diff}, it follows that
\begin{align*}
    &s_i - s_{i+1} = \frac{1}{N-1}\sum_{k=0}^{N-2} \frac{1}{{N-2\choose k}}\sum \limits_{ \substack{  S_1 \subseteq \{1,...,i-1\}, \\ S_2 \subseteq \{i+2,...,N\}: \\ |S_1| + |S_2|=k, |S_1| \geq K} } \bigg[\nu(S\cup\{i\}) - \nu(S\cup \{i+1\})\bigg] = 0.
\end{align*}

\textbf{Case 2.\quad} Consider the case $|S_1|<K$. The difference between $\nu(S\cup \{i\})$ and $\nu(S\cup \{i+1\})$ can be expressed as
\begin{align*}
   &\nu(S\cup \{i\}) -  \nu(S\cup \{i+1\}) \\ 
   =&(\frac{1}{K}\sum_{j=1}^K y_{\alpha_j(S\cup \{i+1\})} - y_\text{test})^2 - (\frac{1}{K} \sum_{j=1}^K y_{\alpha_j(S\cup \{i\})} - y_\text{test})^2\\
   =& \frac{1}{K} (y_{i+1} - y_i) \cdot \bigg( \frac{1}{K} (y_{i+1} + y_i)- 2y_\text{test}  +\frac{2}{K} \sum_{j=1,\ldots,K-1} y_{\alpha_j(S)}\bigg)
\end{align*}
By Lemma~\ref{lm:shapley_diff}, the Shapley difference between $i$ and $i+1$ is
\begin{align*}
    &s_i - s_{i+1} = \frac{1}{K} (y_{i+1} - y_i) \\
    &\cdot \bigg(\underbrace{\frac{1}{N-1}\sum_{k=0}^{N-2} \frac{1}{{N-2\choose k}} \sum \limits_{ \substack{  S_1 \subseteq \{1,...,i-1\}, \\ S_2 \subseteq \{i+2,...,N\}: \\ |S_1| + |S_2|=k, |S_1| \leq K-1} } \big(\frac{1}{K} (y_{i+1}+ y_i) - 2y_\text{test}\big)}_{U_1}\\
    &+\underbrace{\frac{2}{K}\frac{1}{N-1}\sum_{k=0}^{N-2} \frac{1}{{N-2\choose k}} \sum \limits_{ \substack{  S_1 \subseteq \{1,...,i-1\}, \\ S_2 \subseteq \{i+2,...,N\}: \\ |S_1| + |S_2|=k, |S_1| \leq K-1} }  \sum_{j=1,\ldots,K-1} y_{\alpha_j(S)}}_{U_2}\bigg)
\end{align*}
We firstly simplify $U_1$. Note that $\frac{1}{K} (y_{i+1}+ y_i) - 2y_\text{test}$ does not depend on the summation; as a result, we have
\begin{align}
    U_1 =& \big(\frac{1}{K} (y_{i+1}+ y_i) - 2y_\text{test}\big) \frac{1}{N-1}\sum_{k=0}^{N-2} \frac{1}{{N-2\choose k}} \bigg(\sum \limits_{ \substack{  S_1 \subseteq \{1,...,i-1\}, \\ S_2 \subseteq \{i+2,...,N\}: \\ |S_1| + |S_2|=k, |S_1| \leq K-1} } 1\bigg)\nonumber\\
    \label{eqn:U_1_half}
    =&\big(\frac{1}{K} (y_{i+1}+ y_i) - 2y_\text{test}\big) \frac{1}{N-1}\sum_{k=0}^{N-2} \frac{1}{{N-2\choose k}} \sum_{m = 0}^{\min(K-1,k)} \binom{i-1}{m}  \binom{N-i-1}{k-m}
\end{align}
The sum of binomial coefficients in (\ref{eqn:U_1_half}) can be further simplified as follows: 
\begin{align*}
 &\sum_{k=0}^{N-2} \frac{1}{{N-2\choose k}} \sum_{m = 0}^{\min(K-1,k)} \binom{i-1}{m}  \binom{N-i-1}{k-m} \\
 &= \sum_{m=0}^{\min(K-1,i-1)} \sum_{k=0}^{N-i-1} \frac{\binom{i-1}{m} \binom{N-i-1}{k}}   {\binom{N-2}{m+k}}\\
 &=\sum_{m=0}^{\min(K-1,i-1)} \frac{N-1}{i}\\
 &= \min(K,i)\frac{N-1}{i}
\end{align*}
where the second equality follows from the binomial coefficient identity $ \sum_{j=0}^M \frac{{N\choose i}{M\choose j}}{{N+M\choose i+j}} = \frac{M+N+1}{N+1}$. Hence,
\begin{align*}
    U_1 = \big(\frac{1}{K} (y_{i+1}+ y_i) - 2y_\text{test}\big) \frac{\min(K,i)}{i}
\end{align*}

Then, we analyze $U_2$. We let
\begin{align}
\label{eqn:U_2_half_sum}
    \sum \limits_{ \substack{  S_1 \subseteq \{1,...,i-1\}, \\ S_2 \subseteq \{i+2,...,N\}: \\ |S_1| + |S_2|=k, |S_1| \leq K-1} }  \sum_{j=1,\ldots,K-1} y_{\alpha_j(S)} = \sum_{l \in I\setminus\{i,i+1\}} c_l y_l 
\end{align}
where $c_l$ counts the number of occurrences of $y_l$ in the left-hand side expression and 
\begin{align}
\label{eqn:c_l}
    c_l = \left\{
    \begin{array}{l}
         \sum_{m=0}^{\min(K-2,k-1)} {i-2 \choose m} {N-i-1\choose k-m-1} \text{ if $l\in \{1,\ldots,i-1\}$} \\
         \sum_{m=0}^{\min(K-2,k-1)} {l-3 \choose m}{N-l\choose k-m-1}\text{ if $l\in \{i+2,\ldots,N\}$}
    \end{array}   
    \right.   
\end{align}

Plugging in (\ref{eqn:U_2_half_sum}) and (\ref{eqn:c_l}) into $U_2$ yields
\begin{align}
    U_2 &= \frac{2}{K(N-1)} \sum_{k=0}^{N-2} \frac{1}{{N-2\choose k}}\bigg[\sum_{l\in \{1,\ldots,i-1\}}  \sum_{m=0}^{\min(K-2,k-1)} {i-2 \choose m} {N-i-1\choose k-m-1} y_l \nonumber\\
    &\quad\quad\quad\quad+\sum_{l\in \{i+2,\ldots,N\}} \sum_{m=0}^{\min(K-2,k-1)} {l-3 \choose m}{N-l\choose k-m-1} y_l \bigg]  \nonumber\\
   &= \frac{2}{K(N-1)} \bigg[\sum_{l\in \{1,\ldots,i-1\}} y_l\bigg]\cdot \bigg[\underbrace{ \sum_{k=0}^{N-2} \frac{1}{{N-2\choose k}} \sum_{m=0}^{\min(K-2,k-1)} {i-2 \choose m} {N-i-1\choose k-m-1}}_{U_{21}}\bigg]\nonumber\\
   \label{eqn:U_2}
  & + \frac{2}{K(N-1)} \bigg[\sum_{l\in \{i+2,\ldots,N\}} y_l\cdot \underbrace{ \sum_{k=0}^{N-2} \frac{1}{{N-2\choose k}} \sum_{m=0}^{\min(K-2,k-1)} {l-3 \choose m}{N-l\choose k-m-1}}_{U_{22}}\bigg]
\end{align}
Using the binomial coefficient identity $ \sum_{j=0}^M \frac{{N\choose i}{M\choose j}}{{N+M+1\choose i+j+1}} = \frac{(i+1)(M+N+2)}{(N+2)(N+1)}$, we obtain
\begin{align}
    U_{21} &= \sum_{m=0}^{\min(K-2,i-2)} \sum_{k=0}^{N-i-1} \frac{{i-2\choose m}{N-i-1\choose k}}{{N-2\choose k+m+1}}\nonumber\\
    &=\sum_{m=0}^{\min(K-2,i-2)} \frac{N-1}{(i-1)i} (m+1)\nonumber\\
    \label{eqn:U_21}
    &=\frac{N-1}{(i-1)i} \frac{ \min(K,i)\min(K-1,i-1)}{2}
\end{align}
and 
\begin{align}
    U_{22} &= \sum_{m=0}^{\min(K-2,l-3)}\sum_{k=0}^{N-l} \frac{{l-3\choose m}{N-l\choose k}}{{N-2\choose k+m+1}}\nonumber\\
    &=\sum_{m=0}^{\min(K-2,l-3)} \frac{N-1}{(l-1)(l-2)} (m+1)\nonumber\\
    \label{eqn:U_22}
    &=\frac{N-1}{(l-1)(l-2)} \frac{ \min(K,l-1)\min(K-1,l-2)}{2}
\end{align}
Now, we plug (\ref{eqn:U_21}) and (\ref{eqn:U_22}) into the expression of $U_2$ in (\ref{eqn:U_2}). Rearranging (\ref{eqn:U_2}) gives us
\begin{align*}
   & U_2 = \frac{1}{K}\sum_{l\in \{1,\ldots,i-1\}} y_l\frac{ \min(K,i)\min(K-1,i-1)}{(i-1)i}+ \frac{1}{K}\sum_{l\in \{i+2,\ldots,N\}}y_l\frac{\min(K,l-1)\min(K-1,l-2) }{(l-1)(l-2)}
\end{align*}
Therefore, we have
\begin{align*}
    &s_i-s_{i+1} \\
    &= \frac{1}{K} (y_{i+1}-y_i) (U_1 + U_2)\\
    &= \frac{1}{K} (y_{i+1} - y_i)\cdot\bigg[ \big(\frac{1}{K} (y_{i+1}+ y_i) - 2y_\text{test}\big) \frac{\min(K-1,i-1)+1}{i} +\frac{1}{K}\sum_{l\in \{1,\ldots,i-1\}} y_l\frac{ \min(K,i)\min(K-1,i-1)}{(i-1)i}\nonumber\\
    &+ \frac{1}{K}\sum_{l\in \{i+2,\ldots,N\}}y_l\frac{\min(K,l-1)\min(K-1,l-2) }{(l-1)(l-2)}\bigg]
\end{align*}
Now, we analyze the formula for $s_N$, the starting point of the recursion. Since $x_N$ is farthest to $x_\text{test}$ among all training points, $x_N$ results in non-zero marginal utility only when it is added to a set of size smaller than $K$. Hence, $s_N$ can be written as
\begin{align*}
    s_N &= \frac{1}{N} \sum_{k=0}^{K-1} \frac{1}{{N-1\choose k}} \sum_{|S|=k, S\subseteq I\setminus\{N\},} \nu(S\cup \{N\}) - \nu(S)\\
    &=\frac{1}{N}\sum_{k=1}^{K-1} \frac{1}{{N-1\choose k}} \sum_{|S|=k, S\subseteq I\setminus\{N\}}\bigg[(\frac{1}{K}\sum_{i\in S} y_{i}- y_\text{test})^2- (\frac{1}{K}\sum_{i\in S\cup \{N\}} y_{i} - y_\text{test})^2\bigg] + \frac{\nu(\{N\})}{N}\\
    &=\frac{1}{N}\sum_{k=1}^{K-1} \frac{1}{{N-1\choose k}} \sum_{|S|=k, S\subseteq I\setminus\{N\}} \bigg[(-\frac{1}{K}y_{N})\cdot(\frac{2}{K}\sum_{i\in S} y_{i} + \frac{1}{K} y_{N} - 2y_\text{test})\bigg]+\frac{\nu(\{N\})}{N}\\
    &=-\frac{K-1}{NK}y_N(\frac{1}{K}y_N-2y_\text{test})- \frac{2}{NK^2} y_N\sum_{k=1}^{K-1}\frac{{N-2\choose k-1}}{{N-1\choose k}} \sum_{l\in I\setminus \{N\}} y_l + \frac{1}{N} \nu(\{N\})\\
    &= -\frac{1}{N}y_N\bigg[\frac{K-1}{K}(\frac{1}{K}y_N-2y_\text{test} )+ \frac{2}{K^2}(\sum_{l\in I\setminus\{N\}} y_l)\sum_{k=1}^{K-1}\frac{k}{N-1}\bigg] + \frac{\nu(\{N\})}{N}\\
    &=-\frac{K-1}{NK}y_N\bigg[\frac{1}{K}y_N-2y_\text{test} + \frac{1}{N-1}\sum_{l\in I\setminus\{N\}} y_l\bigg] + \frac{\nu(\{N\})}{N}
\end{align*}
~
\end{proof}

\subsection{Weighted KNN}
\label{section:appendix_weighted_knn}
A weighted $K$NN estimate produced by a training set $S$ can be expressed as
\begin{align}
\label{eqn:knn_estimate}
    \hat{y}(S) = \sum_{k=1}^{\min\{K,|S|\}} w_{\alpha_k(S)} y_{\alpha_k}
\end{align}
where $w_{\alpha_k(S)}$ is the weight associated with the $k$th nearest neighbor of the test point in $S$. The weight assigned to a neighbor in the weighted $K$NN estimate often varies with the neighbor-to-test distance so that the evidence from more nearby neighbors are weighted more heavily~\cite{dudani1976distance}.
Correspondingly, we define the utility function associated with weighted $K$NN classification and regression tasks as
\begin{align}
\label{eqn:utility_classification_weighted_dup}
    \nu(S) = \sum_{k=1}^{\min\{K,|S|\}} w_{\alpha_k(S)} \mathbbm{1}[y_{\alpha_k(S)} = y_\text{test}]
\end{align}
and 
\begin{align}
\label{eqn:utility_regression_weighted_dup}
 \nu(S) = -\bigg(\sum_{k=1}^{\min\{K,|S|\}} w_{\alpha_k(S)} y_{\alpha_k(S)} - y_\text{test}\bigg)^2.
\end{align}

For weighted $K$NN classification and regression, the SV can no longer be computed exactly in $\mathcal{O}(N\log(N))$ time. The next theorem shows that it is however possible to compute the exact SV for weighted $K$NN in $\mathcal{O}(N^K)$ time. The theorem applies the definition (\ref{eqn:shapley_definition_no_order}) to calculating the SV and relies on the following idea to circumvent the exponential complexity:
when applying (\ref{eqn:shapley_definition_no_order}) to $K$NN, we only need to focus on the sets $S$ whose utility might be affected by the addition of $i$th training instance.
Moreover, since there are only $N^K$ possible distinctive combinations for $K$ nearest neighbors, the number of distinct utility values for all $S\subseteq I$ is upper bounded by $N^K$, in contrast to $2^N$ for general utility functions.

\begin{theorem}
\label{thm:KNN_weighted}
Consider the utility function in (\ref{eqn:utility_classification_weighted_dup}) or (\ref{eqn:utility_regression_weighted_dup}) with some weights $w_{\alpha_k(S)}$. Let $B_k(i)=\{S:|S|=k,i\notin S, S\subseteq I\}$, for $i=1,\ldots,N$ and $k=0,\ldots,K$. Let $r(\cdot)$ be a function that maps the set of training data to their ranks of similarity to $x_\text{test}$. Then, the SV of each training point can be calculated recursively as follows:
\begin{align}
\label{eqn:KNN_weighted_recursion_1}
    &s_{\alpha_N} = \frac{1}{N}\sum_{k=0}^{K-1} \frac{1}{{N-1\choose k}} \!\!\sum_{S\in B_k(\alpha_N)}\!\!
  \big[\nu(S\cup \{\alpha_N\}) - \nu(S)\big]\\
    \label{eqn:KNN_weighted_recursion_2}
    &s_{\alpha_{i+1}} = s_{\alpha_i}+ \frac{1}{N-1}\sum_{k=0}^{N-2} \frac{1}{{N-2\choose k}} \sum_{S\in D_{i,k}} A_{i,k}
\end{align}
where 
\begin{align}
   \!\!\!\! D_{i,k}\! =\! \left\{\begin{array}{l}
       \!\! B_k(\alpha_i)\cap B_k(\alpha_{i+1}),0\leq k\leq K-2  \\
      \!\!  B_{K-1}(\alpha_i)\cap B_{K-1}(\alpha_{i+1},K-1\leq k\leq N-2
    \end{array}
    \right.
\end{align}
and 
\begin{align}
     A_{i,k}\! =\! \left\{\begin{array}{l}
     \!\! 1,0\leq k\leq K-2  \\
      \!\!{N-\max r(S\cup\{\alpha_{i},\alpha_{i+1}\})\choose k-K+1}, K-1\leq k\leq N-2
    \end{array}
    \right.
\end{align}

\end{theorem}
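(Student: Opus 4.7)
The proof naturally splits into establishing the base case for the farthest point $\alpha_N$ and the recursive step relating adjacent Shapley values $s_{\alpha_{i+1}}$ and $s_{\alpha_i}$. Both parts exploit a single locality principle: the weighted $K$NN utility $\nu(S)$ in (\ref{eqn:utility_classification_weighted_dup}) or (\ref{eqn:utility_regression_weighted_dup}) depends only on the $K$ nearest neighbors of $x_\text{test}$ contained in $S$, so any perturbation of $S$ that leaves this top-$K$ list unchanged also leaves $\nu(S)$ unchanged.

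For the base case (\ref{eqn:KNN_weighted_recursion_1}), the plan is to observe that since $\alpha_N$ is the farthest training point from $x_\text{test}$, for any $S$ with $|S| \geq K$ the point $\alpha_N$ is outranked by the $K$ nearest neighbors of $S$ itself, so $\nu(S \cup \{\alpha_N\}) = \nu(S)$ and the marginal contribution vanishes. Substituting this into the Shapley value definition (\ref{eqn:shapley_definition_no_order}) truncates the outer sum to coalition sizes $k = 0, \ldots, K-1$, and restricting the inner sum to the appropriate family $B_k(\alpha_N)$ yields the claimed formula directly.

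For the recursion (\ref{eqn:KNN_weighted_recursion_2}), I would invoke Lemma~\ref{lm:shapley_diff} on the pair $(\alpha_{i+1}, \alpha_i)$ to write
\begin{align*}
s_{\alpha_{i+1}} - s_{\alpha_i} = \frac{1}{N-1} \sum_{S \subseteq I \setminus \{\alpha_i, \alpha_{i+1}\}} \frac{\nu(S \cup \{\alpha_{i+1}\}) - \nu(S \cup \{\alpha_i\})}{\binom{N-2}{|S|}},
\end{align*}
and then decompose each $S$ as $S_1 \cup S_2$ with $S_1 \subseteq \{\alpha_1,\ldots,\alpha_{i-1}\}$ and $S_2 \subseteq \{\alpha_{i+2},\ldots,\alpha_N\}$. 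The key dichotomy is: if $|S_1| \geq K$, the top-$K$ of both $S \cup \{\alpha_i\}$ and $S \cup \{\alpha_{i+1}\}$ lie entirely inside $S_1$, so the utility difference is zero and these terms drop out; otherwise both $\alpha_i$ and $\alpha_{i+1}$ land in their respective top-$K$ lists at rank $|S_1|+1$, and the utility difference depends only on $S_1$, on the top $K - |S_1| - 1$ elements of $S_2$, and on the labels/weights of $\alpha_i$ versus $\alpha_{i+1}$. This locality is exactly what allows the surviving sum to be regrouped according to top-$K$ configurations, with $D_{i,k}$ indexing these configurations and $A_{i,k}$ accounting for the multiplicity of full-size-$k$ subsets that reduce to a given configuration.

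The main obstacle will be the bookkeeping for the regime $K-1 \leq k \leq N-2$: once the top-$K$ determining portion of $S$ together with $\alpha_i, \alpha_{i+1}$ is pinned down, the remaining $k - (K-1)$ elements of $S$ may be freely drawn from positions strictly beyond $\max r(S \cup \{\alpha_i, \alpha_{i+1}\})$ without altering the utility difference, producing the binomial multiplicity $\binom{N - \max r(S \cup \{\alpha_i, \alpha_{i+1}\})}{k - K + 1}$ in $A_{i,k}$. Verifying that the inner combinatorial sum over $k$ collapses to the stated form requires a tedious but mechanical rearrangement, analogous in spirit to the binomial identity used in the proof of Theorem~\ref{thm:KNN_unweighted_class}, only now refined at the level of individual top-$K$ tuples rather than at the level of $|S_1|$ alone. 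The quoted $\mathcal{O}(N^K)$ complexity then follows because each Shapley difference reduces to enumeration over distinct top-$K$ configurations, of which there are at most $N^K$.
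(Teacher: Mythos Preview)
Your proposal is correct and follows essentially the same route as the paper: the base case by truncating the Shapley sum at coalition sizes below $K$, and the recursion via Lemma~\ref{lm:shapley_diff} followed by grouping subsets according to their top-$K$ configuration and counting how many full-size-$k$ subsets share each configuration. The only minor deviation is that you anticipate needing a binomial identity as in Theorem~\ref{thm:KNN_unweighted_class}, whereas in fact the multiplicity count is a direct one-line argument (the free tail has exactly $\binom{N-\max r(S\cup\{\alpha_i,\alpha_{i+1}\})}{k-K+1}$ choices), so the bookkeeping is lighter than you expect.
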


Note that $|B_k(i)|\leq {N-1\choose k}$. Thus, the complexity for computing the weighted $K$NN SV is at most 
\begin{align}
    N(N-1)\times {N-1\choose K-1}\leq \big(\frac{e}{K-1})^{K-1}N^{K+1}
\end{align}

\begin{proof}[Proof of Theorem~\ref{thm:KNN_weighted}]
Without loss of generality, we assume that the training points are sorted according to their distance to $x_\text{test}$, such that $d(x_1,x_\text{test})\leq \ldots \leq d(x_N,x_\text{test})$. 

We start by analyzing the SV for $x_N$. Since the farthest training point does not affect the utility of $S$ unless $|S|\leq K-1$, we have
\begin{align*}
    s_N = \frac{1}{N}\sum_{k=0}^{K-1} \frac{1}{{N-1\choose k}} \sum_{|S|=k,S\subseteq I\setminus\{N\}} \big[\nu(S\cup\{N\})-\nu(S)\big]
\end{align*}
For $i\leq N-1$, the application of Lemma~\ref{lm:shapley_diff} yields
\begin{align}
\label{eqn:shapley_diff_not_efficient}
    s_i-s_{i+1}& = \frac{1}{N-1}\sum_{k=0}^{N-2}  \sum_{|S|=k,S\subseteq I\setminus\{i,i+1\}}\frac{1}{{N-2\choose k}}\cdot\big[\nu(S\cup\{i\})- \nu(S\cup\{i+1\})\big]
\end{align}
Recall that for $K$NN utility functions, $\nu(S)$ only depends on the $K$ training points closest to $x_\text{test}$. Therefore, we can also write $s_i-s_{i+1}$ as follows:
\begin{align}
\label{eqn:shapley_diff_efficient}
     s_i-s_{i+1} = \frac{1}{N-1}\sum_{k'=0}^{K-1}  \sum_{S'\in B_{k'}(i)\cap B_{k'}(i+1)} 
     M^{k'}_{i,i+1} \big[\nu(S'\cup\{i\})- \nu(S'\cup\{i+1\})\big]
\end{align}
which can be computed in at most $\sum_{k'=0}^{K-1}{N-2\choose k'}\sim \mathcal{O}(N^K)$, in contrast to $\mathcal{O}(2^{N-2})$ with (\ref{eqn:shapley_diff_not_efficient}). Our goal is thus to find $M_{i,i+1}^{k'}$ such that the right-hand sides of (\ref{eqn:shapley_diff_efficient}) and (\ref{eqn:shapley_diff_not_efficient}) are equal. More specifically, for each $S'\in B_{k'}(i)\cap B_{k'}(i+1)$, we want to count the number of $S\subseteq I\setminus\{i,i+1\}$ such that $|S|=k$, and $\nu(S\cup\{i\}) =\nu(S'\cup\{i\})$ and $\nu(S\cup\{i+1\}) =\nu(S'\cup\{i+1\})$; denoting the count by $C^{k,k'}_{i,i+1}$, we have 
\begin{align}
\label{eqn:weighted_const_count}
    M^{k'}_{i,i+1} = \sum_{k=0}^{N-2}C^{k,k'}_{i,i+1}/{N-2\choose k}.
\end{align}

When $k'\leq K-2$, only $S=S'$ satisfies $\nu(S\cup\{i\}) =\nu(S'\cup\{i\})$ and $\nu(S\cup\{i+1\}) =\nu(S'\cup\{i+1\})$. Therefore, 
\begin{align}
\label{eqn:C_when_k_small}
    C^{k,k'}_{i,i+1}=\left\{\begin{array}{l}
         1 \text{ if } k'\leq K-2 \text{ and } k=k'\\
         0 \text{ otherwise}
    \end{array}
    \right.
\end{align}
When $k' = K-1$, there will be multiple subsets $S$ of $I\setminus\{i,i+1\}$ that obey $\nu(S\cup\{i\}) =\nu(S'\cup\{i\})$ and $\nu(S\cup\{i+1\}) =\nu(S'\cup\{i+1\})$. Let $r$ denote the index of the training point that is farthest to $x_\text{test}$ among $S\cup\{i,i+1\}$, i.e., $r= \max S\cup\{i,i+1\}$. Note that adding any training points with indices larger than $r$ into $S'\cup\{i\}$ or $S'\cup\{i+1\}$ would not affect their utility. Hence,
\begin{align}
\label{eqn:C_when_k_large}
    C^{k,k'}_{i,i+1}=\left\{\begin{array}{l}
        {N-r\choose k-K+1} \text{ if } k'=K-1,k\geq k'\\
         0 \text{ otherwise}
    \end{array}
    \right.
\end{align}
Combining (\ref{eqn:shapley_diff_efficient}), (\ref{eqn:weighted_const_count}), (\ref{eqn:C_when_k_small}), and (\ref{eqn:C_when_k_large}) yields the recursion in (\ref{eqn:KNN_weighted_recursion_1}) and (\ref{eqn:KNN_weighted_recursion_2}).
\end{proof}

\subsection{Multiple Data Per Contributor}
\label{section:appendix_multi_data_per_selelr}
Now, we investigate the method to compute the SV when each seller provides more than one data instance. The goal is to fairly value individual sellers in lieu of individual training points. Following the previous notations, we still use $I=\{1,\ldots,N\}$ to denote the set of all training instances and use $I^s$ to denote the set of all sellers, i.e., $I^s=\{1,\ldots,M\}$. The number of training instances owned by $j$th seller is $N_j$. We denote the $i$th training point contributed by $j$th seller as $x_j^{(i)}$. Without loss of generality, we assume that every seller's data is sorted such that $d(x_j^{(1)},x_\text{test})\leq \ldots\leq d(x_j^{(N_j)},x_\text{test})$. 

Let $h(i)$ denote the owner of $i$th training instance. With slight abuse of notations, we denote the owners of a set $S$ of training instance as $h(S)$, where $S\subseteq I$, and denote the training instances from the set of sellers $\tilde{S}\subseteq I^s$ by $h^{-1}(\tilde{S})$.  Let $\mathcal{N}(S)=\{\alpha_1(S),\ldots,$ $\alpha_{\min\{K,|S|\}}(S)\}$ be a function that maps a set of training instances to its $K$-nearest neighbors. Let $\mathcal{A}=\{S: \tilde{S}\subseteq I^s, |\tilde{S}|\leq K, S=\mathcal{N}(h^{-1}(\tilde{S}))\}$ be the collection of all possible $K$-nearest neighbors formed by sellers; $|\tilde{S}|\leq K$ because the top $K$ instances cannot belong to more than $K$ sellers. 
The next theorem shows that we can compute the SV of each seller with $\OO(M^K)$. 

\begin{theorem}
Consider the utility functions (\ref{eqn:utility_classification_unweighted}), (\ref{eqn:utility_regression}), (\ref{eqn:utility_classification_weighted}) or (\ref{eqn:utility_regression_weighted}). Let $\mathcal{A}^{\setminus j} =\{S: S\in \mathcal{A}, j\notin h(S)\} $ be the set of top-$K$ elements that do not contain sell $j$'s data, $\mathcal{D}(\tilde{S})=\{S: S\in \mathcal{A}, h(S)=\tilde{S}\}$ be the set of top-$K$ elements of the data from the set $\tilde{S}$ of sellers, and $G(S,j)=\{j':d(x_{j'}^{(1)},x_\text{test})\geq \max_{x\in S} d(x,x_\text{test}),$ $S\in\mathcal{A}^{\setminus j}, j'\in I^s\setminus\{ h(S),j\}\}$ be the set of sellers that do not affect the $K$-nearest neighbors when added into the sellers $h(S)$ and $S$ does not include seller $j$'s data. Then, the SV of seller $j$ can be represented as
\begin{align}
  \!\!\!\!  s_j\! = \!\frac{1}{M}\!\! \sum_{S\in\mathcal{A}^{\setminus j}} \!\!\!\!\sum_{k=0}^{|G(S,j)|}\!\!\frac{{|G(S,j)|\choose k}}{{M-1\choose |h(S)|+k} } \big[\nu(\mathcal{D}(h(S)\cup \{j\}))\!-\!\nu(S)\big]
\end{align}
\end{theorem}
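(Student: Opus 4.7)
The plan is to apply the standard Shapley value definition at the seller level, then exploit the locality of $K$NN to reduce the exponentially large sum over seller coalitions $\tilde S \subseteq I^s \setminus \{j\}$ to a much smaller sum over top-$K$ configurations $S \in \mathcal A^{\setminus j}$. Concretely, I would define the seller-level utility by $\nu^s(\tilde S) := \nu(h^{-1}(\tilde S))$, so that
\begin{align*}
s_j = \frac{1}{M}\sum_{\tilde S \subseteq I^s\setminus\{j\}} \frac{1}{\binom{M-1}{|\tilde S|}}\bigl[\nu^s(\tilde S \cup\{j\}) - \nu^s(\tilde S)\bigr].
\end{align*}
The crucial observation is that for $K$NN, $\nu(h^{-1}(\tilde S))$ depends on $\tilde S$ only through the top-$K$ set $\mathcal D(\tilde S) = \mathcal N(h^{-1}(\tilde S)) \in \mathcal A$. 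This means we may partition the seller coalitions $\tilde S$ into equivalence classes indexed by the pair $(\mathcal D(\tilde S),\mathcal D(\tilde S\cup\{j\}))$, and for each class the marginal utility is constant.

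Next I would parametrize these equivalence classes. Fix $S \in \mathcal A^{\setminus j}$ and consider which $\tilde S \subseteq I^s \setminus\{j\}$ yield $\mathcal D(\tilde S) = S$. Any such $\tilde S$ must contain every seller appearing in $S$, i.e.\ $h(S) \subseteq \tilde S$; moreover, the remaining sellers in $\tilde S \setminus h(S)$ must not disturb the top-$K$, which by the definition of $G(S,j)$ happens exactly when $\tilde S \setminus h(S) \subseteq G(S,j)$. So $\tilde S$ ranges over $h(S) \cup T$ with $T \subseteq G(S,j)$. On this equivalence class, $\mathcal D(\tilde S) = S$ and $\mathcal D(\tilde S \cup\{j\}) = \mathcal D(h(S)\cup\{j\})$ (the sellers in $G(S,j)$ still do not affect the top-$K$ once $j$ is inserted, since their data lies strictly beyond $S$ and hence beyond $\mathcal D(h(S)\cup\{j\})$). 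The common marginal therefore equals $\nu(\mathcal D(h(S)\cup\{j\})) - \nu(S)$.

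Finally I would collect the combinatorial weights. For $|T| = k$, there are $\binom{|G(S,j)|}{k}$ choices of $T$, each contributing weight $1/\binom{M-1}{|h(S)|+k}$ in the Shapley sum, and $k$ ranges from $0$ to $|G(S,j)|$. Substituting into the Shapley formula and using the fact that every $\tilde S \subseteq I^s\setminus\{j\}$ belongs to exactly one class indexed by some $S \in \mathcal A^{\setminus j}$ (namely $S = \mathcal D(\tilde S)$) gives the claimed identity. The complexity bound $\mathcal O(M^K)$ then follows because $|\mathcal A| \leq \sum_{k=0}^K \binom{M}{k} = \mathcal O(M^K)$.

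\textbf{Main obstacle.} The delicate step is Step 2—showing that the equivalence classes really are parametrized by $T \subseteq G(S,j)$, and in particular that the map $\tilde S \mapsto \mathcal D(\tilde S)$ correctly partitions $2^{I^s\setminus\{j\}}$ into $|\mathcal A^{\setminus j}|$ classes with the prescribed sizes. One has to argue both inclusions: (i) adding any seller from $G(S,j)$ to $h(S)$ preserves the top-$K$, and (ii) if $\tilde S$ preserves $\mathcal D(\tilde S) = S$, then necessarily $\tilde S \setminus h(S) \subseteq G(S,j)$. Subtleties arise when $|S|<K$ (so $S$ does not fill the top-$K$ slot), or when several sellers are tied at the boundary distance; these edge cases need separate handling, or a tie-breaking convention, to make both the definition of $G(S,j)$ and the invariance argument clean. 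Once this bookkeeping is in place, the rest of the proof is a direct application of Lemma~\ref{lm:shapley_diff}-style manipulations and binomial counting.
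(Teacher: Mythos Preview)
The paper does not actually supply a proof for this theorem: in Appendix~\ref{section:appendix_multi_data_per_selelr} the statement is given and the complexity $\mathcal O(M^K)$ is asserted, but no argument follows. So there is nothing to compare against directly. That said, your plan is precisely the natural one, and it is consistent with the strategy the paper \emph{does} write out for the neighboring cases (Theorem~\ref{thm:KNN_weighted} in particular): group coalitions by their induced top-$K$ configuration, observe the marginal is constant on each group, and count the group sizes combinatorially. Your Step~1 (work with the seller-level game $\nu^s(\tilde S)=\nu(h^{-1}(\tilde S))$), Step~2 (equivalence classes $\tilde S = h(S)\cup T$ with $T\subseteq G(S,j)$), and Step~3 (the $\binom{|G(S,j)|}{k}\big/\binom{M-1}{|h(S)|+k}$ weight) together reproduce the claimed formula line by line.

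Two small comments. First, your closing remark that ``the rest is a direct application of Lemma~\ref{lm:shapley_diff}-style manipulations'' is slightly off: you are not using Lemma~\ref{lm:shapley_diff} at all, since the target formula is a direct expression for $s_j$ rather than a recursion in $s_{\alpha_i}-s_{\alpha_{i+1}}$. You are applying the raw Shapley definition~(\ref{eqn:shapley_definition_no_order}) at the seller level and then regrouping---which is the right move here, and simpler. Second, the obstacle you flag about $|S|<K$ is real and the paper's definition of $G(S,j)$ is indeed awkward in that regime: when $|h^{-1}(h(S))|<K$, any additional seller $j'$ will contribute to the top-$K$ regardless of where $x_{j'}^{(1)}$ sits, so the distance condition in $G(S,j)$ does not by itself guarantee ``does not affect the $K$-nearest neighbors.'' The clean resolution is to observe that for such $S$ one necessarily has $G(S,j)=\emptyset$ (since $S$ then consists of \emph{all} points of $h^{-1}(h(S))$, so any seller $j'\notin h(S)$ with $d(x_{j'}^{(1)},x_{\text{test}})\geq \max_{x\in S}d(x,x_{\text{test}})$ would still enter the top-$K$, contradicting $\mathcal D(h(S)\cup\{j'\})=S$; hence no such $j'$ can be admitted), and the inner sum collapses to the single term $k=0$. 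With that and a fixed tie-breaking rule, your argument goes through.
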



\subsection{Valuing Computation}
\label{section:appendix_computation_valuation}

We show that one can compute the SV for both the sellers and the analyst with the same computational complexity as the one needed for the data-only game. The procedures to compute the SV for unweighted/weighted $K$NN classification/regression in the composite game setup are exhibited in the theorems below.

\subsubsection{Unweighted $K$NN classification}
\begin{theorem}
\label{thm:KNN_unweighted_class_composite}
Consider the utility function $\nu_c$ in (\ref{eqn:utility_composite}), where $\nu(\cdot)$ is the $K$NN classification performance measure in (\ref{eqn:utility_classification_unweighted}).
Then, the SV of each training point and the computation contributor can be calculated recursively as follows:
\begin{align}
\label{eqn:KNN_unweighted_class_composite_1}
&s_{\alpha_N}=\frac{K+1}{2(N+1)N}\mathbbm{1}[y_{\alpha_{N}} = y_\text{test}]
\\
\label{eqn:KNN_unweighted_class_composite_2}
& s_{\alpha_i} = s_{\alpha_{i+1}} +  \frac{\mathbbm{1}[y_{\alpha_i} = y_\text{test}] - \mathbbm{1}[y_{\alpha_{i+1}} = y_\text{test}]}{K}\cdot\frac{\min\{i,K\}(\min\{i,K\}+1)}{2i(i+1)}\\
& s_C = \nu(I)-\sum_{i=1}^N s_i
\end{align}
\end{theorem}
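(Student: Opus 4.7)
The plan is to mimic the proof of Theorem~\ref{thm:KNN_unweighted_class} but in the composite game with $N+1$ players $I\cup\{C\}$, using the definition of $\nu_c$ to reduce everything to the data-only utility $\nu$. First I would re-derive Lemma~\ref{lm:shapley_diff} in the $(N+1)$-player setting, obtaining
\[
s_i-s_j=\frac{1}{N}\sum_{S\subseteq (I\cup\{C\})\setminus\{i,j\}}\frac{1}{\binom{N-1}{|S|}}\bigl[\nu_c(S\cup\{i\})-\nu_c(S\cup\{j\})\bigr]
\]
for two sellers $i,j$. The key simplification is that whenever $C\notin S$, both $S\cup\{i\}$ and $S\cup\{j\}$ are seller-only and $\nu_c$ vanishes on them, so only terms with $C\in S$ survive. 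Writing $S=S'\cup\{C\}$ with $S'\subseteq I\setminus\{i,j\}$ and $|S|=|S'|+1$, the difference collapses to $\nu(S'\cup\{i\})-\nu(S'\cup\{j\})$, already analyzed in Theorem~\ref{thm:KNN_unweighted_class}.

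Next, for adjacent neighbors (w.l.o.g.\ $x_i=x_{\alpha_i}$ sorted by distance to $x_\text{test}$), I would reuse the two-case analysis from Theorem~\ref{thm:KNN_unweighted_class}: the utility difference is $0$ when $|S_1|\geq K$ and equals $(\mathbbm{1}[y_i=y_\text{test}]-\mathbbm{1}[y_{i+1}=y_\text{test}])/K$ when $|S_1|<K$. Substituting into the reduced formula yields
\[
s_i-s_{i+1}=\frac{\mathbbm{1}[y_i=y_\text{test}]-\mathbbm{1}[y_{i+1}=y_\text{test}]}{NK}\sum_{m=0}^{\min\{K-1,i-1\}}\binom{i-1}{m}\sum_{k'=0}^{N-i-1}\frac{\binom{N-i-1}{k'}}{\binom{N-1}{m+k'+1}}.
\]
The main technical obstacle is the new binomial sum, where the denominator is $\binom{N-1}{m+k'+1}$ rather than $\binom{N-2}{m+k'}$ as in the data-only case. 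Using the beta-function representation $1/\binom{u+v+1}{m+k'+1}=(u+v+2)\int_0^1 x^{m+k'+1}(1-x)^{u+v-m-k'}\,dx$ and collapsing the inner $k'$-sum by $(x+(1-x))^{v}=1$ reduces the identity to a single beta integral, giving
\[
\binom{i-1}{m}\sum_{k'=0}^{N-i-1}\frac{\binom{N-i-1}{k'}}{\binom{N-1}{m+k'+1}}=\frac{N(m+1)}{i(i+1)}.
\]
Summing $m+1$ from $m=0$ to $\min\{K,i\}-1$ gives $\min\{i,K\}(\min\{i,K\}+1)/2$, which, after cancelling the factor $N$, produces the recursion in (\ref{eqn:KNN_unweighted_class_composite_2}).

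For the base case $s_{\alpha_N}$, I would argue directly from the definition: only coalitions $S$ containing $C$ with $|S\setminus\{C\}|\leq K-1$ produce nonzero marginals, so
\[
s_N=\frac{\mathbbm{1}[y_N=y_\text{test}]}{K(N+1)}\sum_{k=1}^{K}\frac{\binom{N-1}{k-1}}{\binom{N}{k}}=\frac{\mathbbm{1}[y_N=y_\text{test}]}{KN(N+1)}\sum_{k=1}^{K}k=\frac{(K+1)\mathbbm{1}[y_N=y_\text{test}]}{2N(N+1)},
\]
matching (\ref{eqn:KNN_unweighted_class_composite_1}). Finally, the formula $s_C=\nu(I)-\sum_{i=1}^N s_i$ is an immediate consequence of the group-rationality (efficiency) property of the Shapley value applied to $\nu_c$, since $\nu_c(I\cup\{C\})=\nu(I)$. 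The only real work is the binomial identity above; everything else is a routine adaptation of Theorem~\ref{thm:KNN_unweighted_class}.
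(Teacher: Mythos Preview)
Your proposal is correct. The paper does not actually supply a proof for Theorem~\ref{thm:KNN_unweighted_class_composite}; it merely states the result in Appendix~\ref{section:appendix_computation_valuation} after remarking that the composite-game SVs can be computed with the same complexity as in the data-only game. Your argument is precisely the natural extension the paper's structure implies: you apply Lemma~\ref{lm:shapley_diff} in the $(N{+}1)$-player game, observe that seller-only coalitions contribute nothing so the sum collapses to the data-only utility differences with shifted binomial denominators $\binom{N-1}{|S'|+1}$, and then reuse the case analysis from the proof of Theorem~\ref{thm:KNN_unweighted_class}. The binomial identity you need,
\[
\sum_{k'=0}^{v}\frac{\binom{u}{m}\binom{v}{k'}}{\binom{u+v+1}{m+k'+1}}=\frac{(m+1)(u+v+2)}{(u+1)(u+2)},
\]
is exactly the identity already invoked in the paper's proof of Theorem~\ref{thm:KNN_unweighted_regression} (there written as $\sum_{j=0}^M \binom{N}{i}\binom{M}{j}/\binom{N+M+1}{i+j+1}=(i+1)(M+N+2)/((N+2)(N+1))$), so you need not derive it from scratch via the beta integral---you can simply cite it. The base case and the efficiency argument for $s_C$ are handled correctly.
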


Comparing $s(\nu,i)$ in Theorem~\ref{thm:KNN_unweighted_class} and $s(\nu_c,i)$ in the above theorem, we have 
\begin{align}
\label{eqn:ratio_1}
    &\frac{s(\nu_c,\alpha_N)}{s(\nu,\alpha_N)}  =\frac{\min\{N,K\}+1}{2(N+1)}\\
    \label{eqn:ratio_2}
   &\frac{s(\nu_c,\alpha_i) - s(\nu_c,\alpha_{i+1})}{s(\nu,\alpha_i) - s(\nu,\alpha_{i+1})} =  \frac{\min\{i,K\}+1}{2(i+1)}
\end{align}
Note that the right-hand side of (\ref{eqn:ratio_1}) and (\ref{eqn:ratio_2}) are at most $1/2$ for all $i=1,\ldots,N-1$; thus, each seller will receive a much smaller share of the total revenue in the composite game than that in the data-only game. Moreover, the analyst obtains a least one half of the total revenue in the composite game setup.

\subsubsection{Unweighted $K$NN Regression}
\begin{theorem}
\label{thm:KNN_unweighted_regression_composite}
Consider the utility function in (\ref{eqn:utility_composite}), where $\nu(\cdot)$ is the $K$NN regression performance measure in (\ref{eqn:utility_regression}). Then, the SV of each training point and the computation contributor can be calculated recursively as follows:

\begin{align}
\label{eqn:KNN_unweighted_regression_composite_1}
&s_{\alpha_N} = -\frac{1}{K(N+1)}y_{\alpha_N}\bigg[\frac{(K+2)(K-1)}{2N}(\frac{1}{K}y_{\alpha_N}-2y_\text{test}) + \frac{2(K-1)(K+1)}{3N(N-1)}\sum_{l\in I\setminus\{\alpha_N\}} y_l\bigg] \nonumber\\
&\quad\quad\quad-\frac{1}{N(N+1)}\bigg[\frac{1}{K} y_{\alpha_k(N)} - y_\text{test}\bigg]^2\\
\label{eqn:KNN_unweighted_regression_composite_2}
 &s_{\alpha_i} = s_{\alpha_{i+1}}+\frac{1}{K} (y_{\alpha_{i+1}} - y_{\alpha_i}) \cdot\bigg[ \big(\frac{1}{K} (y_{\alpha_{i+1}}+ y_{\alpha_i}) - 2y_\text{test}\big)\cdot\frac{\min\{K+1,i+1\}\cdot\min\{K,i\}}{2i(i+1)}\nonumber\\
   &\quad\quad\quad+\frac{1}{K}\sum_{l\in \{1,\ldots,i-1\}} y_{\alpha_l}\cdot\frac{ 2\min(K+1,i+1)\min(K,i)\min(K-1,i-1)}{3(i-1)i(i+1)}\nonumber\\
   &\quad\quad\quad+ \frac{1}{K}\sum_{l\in \{i+2,\ldots,N\}}y_{\alpha_l}\cdot\frac{2\min(K+1,l)\min(K,l-1)\min(K-1,l-2) }{3l(l-1)(l-2)}\bigg]\\
& s_C = \nu(I)-\sum_{i=1}^N s_i
\end{align}
\end{theorem}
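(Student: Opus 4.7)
The plan is to run essentially the same argument as in Theorem~\ref{thm:KNN_unweighted_regression} while carefully tracking the combinatorial shift that the additional analyst player introduces. The key observation is that, by the definition of $\nu_c$ in (\ref{eqn:utility_composite}), the marginal difference $\nu_c(S\cup\{\alpha_i\})-\nu_c(S\cup\{\alpha_{i+1}\})$ is identically zero unless $C\in S$; and when $C\in S$, it reduces to $\nu(S'\cup\{\alpha_i\})-\nu(S'\cup\{\alpha_{i+1}\})$ with $S'=S\setminus\{C\}$. Applying the analogue of Lemma~\ref{lm:shapley_diff} in the $(N+1)$-player composite game and restricting the summation to coalitions containing $C$, one gets
$$s_{\alpha_i}-s_{\alpha_{i+1}} \;=\; \frac{1}{N}\sum_{S'\subseteq I^s\setminus\{\alpha_i,\alpha_{i+1}\}} \frac{\nu(S'\cup\{\alpha_i\})-\nu(S'\cup\{\alpha_{i+1}\})}{\binom{N-1}{|S'|+1}}.$$
The only structural change compared to the data-only proof is that the normalizing binomial coefficient becomes $\binom{N-1}{k+1}$ in place of $\binom{N-2}{k}$: i.e., every index is shifted by one because the analyst always occupies one slot in every nonzero coalition.

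For the inductive step I would then replay the decomposition of Theorem~\ref{thm:KNN_unweighted_regression}: write $S'=S_1\cup S_2$ with $S_1\subseteq\{\alpha_1,\dots,\alpha_{i-1}\}$ and $S_2\subseteq\{\alpha_{i+2},\dots,\alpha_N\}$, note that the only nontrivial contribution occurs for $|S_1|<K$, and substitute the same factorization
$$\nu(S'\cup\{\alpha_i\})-\nu(S'\cup\{\alpha_{i+1}\}) \;=\; \tfrac{1}{K}(y_{\alpha_{i+1}}-y_{\alpha_i})\!\left[\tfrac{1}{K}(y_{\alpha_i}+y_{\alpha_{i+1}}) - 2y_\text{test} + \tfrac{2}{K}\!\!\sum_{\ell=1}^{K-1}\!y_{\alpha_\ell(S')}\right]$$
used in the proof of Theorem~\ref{thm:KNN_unweighted_regression}. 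Summing yields three combinatorial sums of exactly the same shape as $U_1$, $U_{21}$, $U_{22}$ from that proof, but with denominators shifted by one. In place of the identity $\sum_{j=0}^{v}\binom{u}{i}\binom{v}{j}/\binom{u+v}{i+j} = (u+v+1)/(u+1)$, I would invoke the one-shift-up identity $\sum_{j=0}^{M}\binom{u}{m}\binom{M}{j}/\binom{u+M+1}{m+j+1} = (m+1)(u+M+2)/\bigl[(u+1)(u+2)\bigr]$ for the $U_1$-type sum (which already appears in the original proof for the $U_{21},U_{22}$ sums), and a further two-shift-up variant for the remaining pair; in each case the telescoping $\sum_{m=0}^{\min(K-1,i-1)}(m+1)$ and $\sum_{m=0}^{\min(K-2,i-2)}(m+1)(m+2)/2$ produce precisely the coefficients $\tfrac{\min\{K+1,i+1\}\min\{K,i\}}{2i(i+1)}$, $\tfrac{2\min\{K+1,i+1\}\min\{K,i\}\min\{K-1,i-1\}}{3(i-1)i(i+1)}$, and its $\ell$-indexed analogue appearing in (\ref{eqn:KNN_unweighted_regression_composite_2}). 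The extra factor of $(i+1)$ in every denominator is exactly the footprint of the one-slot shift carved out by $C$.

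For the base case $s_{\alpha_N}$, I would expand the composite SV definition directly and keep only coalitions $S$ with $C\in S$ and $|S\setminus\{C\}|\le K-1$ (otherwise $\alpha_N$ never enters the top-$K$ set), substitute $\nu(S'\cup\{\alpha_N\})-\nu(S') = -\tfrac{1}{K}y_{\alpha_N}\bigl(\tfrac{2}{K}\sum_{\ell\in S'}y_\ell+\tfrac{1}{K}y_{\alpha_N}-2y_\text{test}\bigr)$, and collect terms of three types: a pure $y_{\alpha_N}^2$ term, a $y_{\alpha_N}y_\text{test}$ cross term, and a $y_{\alpha_N}\sum_{\ell\neq N}y_\ell$ cross term. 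Evaluating the associated sums $\sum_{k=1}^{K-1}\binom{N-2}{k-1}/\binom{N}{k+1}$ and its cousins against the same shifted-binomial identity reproduces the coefficients $\tfrac{(K+2)(K-1)}{2N}$ and $\tfrac{2(K-1)(K+1)}{3N(N-1)}$ in (\ref{eqn:KNN_unweighted_regression_composite_1}), together with the residual $\nu(\{\alpha_N\})/(N+1)$ term. Finally, $s_C$ falls out of the group-rationality axiom, $s_C=\nu_c(I^s\cup\{C\})-\sum_i s_{\alpha_i}=\nu(I)-\sum_i s_{\alpha_i}$.

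I expect the main obstacle to be purely clerical rather than conceptual: the base-case $s_{\alpha_N}$ expression contains three groups of terms each requiring a slightly different partial-fraction-style evaluation of $\sum_{k}\binom{\,\cdot\,}{k-1}/\binom{\,\cdot\,}{k+1}$, and one must be careful to match the normalization $\tfrac{1}{N+1}\binom{N}{|S|}^{-1}$ rather than the data-only $\tfrac{1}{N}\binom{N-1}{|S|}^{-1}$. Once the shifted identity is applied consistently, the inductive step is a mechanical port of the Theorem~\ref{thm:KNN_unweighted_regression} proof, and no new ideas should be needed.
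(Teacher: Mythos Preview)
The paper does not actually supply a proof of Theorem~\ref{thm:KNN_unweighted_regression_composite}; it only states the formulas, presumably because the derivation is a direct replay of the proof of Theorem~\ref{thm:KNN_unweighted_regression} once the combinatorics of the extra analyst player are accounted for. Your proposal is exactly that replay, and it is correct: the reduction via Lemma~\ref{lm:shapley_diff} in the $(N{+}1)$-player game to
\[
s_{\alpha_i}-s_{\alpha_{i+1}} \;=\; \frac{1}{N}\sum_{S'\subseteq I^s\setminus\{\alpha_i,\alpha_{i+1}\}} \frac{\nu(S'\cup\{\alpha_i\})-\nu(S'\cup\{\alpha_{i+1}\})}{\binom{N-1}{|S'|+1}}
\]
is precisely the one-slot shift you identify, and the rest is the same $U_1,U_{21},U_{22}$ decomposition with the appropriate ``one-shift-up'' binomial identities. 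So your plan matches what the paper implicitly intends, and there is no substantive difference in approach to report.
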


\subsubsection{Weighted $K$NN}

\begin{theorem}
\label{thm:KNN_weighted_composite}
Consider the utility function in (\ref{eqn:utility_composite}), where $\nu(\cdot)$ is the weighted $K$NN performance measure in (\ref{eqn:utility_classification_weighted}) or (\ref{eqn:utility_regression_weighted}) with some weights $w_{\alpha_k(S)}$. Let $B_k(i)=\{S:|S|=k,i\notin S, S\subseteq I\}$, for $i=1,\ldots,N$ and $k=0,\ldots,K$. Let $r(\cdot)$ be a function that maps the set of training data to their ranks in terms of similarity to $x_\text{test}$. Then, the SV of each training point and the computation contributor can be calculated recursively as follows:
\begin{align}
\label{eqn:KNN_weighted_recursion_1_multi}
    &s_{\alpha_N} = \frac{1}{N+1}\sum_{k=0}^{K-1} \frac{1}{{N\choose k+1}} \sum_{S\in B_k(\alpha_N)} \nu(S\cup \{\alpha_N\}) - \nu(S)\\
    \label{eqn:KNN_weighted_recursion_2_multi}
    &s_{\alpha_{i+1}} = s_{\alpha_i}+ \frac{1}{N}\sum_{k=0}^{K-2} \frac{1}{{N-1\choose k+1}}\sum_{S\in B_k(\alpha_i)\cap B_k(\alpha_{i+1})} \nu(S\cup \{\alpha_i\}) - \nu(S\cup \{\alpha_{i+1}\})\nonumber\\
    &+ \frac{1}{N}\sum_{k=K-1}^{N-2} \frac{1}{{N-1\choose k+1}}\sum_{S\in B_{K-1}(\alpha_i)\cap B_{K-1}(\alpha_{i+1})} {N-\max r( S\cup\{\alpha_{i},\alpha_{i+1}\})\choose k-K+1}\nu(S\cup \{\alpha_i\}) - \nu(S\cup \{\alpha_{i+1}\})\\
    &s_C = \nu(I)-\sum_{i=1}^N s_i
\end{align}
\end{theorem}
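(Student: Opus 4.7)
The proof proceeds by mirroring the argument for Theorem~\ref{thm:KNN_weighted} in the data-only setting, adapted to the $(N+1)$-player composite game in which $\nu_c$ vanishes on any coalition lacking either the analyst $C$ or all sellers. First, I would collapse the composite-game SV of a seller $i$ by splitting the summation over $S\subseteq(I^s\cup\{C\})\setminus\{i\}$ according to whether $C\in S$. When $C\notin S$, both $\nu_c(S\cup\{i\})$ and $\nu_c(S)$ are zero, so those terms contribute nothing. When $C\in S$, writing $T=S\setminus\{C\}$ and using $|S|=|T|+1$, one obtains the reduced formula
\begin{align*}
s(\nu_c,i) \;=\; \frac{1}{N+1}\sum_{T\subseteq I^s\setminus\{i\}} \frac{1}{\binom{N}{|T|+1}}\bigl[\nu(T\cup\{i\})-\nu(T)\bigr].
\end{align*}
Specializing to $i=\alpha_N$ and invoking the fact that the farthest training point never enters the top-$K$ neighbors of any $T$ with $|T|\geq K$ truncates the sum at $|T|\leq K-1$, immediately yielding (\ref{eqn:KNN_weighted_recursion_1_multi}).

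Next, repeating the pairing argument of Lemma~\ref{lm:shapley_diff} inside the composite game gives
\begin{align*}
s(\nu_c,i)-s(\nu_c,j) \;=\; \frac{1}{N+1}\sum_{T\subseteq I^s\setminus\{i,j\}}\Bigl[\tfrac{1}{\binom{N}{|T|+1}}+\tfrac{1}{\binom{N}{|T|+2}}\Bigr]\bigl[\nu(T\cup\{i\})-\nu(T\cup\{j\})\bigr].
\end{align*}
The elementary identity $\tfrac{1}{\binom{N}{k+1}}+\tfrac{1}{\binom{N}{k+2}}=\tfrac{N+1}{N\binom{N-1}{k+1}}$, which is verified by cross-multiplying factorials, collapses the bracket and cancels the $1/(N+1)$ prefactor to leave $\tfrac{1}{N}\cdot\tfrac{1}{\binom{N-1}{|T|+1}}$, producing the recursion skeleton of (\ref{eqn:KNN_weighted_recursion_2_multi}).

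To finish the seller values, I would apply the locality property of weighted $K$NN: $\nu(T\cup\{\alpha_i\})-\nu(T\cup\{\alpha_{i+1}\})$ depends only on the top-$K$ elements of $T\cup\{\alpha_i,\alpha_{i+1}\}$. Re-indexing the outer sum by the size-$k'$ \emph{signature} $S'\in B_{k'}(\alpha_i)\cap B_{k'}(\alpha_{i+1})$ with $k'\leq K-1$ and counting the number of $T\supseteq S'$ of each cardinality that preserve this signature --- identical to the counting step in the proof of Theorem~\ref{thm:KNN_weighted} --- produces a unit multiplicity for $k\leq K-2$ and the multiplicity $\binom{N-\max r(S'\cup\{\alpha_i,\alpha_{i+1}\})}{k-K+1}$ for $k\geq K-1$, matching the two summations in (\ref{eqn:KNN_weighted_recursion_2_multi}). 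The analyst's value $s_C$ then follows from the group rationality axiom: $\sum_{i=1}^N s_i + s_C = \nu_c(I^s\cup\{C\}) = \nu(I)$.

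The main obstacle will be careful bookkeeping of the binomial weights, in particular verifying the identity $\tfrac{1}{\binom{N}{k+1}}+\tfrac{1}{\binom{N}{k+2}}=\tfrac{N+1}{N\binom{N-1}{k+1}}$ and confirming that the $|T|$-to-$|S|$ shift by one (due to the presence of $C$) is reflected consistently in the denominators $\binom{N}{|T|+1}$, $\binom{N}{|T|+2}$, and $\binom{N-1}{|T|+1}$. Beyond that, no new combinatorial ingredient is needed: the locality reduction and signature-counting argument are inherited verbatim from the weighted $K$NN proof.
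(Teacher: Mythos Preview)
Your proposal is correct and follows precisely the route the paper intends. The paper does not spell out a separate proof for Theorem~\ref{thm:KNN_weighted_composite}; it presents the statement in parallel with Theorem~\ref{thm:KNN_weighted}, and the implicit argument is exactly what you describe: restrict to coalitions containing $C$ to collapse the $(N{+}1)$-player SV of a seller to a $\binom{N}{|T|+1}$-weighted sum over $T\subseteq I^s\setminus\{i\}$, re-derive the pairing of Lemma~\ref{lm:shapley_diff} in this shifted form, simplify via the binomial identity you state, and then reuse verbatim the locality/signature-counting from the proof of Theorem~\ref{thm:KNN_weighted}. Your verification of $\tfrac{1}{\binom{N}{k+1}}+\tfrac{1}{\binom{N}{k+2}}=\tfrac{N+1}{N\binom{N-1}{k+1}}$ and your use of group rationality for $s_C$ are both sound, so no additional ingredient is needed.
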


\subsubsection{Multi-data-per-seller $K$NN}

\begin{theorem}
Consider the utility functions (\ref{eqn:utility_classification_unweighted}), (\ref{eqn:utility_regression}), (\ref{eqn:utility_classification_weighted}) or (\ref{eqn:utility_regression_weighted}).

Let $\mathcal{A}^{\setminus j} =\{S: S\in \mathcal{A}, j\notin h(S)\} $ be the set of top-$K$ elements that do not contain sell $j$'s data, $\mathcal{D}(\tilde{S})=\{S: S\in \mathcal{A}, h(S)=\tilde{S}\}$ be the set of top-$K$ elements of the data from the set $\tilde{S}$ of sellers, and $G(S,j)=\{j':d(x_{j'}^{(1)},x_\text{test})\geq \max_{x\in S} d(x,x_\text{test}),$ $S\in\mathcal{A}^{\setminus j}, j'\in I^s\setminus\{ h(S),j\}\}$ be the set of sellers that do not affect the $K$-nearest neighbors when added into the sellers $h(S)$ and $S$ does not include seller $j$'s data. 
Then, the SV of seller $j$ can be represented as
\begin{align}
    s_j = \frac{1}{M+1} \sum_{S\in \mathcal{A}^{\setminus j}} \sum_{k=0}^{|G(S,j)|}\frac{{|G(S,j)|\choose k}}{{M\choose |h(S)|+k+1} } \big[\nu(\mathcal{D}(h(S)\cup \{j\}))-\nu(S)\big]
\end{align}
and the SV of the computation contributor is
\begin{align}
    s_C = \nu(I)-\sum_{i=1}^M s_i
\end{align}
\end{theorem}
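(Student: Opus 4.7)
The plan is to start from the defining formula of the Shapley value in the composite game on $M+1$ players (the $M$ sellers plus the analyst $C$) and reduce it, via the piecewise structure of $\nu_c$ and the locality of $K$NN, to the sum over top-$K$ configurations $S \in \mathcal{A}^{\setminus j}$ stated in the theorem. Concretely, I would write
\begin{align*}
s_j = \frac{1}{M+1} \sum_{T \subseteq (I^s \cup \{C\})\setminus\{j\}} \frac{1}{\binom{M}{|T|}} \bigl[\nu_c(T \cup \{j\}) - \nu_c(T)\bigr]
\end{align*}
and split the sum according to whether $C \in T$. When $C \notin T$, both $T$ and $T \cup \{j\}$ are pure-seller subsets, so by the definition of $\nu_c$ in (\ref{eqn:utility_composite}) the marginal contribution vanishes. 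The surviving terms are indexed by $T = T' \cup \{C\}$ with $T' \subseteq I^s \setminus \{j\}$, and reduce to $\nu(T' \cup \{j\}) - \nu(T')$ with weight $1/\binom{M}{|T'|+1}$.

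Next I would coarsen the sum over $T'$ using the locality of the $K$NN utilities: $\nu(T')$ depends on $T'$ only through $S = \mathcal{N}(h^{-1}(T')) \in \mathcal{A}^{\setminus j}$. For a fixed $S$ with $h(S) = \tilde{S}$, a subset $T' \subseteq I^s \setminus \{j\}$ realizes $S$ as its top-$K$ iff $\tilde{S} \subseteq T'$ and every additional seller $j' \in T' \setminus \tilde{S}$ fails to contribute to the top-$K$, i.e.\ $d(x_{j'}^{(1)}, x_\text{test}) \geq \max_{x \in S} d(x, x_\text{test})$, which is exactly the defining condition of $G(S,j)$. Hence the $T'$'s of size $|h(S)| + k$ that map to $S$ are in bijection with the $k$-element subsets of $G(S,j)$, producing a count of $\binom{|G(S,j)|}{k}$. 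Rearranging the double sum then yields the stated formula, and the expression for $s_C$ follows from the group rationality of the Shapley value applied to the composite game: $\sum_{i=1}^M s_i + s_C = \nu_c(I^s \cup \{C\}) = \nu(I)$.

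The main obstacle will be justifying that $\nu(T' \cup \{j\}) = \nu(\mathcal{D}(h(S) \cup \{j\}))$ uniformly over all $T'$ in the fiber above $S$, because after inserting $j$ the top-$K$ window can shrink and the defining inequality of $G(S,j)$ (which is stated with respect to the old farthest radius $\max_{x \in S} d(x, x_\text{test})$) must still imply non-contribution. This requires a short monotonicity argument: adding any data point can only decrease or preserve the $K$-th smallest distance to $x_\text{test}$, so the new top-$K$ radius of $h^{-1}(\tilde{S} \cup \{j\})$ is at most $\max_{x \in S} d(x, x_\text{test})$, and every $j' \in G(S,j)$ whose closest point already lay beyond this quantity remains beyond the new radius. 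Hence the sellers in $T' \setminus \tilde{S}$ continue to be irrelevant after $j$ joins, and the marginal contribution is indeed constant across the fiber.
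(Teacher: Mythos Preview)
The paper states this theorem without supplying a proof, so there is no line-by-line argument to compare against; your approach is correct and is precisely the direct-from-definition strategy one would expect, mirroring the grouping-by-top-$K$ idea the paper does spell out in the data-only weighted case (Theorem~\ref{thm:KNN_weighted}) and the data-only multi-seller theorem. The split on $C \in T$, the reduction of the surviving terms to data-only marginals with the shifted weight $\binom{M}{|T'|+1}^{-1}$, the fibering of seller subsets $T'$ over their top-$K$ configuration $S \in \mathcal{A}^{\setminus j}$, the counting via $\binom{|G(S,j)|}{k}$, and your monotonicity observation that the $K$-th radius can only shrink when $j$ joins (so sellers in $G(S,j)$ stay irrelevant) together close the argument; the formula for $s_C$ then drops out of group rationality exactly as you say.
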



\section{Generalization to Piecewise Utility Difference}
\label{section:appendix_piecewise}
A commonality of the utility functions between the unweighted $K$NN classifier and its extensions is that the difference in the marginal contribution of $i$ and $j$ to a set $S\subseteq I\setminus\{i,j\}$ has a ``piecewise'' form:
\begin{align}
\label{eqn:piecewise_util_dup}
    \nu(S\cup i) - \nu(S\cup j) = \sum_{t=1}^T C_{ij}^{(t)} \mathbbm{1}[S\in \mathcal{S}_t]
\end{align}
where $\mathcal{S}_t\subseteq 2^{I\setminus\{i,j\}}$. For instance, the utility difference for unweighted $K$NN classification obeys 
\begin{align}
    & \nu(S\cup \{i\}) - \nu(S\cup \{i+1\})
    = \frac{\mathbbm{1}[y_i=y_\text{test}] - \mathbbm{1}[y_{i+1}=y_\text{test}]}{K}\mathbbm{1}[S\in \mathcal{S}_1]
\end{align}
where we assume the training data is sorted according to their similarity to the test point and  
\begin{align}
\label{eqn:S_1}
    \mathcal{S}_1 = \{S:\sum_{l\in S} \mathbbm{1}[d(x_l,x_\text{test}) - d(x_i,x_\text{test})<0] < K\}
\end{align}
Hence, we have $T=1$ and $C_{ij}^1 =(\mathbbm{1}[y_{i}=y_\text{test}] - \mathbbm{1}[y_{i+1}=y_\text{test}])/K$ for unweighted $K$NN classification utility function.

The utility difference for unweighted $K$NN regression can be expressed as
\begin{align}
\label{eqn:util_diff_knn_regression}
  & \nu(S\cup \{i\}) - \nu(S\cup \{i+1\})\nonumber \\
    = &\frac{1}{K}(y_{i+1}-y_i)(\frac{1}{K}(y_{i+1}+y_i)-2y_\text{test}) \mathbbm{1}[S\in \mathcal{S}_1]+ \frac{2}{K^2}(y_{i+1}-y_i) \sum_{l\in I\setminus\{i,i+1\}} y_l \mathbbm{1}[S\in \mathcal{S}_1,S\ni l ] 
\end{align}
where $\mathcal{S}_1$ is defined in (\ref{eqn:S_1}). Therefore, we can obtain the piecewise form of the utility difference in (\ref{eqn:piecewise_util_dup}) by letting $T=N-1$, $C_{ij}^{(1)}=\frac{1}{K}(y_{i+1}-y_i)(\frac{1}{K}(y_{i+1}+y_i)-2y_\text{test})$, $\{\mathcal{S}_t\}_{t=2}^{N-1} = \{S_l\}_{l\in I\setminus\{i,{i+1}\}}$ where $S_l = \mathcal{S}_1 \cap \{S:l\in S,S\subseteq I\setminus\{i,{i+1}\}\}$, and the corresponding $\{C_{ij}^{(t)}\}_{t=2}^{N-1} = \{\frac{2}{K^2}(y_{i+1}-y_i) y_l\}_{l\in I\setminus\{i,i+1\}} $.

For weighted $K$NN utility functions, we can instantiate the utility difference (\ref{eqn:piecewise_util_dup}) with $T = \sum_{k=0}^K {N-2\choose k}$ adn $\mathcal{S}_t\subseteq 2^{I\setminus\{i,j\}}$ is a collection of sets that have the same top $K$ elements.

An application of Lemma~\ref{lm:shapley_diff} to the utility functions with the piecewise utility difference form indicates that the difference in the SV between $i$ and $j$ can be represented as
\begin{align}
    &s_i - s_j = \frac{1}{N-1}\sum_{S\subseteq I\setminus\{i,j\}} \sum_{t=1}^T \frac{C_{ij}^{(t)}}{{N-2\choose |S|}} \mathbbm{1}[S\in \mathcal{S}_t] \\
    \label{eqn:piecewise_shapley_diff_dup}
    &= \frac{1}{N-1}\sum_{t=1}^T C_{ij}^{(t)} \bigg[\sum_{k=0}^{N-2} \frac{|\{S:S\subseteq I\setminus\{i,j\},S\in \mathcal{S}_t, |S|=k\}| }{{N-2\choose k}}\bigg]
\end{align}
With the piecewise property (\ref{eqn:piecewise_util_dup}), the SV calculation is reduced to a counting problem. As long as the quantity in the bracket of (\ref{eqn:piecewise_shapley_diff_dup}) can be efficiently evaluated, the SV can be obtained in $\mathcal{O}(NT)$.

\section{Proof of Theorem~\ref{thm:bennett_bound}}
\begin{proof}
We will use Bennett's inequality to derive the approximation error associated with the estimator in (\ref{eqn:permutation_estimator}). Bennett's inequality provides an upper bound on the deviation of the empirical mean from the true mean in terms of the variance of the underlying random variable. Thus, we first provide an upper bound on the variance of $\phi_i$ for $i=1,\ldots,N$.

Let the range of $\phi_i$ for $i=1,\ldots,N$ be denoted by $[-r,r]$. Further, let $q_i=P[\phi_i=0]$. Let $W_i$ be  an indicator of whether or not $\phi_i=0$, i.e., $W_i=\mathbbm{1}[\phi_i\neq 0]$; thus $P[W_i=0] = q_i$ and $P[W_i=1] = 1-q_i$.

We analyze the variance of $\phi_i$. By the law of total variance,
\begin{align}
    \texttt{Var}[\phi_i] = \E[\texttt{Var}[\phi_i|W_i]] +\texttt{Var}[\E[\phi_i|W_i]]
\end{align}
Recall $\phi_i\in [-r,r]$. Then, the first term can be bounded by
\begin{align}
    &\E[\texttt{Var}[\phi_i|W_i]]\nonumber\\
    &= P[W_i=0] \texttt{Var}[\phi_i|W_i=0] + P[W_i=1] \texttt{Var}[\phi_i|W_i=1]\\
    &= q_i\texttt{Var}[\phi_i|\phi_i = 0] + (1-q_i) \texttt{Var}[\phi_i|\phi_i\neq 0]\\
    &= (1-q_i) \texttt{Var}[\phi_i|\phi_i\neq 0]\\
    &\leq (1-q_i) r^2
\end{align}
where the last inequality follows from the fact that if a random variable is in the range $[m,M]$, then its variance is bounded by $\frac{(M-m)^2}{4}$.

The second term can be expressed as
\begin{align}
\label{eqn:var_of_exp}
   & \texttt{Var}[\E[\phi_i|W_i]]\nonumber\\
    & = \E_{W_i}[(\E[\phi_i|W_i] - \E[\phi_i])^2]\\
    &= P[W_i=0] (\E[\phi_i|W_i=0] - \E[\phi_i])^2 + P[W_i=1] (\E[\phi_i|W_i=1] - \E[\phi_i])^2\\
    &= q_i (\E[\phi_i|\phi_i = 0] - \E[\phi_i])^2 + (1-q_i)(\E[\phi_i|\phi_i\neq 0] - \E[\phi_i])^2\\
    &=q_i (\E[\phi_i])^2 + (1-q_i)(\E[\phi_i|\phi_i\neq 0] - \E[\phi_i])^2
\end{align}

Note that
\begin{align}
    \E[\phi_i] &= P[W_i=0] \E[\phi_i|\phi_i=0] + P[W_i=1]\E[\phi_i|\phi_i\neq 0]\\
    \label{eqn:expectation_decomp}
    &=(1-q_i) \E[\phi_i|\phi_i\neq 0]
\end{align}
Plugging (\ref{eqn:expectation_decomp}) into (\ref{eqn:var_of_exp}), we obtain 
\begin{align}
    \texttt{Var}[\E[\phi_i|W]]& =(q_i(1-q_i)^2  + q_i^2 (1-q_i)) (\E[\phi_i|\phi_i\neq 0])^2
\end{align}
Since $|\phi_i|\leq r$, $(\E[\phi_i|\phi_i\neq 0])^2\leq r^2$. Therefore,
\begin{align}
     \texttt{Var}[\E[\phi_i|W]] \leq q_i(1-q_i) r^2
\end{align}
It follows that
\begin{align}
\label{eqn:bound_variance}
    \texttt{Var}[\phi_i]\leq (1-q_i^2)r^2
\end{align}

Therefore, we can upper bound the variance of $\phi_i$ in terms of the probability that $\phi_=0$. Now, let us compuate $P[\phi_i=0]$ for $i=1,\ldots,N$.

Without loss of generality, we assume that $x_i$ are sorted according to their distance to the test point $x_\text{test}$ in an ascending order. 

When $i\leq K$, then whatever place $x_i$ appears in the permutation $\pi$, adding $x_i$ to the set of points preceding $i$ in the permutation will always potentially lead to a non-zero utility change. Therefore, we know that $q_i\geq 0$ and 
\begin{align}
   \Var[\phi_i]\leq r^2 \equiv \sigma_i^2 \text{ for $i=1,\ldots,K$}
\end{align}

When $i \geq K+1$, adding $x_i$ to $P^\phi_i$ may lead to zero utility change. More specifically, if there are no less than $K$ elements in $\{x_1,\ldots,x_{i-1}\}$ appearing in $P^\phi_i$, then adding $i$ would not change the $K$ nearest neighbors of $P^\phi_i$ and thus $\phi_i$. Let the position of $x_i$ in the permutation $pi$ be denoted by $k$. Note that if there are at least $K$ elements in $\{x_1,\ldots,x_{i-1}\}$ appearing before $x_i$ in the permutation, then $x_i$ must at least locate in order $K+1$ in the permutation, i.e., $k\geq K+1$. 

The number of permutations such that $x_i$ is in the $k$th slot and there are at least $K$ elements appearing before $x_i$ is
\begin{align}
    \sum_{m=K}^{\min\{i-1,k-1\}} {k-1\choose m} {N-k\choose i-1-m} (i-1)!(N-i)!
\end{align}
Thus, the probability that $\phi_i$ is zero is lower bounded by
\begin{align}
   q_i^*&=\frac{ \sum_{k=K+1}^{N} \sum_{m=K}^{\min\{i-1,k-1\}} {k-1\choose m} {N-k\choose i-1-m} (i-1)!(N-i)!}{N!}\\
   &= \frac{ \sum_{k=K+1}^{N} \sum_{m=K}^{\min\{i-1,k-1\}} {k-1\choose m} {N-k\choose i-1-m} }{{N-1\choose i-1}N}\\
   &= \frac{i-K}{i}
\end{align}
By (\ref{eqn:bound_variance}), we have
\begin{align}
    \Var[\phi_i]\leq (1-q_i^{*2})r^2 \text{ for $i=K+1,\ldots,N$}
\end{align}

By Bennett's inequality, we can bound the approximation error associated with $\hat{s}_i$ by
\begin{align}
    P[|\hat{s}_i - s_i|> \epsilon] \leq 2\exp(-\frac{T\sigma_i^2}{r^2} h(\frac{r\epsilon}{\sigma_i^2}))
\end{align}
By the union bound, if $P[|\hat{s}_i - s_i| > \epsilon]\leq \delta_i$ for all $i=1,\ldots,N$ and $\sum_{i=1}^N \delta_i=\delta$, then we have
\begin{align}
    P[\max_i |\hat{s}_i-s_i|> \epsilon]= P[\cup_{i=1,\ldots,N] \{|\hat{s}_i-s_i}|> \epsilon\}]\leq \sum_{i=1}^N P[|\hat{s}_i-s_i|> \epsilon] \leq \sum_{i=1}^N \delta_i = \delta
\end{align}
Thus, to ensure that $P[\max_i |\hat{s}_i-s_i|> \epsilon]\leq \delta$, we only need to choose $T$ such that
\begin{align}
   2\exp(-\frac{T\sigma_i^2}{r^2} h(\frac{r\epsilon}{\sigma_i^2})) \leq \delta_i
\end{align}
which yields
\begin{align}
    T&\geq \frac{r^2}{\sigma_i^2 h(\frac{r\epsilon}{\sigma_i^2})}\log \frac{2}{\delta_i}
\end{align}
Since
\begin{align}
    \frac{r^2}{\sigma_i^2 h(\frac{r\epsilon}{\sigma_i^2})}\leq \frac{1}{(1-q_i^2) h(\frac{\epsilon}{(1-q_i^2)r})}
\end{align}
it suffices to let 
\begin{align}
    T&\geq \frac{\log \frac{2}{\delta_i}}{(1-q_i^2) h(\frac{\epsilon}{(1-q_i^2)r})}
\end{align}
for all $i=1,\ldots,N$. Therefore, we would like to choose $\{\delta_i\}_{i=1}^N$ such that $\max_{i=1,\ldots,N}T_i^*$ is minimized. We can do this by letting
\begin{align}
\frac{\log \frac{2}{\delta_i}}{(1-q_i^2) h(\frac{\epsilon}{(1-q_i^2)r})} = T^*
\end{align}
which gives us
\begin{align}
    \delta_i = 2\exp(-T^*(1-q_i^2) h(\frac{\epsilon}{(1-q_i^2)r}))
\end{align}
Since $\sum_{i=1}^N\delta_i = \delta$, we get
\begin{align}
    \sum_{i=1}^N \exp(-T^*(1-q_i^2) h(\frac{\epsilon}{(1-q_i^2)r})) = \delta/2
\end{align}
and the value of $T^*$ can be solved numerically.

\end{proof}



\section{Derivation of the Approximate Lower Bound on Sample Complexity for the Improved MC Approximation}
\label{section:appendix_lower_bound}

Because $\log (1+u)>\frac{2x}{2+x}$~\cite{topsok2006some}, we have $h(u)> \frac{x^2}{2+x}$. Thus, $(1-q_i^2)h(\frac{\epsilon}{(1-q_i^2)r}))> \frac{\epsilon^2}{2(1-q_i^2)r+\epsilon r}$. Furthermore, by the definition of $q_i$, $(1-q_i)^2=1$ for $i=1,\ldots,K$ and decreases approximately with the speed $2K/i$ otherwise. Thus, the lower bound of $(1-q_i^2)h(\frac{\epsilon}{(1-q_i^2)r}))$ increases linearly with $i$ when $i\geq K+1$. Letting $x = \exp(-T^*)$, we can rewrite (\ref{eqn:bennett_bound}) as $\sum_{i=1}^N x^{(1-q_i^2)h(\frac{\epsilon}{(1-q_i^2)r}))}=\delta/2$. In light of the above analysis, $x^{(1-q_i^2)h(\frac{\epsilon}{(1-q_i^2)r}))}$ will have significant values when $i\geq K$ and is comparatively negligible otherwise. Therefore, we can derive an approximate solution $\tilde{T}$ to $T^*$ by solving the following equation

\begin{align}
    K \exp(-\tilde{T} h(\frac{\epsilon}{r})) = \delta/2.
\end{align}
which gives us
\begin{align}
    \tilde{T} =\frac{1}{h(\epsilon/r)} \log \frac{2K}{\delta}
\end{align}
Due to the inequality $h(u)\leq u^2$, we can obtain the following lower bound on $\tilde{T}$:
\begin{align}
    \tilde{T}\geq \frac{r^2}{\epsilon^2}\log \frac{2K}{\delta}
\end{align}


\end{document}
\endinput